\providecommand{\tabularnewline}{\\}
\newcommand{\lyxdot}{.}
\theoremstyle{plain}
\newtheorem{thm}{\protect\theoremname}
  \theoremstyle{definition}
  \newtheorem{defn}[thm]{\protect\definitionname}
  \theoremstyle{plain}
  \newtheorem{prop}[thm]{\protect\propositionname}
  \theoremstyle{plain}
  \newtheorem*{prop*}{\protect\propositionname}
  \theoremstyle{plain}
  \newtheorem{lem}[thm]{\protect\lemmaname}
  \definecolor{mydarkblue}{rgb}{0,0.08,0.65}
\newcites{sup}{References}
\newtheorem{assumption}{Assumption}
\renewcommand{\@listI}{%
\leftmargin=1.4em
\rightmargin=0pt
\labelsep=5pt
\labelwidth=20pt
\itemindent=2pt
\listparindent=2pt
\topsep=2pt plus 2pt 
\partopsep=2pt plus 2pt 
\parsep=2pt plus 5pt
\itemsep=\parsep}
  \providecommand{\definitionname}{Definition}
  \providecommand{\lemmaname}{Lemma}
  \providecommand{\propositionname}{Proposition}
\providecommand{\theoremname}{Theorem}
\begin{document}

\newcommand{\ourtitle}{An Adaptive Test of Independence with Analytic Kernel Embeddings}

\title{\ourtitle{}}

\author{Wittawat Jitkrittum,$^{1}$ \quad{}Zolt{\'a}n Szab{\'o},$^2$\thanks{Zolt{\'a}n Szab{\'o}'s ORCID ID: \protect\url{http://orcid.org/0000-0001-6183-7603}.}
\quad{}\,Arthur Gretton$^{1}$\vspace{2mm}\\
$^{1}$Gatsby Computational Neuroscience Unit, University College
London\\
$^{2}$Department of Applied Mathematics, CMAP, \'{E}cole Polytechnique\vspace{2mm}\\
\url{wittawat@gatsby.ucl.ac.uk}\\
\url{zoltan.szabo@polytechnique.edu}\\
\url{arthur.gretton@gmail.com}}
\maketitle
\begin{abstract}
A new computationally efficient dependence measure, and an adaptive
statistical test of independence, are proposed. The dependence measure
is the difference between analytic embeddings of the joint distribution
and the product of the marginals, evaluated at a finite set of locations
(features). These features are chosen so as to maximize a lower bound
on the test power, resulting in a test that is data-efficient, and
that runs in linear time (with respect to the sample size n). The
optimized features can be interpreted as evidence to reject the null
hypothesis, indicating regions in the joint domain where the joint
distribution and the product of the marginals differ most. Consistency
of the independence test is established, for an appropriate choice
of features. In real-world benchmarks, independence tests using the
optimized features perform comparably to the state-of-the-art quadratic-time
HSIC test, and outperform competing $\mathcal{O}(n)$ and $\mathcal{O}(n\log n)$
tests.
\end{abstract}

\section{Introduction}

We consider the design of adaptive, nonparametric statistical tests
of dependence: that is, tests of whether a joint distribution $P_{xy}$
factorizes into the product of marginals $P_{x}P_{y}$. While classical
tests of dependence, such as Pearson's correlation and Kendall's $\tau$,
are able to detect monotonic relations between univariate variables,
more modern tests can address complex interactions, for instance changes
in variance of $X$ with the value of $Y$. Key to many recent tests
is to examine covariance or correlation between data features. These
interactions become significantly harder to detect, and the features
are more difficult to design when the data reside in high dimensions.

A basic nonlinear dependence measure is the Hilbert-Schmidt Independence
Criterion (HSIC), which is the Hilbert-Schmidt norm of the covariance
operator between feature mappings of the random variables \citep{Gretton2005,Gretton2008}.
Each random variable $X$ and $Y$ is mapped to a respective reproducing
kernel Hilbert space $\mathcal{H}_{k}$ and $\mathcal{H}_{l}$. For
sufficiently rich mappings, the covariance operator norm is zero if
and only if the variables are independent. A second basic nonlinear
dependence measure is the smoothed difference between the characteristic
function of the joint distribution, and that of the product of marginals.
When a particular smoothing function is used, the statistic corresponds
to the covariance between distances of X and Y variable pairs \citep{Feuerverger93,Cordis2007,Szekely2009},
yielding a simple test statistic. It has been shown by \citet{Sejdinovic2013}
that the distance covariance (and its generalization to semi-metrics)
is an instance of HSIC for an appropriate choice of kernels. A disadvantage
of these feature covariance statistics, however, is that they require
quadratic time to compute (besides in the special case of the distance
covariance with univariate real-valued variables, where \citet{HuoSze14}
achieve an $\mathcal{O}(n\log n)$ cost). Moreover, the feature covariance
statistics have intractable null distributions, and either a permutation
approach or the solution of an expensive eigenvalue problem \citep[e.g.][]{ZhaPetJanSch11}
is required for consistent estimation of the quantiles. Several approaches
were proposed by \citet{Zhang2016} to obtain faster tests along the
lines of HSIC. These include computing HSIC on finite-dimensional
feature mappings chosen as random Fourier features (RFFs) \citep{Rahimi2008},
a block-averaged statistic, and a Nyström approximation to the statistic.
Key to each of these approaches is a more efficient computation of
the statistic and its threshold under the null distribution: for RFFs,
the null distribution is a finite weighted sum of $\chi^{2}$ variables;
for the block-averaged statistic, the null distribution is asymptotically
normal; for Nyström, either a permutation approach is employed, or
the spectrum of the Nyström approximation to the kernel matrix is
used in approximating the null distribution. Each of these methods
costs significantly less than the $\mathcal{O}(n^{2})$ cost of the
full HSIC (the cost is linear in $n$, but also depends quadratically
on the number of features retained). A potential disadvantage of the
Nyström and Fourier approaches is that the features are not optimized
to maximize test power, but are chosen randomly. The block statistic
performs worse than both, due to the large variance of the statistic
under the null (which can be mitigated by observing more data).

In addition to feature covariances, correlation measures have also
been developed in infinite dimensional feature spaces: in particular,
\citet{BacJor02,FukGreSunSch08} proposed statistics on the correlation
operator in a reproducing kernel Hilbert space. While convergence
has been established for certain of these statistics, their computational
cost is high at $\mathcal{O}(n^{3})$, and test thresholds have relied
on permutation. A number of much faster approaches to testing based
on feature correlations have been proposed, however. For instance,
\citet{Dauxois1998} compute statistics of the correlation between
finite sets of basis functions, chosen for instance to be step functions
or low order B-splines. The cost of this approach is $\mathcal{O}(n)$.
This idea was extended by \citet{Lopez-Paz2013}, who computed the
canonical correlation between finite sets of basis functions chosen
as random Fourier features; in addition, they performed a copula transform
on the inputs, with a total cost of $\mathcal{O}(n\log n)$. Finally,
space partitioning approaches have also been proposed, based on statistics
such as the KL divergence, however these apply only to univariate
variables \citep{JMLR:v17:14-441}, or to multivariate variables of
low dimension \citep{GreGyo10} (that said, these tests have other
advantages of theoretical interest, notably distribution-independent
test thresholds). 

The approach we take is most closely related to HSIC on a finite set
of features. Our simplest test statistic, the Finite Set Independence
Criterion (FSIC), is an average of covariances of analytic functions
(i.e., features) defined on each of $X$ and $Y$. A normalized version
of the statistic (NFSIC) yields a distribution-independent asymptotic
test threshold. We show that our test is consistent, despite a finite
number of analytic features being used, via a generalization of arguments
in \citet{Chwialkowski2015}. As in recent work on two-sample testing
by \citet{Jitkrittum2016}, our test is \emph{adaptive} in the sense
that we choose our features on a held-out validation set to optimize
a lower bound on the test power. The design of features for independence
testing turns out to be quite different to the case of two-sample
testing, however: the task is to find correlated feature \emph{pairs}
on the respective marginal domains, rather than attempting to find
a single, high-dimensional feature representation for the \emph{entire}
$(x,y)$ (as we would need to do if we were comparing distributions
$P_{xy}$ and $Q_{xy}$, rather than testing a specific property of
$P_{xy}$).  We demonstrate the performance of our tests on several
challenging artificial and real-world datasets, including detection
of dependence between music and its year of appearance, and between
videos and captions. In these experiments, we outperform competing
linear and $\mathcal{O}(n\log n)$ time tests.

\section{Independence Criteria and Statistical Tests}

We introduce two test statistics: first, the Finite Set Independence
Criterion (FSIC), which builds on the principle that dependence can
be measured in terms of the covariance between data features. Next,
we propose a normalized version of this statistic (NFSIC), with a
simpler asymptotic distribution when $P_{xy}=P_{x}P_{y}$. We show
how to select features for the latter statistic to maximize a lower
bound on the power of its corresponding statistical test.

\subsection{The Finite Set Independence Criterion}

We begin by introducing the Hilbert-Schmidt Independence Criterion
(HSIC) as proposed in \citet{Gretton2005}, since our unnormalized
statistic is built along similar lines.  Consider two random variables
$X\in\mathcal{X}\subset\mathbb{R}^{d_{x}}$ and $Y\in\mathcal{Y}\subset\mathbb{R}^{d_{y}}$.
Denote by $P_{xy}$ the joint distribution between $X$ and $Y$;
$P_{x}$ and $P_{y}$ are the marginal distributions of $X$ and $Y$.
Let $\otimes$ denote the tensor product, such that $\left(a\otimes b\right)c=a\left\langle b,c\right\rangle $.
Assume that $k:\mathcal{X}\times\mathcal{X}\to\mathbb{R}$ and $l:\mathcal{Y}\times\mathcal{Y}\to\mathbb{R}$
are positive definite kernels associated with reproducing kernel Hilbert
spaces (RKHS) $\mathcal{H}_{k}$ and $\mathcal{H}_{l}$, respectively.
Let $\|\cdot\|_{HS}$ be the norm on the space of $\mathcal{H}_{l}\to\mathcal{H}_{k}$
Hilbert-Schmidt operators. Then, HSIC between $X$ and $Y$ is defined
as 
\begin{align}
 & \mathrm{HSIC}(X,Y)=\big\|\mu_{xy}-\mu_{x}\otimes\mu_{y}\big\|_{\mathrm{HS}}^{2}\nonumber \\
 & =\mathbb{E}_{(\mathbf{x},\mathbf{y}),(\mathbf{x}',\mathbf{y}')}\left[k(\mathbf{x},\mathbf{x}')l(\mathbf{y},\mathbf{y}')\right]\nonumber \\
 & \quad+\mathbb{E}_{\mathbf{x}}\mathbb{E}_{\boldsymbol{\mathbf{x}'}}[k(\mathbf{x},\mathbf{x}')]\mathbb{E}_{\mathbf{y}}\mathbb{E}_{\mathbf{y}'}[l(\mathbf{y},\mathbf{y}')]\nonumber \\
 & \quad-2\mathbb{E}_{(\mathbf{x},\mathbf{y})}\left[\mathbb{E}_{\mathbf{x}'}[k(\mathbf{x},\mathbf{x}')]\mathbb{E}_{\mathbf{y}'}[l(\mathbf{y},\mathbf{y}')]\right],\label{eq:hsic}
\end{align}
where $\mathbb{E}_{\mathbf{x}}:=\mathbb{E}_{\mathbf{x}\sim P_{x}}$,
$\mathbb{E}_{\mathbf{y}}:=\mathbb{E}_{\mathbf{y}\sim P_{y}}$, $\mathbb{E}_{(\mathbf{x},\mathbf{y})}:=\mathbb{E}_{(\mathbf{x},\mathbf{y})\sim P_{xy}}$,
and $\mathbf{x}'$ is an independent copy of $\mathbf{x}$. The mean
embedding of $P_{xy}$ belongs to the space of Hilbert-Schmidt operators
from $\mathcal{H}_{l}$ to $\mathcal{H}_{k}$, $\mu_{xy}:=\int_{\mathcal{X}\times\mathcal{Y}}k(\mathbf{x},\cdot)\otimes l(\mathbf{y},\cdot)\thinspace\mathrm{d}P_{xy}(\mathbf{x},\mathbf{y})\in\mathrm{HS}(\mathcal{H}_{l},\mathcal{H}_{k})$,
and the marginal mean embeddings are $\mu_{x}:=\int_{\mathcal{X}}k(\mathbf{x},\cdot)\thinspace\mathrm{d}P_{x}(\mathbf{x})\in\mathcal{H}_{k}$
and $\mu_{y}:=\int_{\mathcal{Y}}l(\mathbf{y},\cdot)\thinspace\mathrm{d}P_{y}(\mathbf{y})\in\mathcal{H}_{l}$
\citep{Smola2007}. \citet[Theorem 4]{Gretton2005} show that if the
kernels $k$ and $l$ are universal \citep{Steinwart2008} on compact
domains $\mathcal{X}$ and $\mathcal{Y}$, then $\mathrm{HSIC}(X,Y)=0$
if and only if $X$ and $Y$ are independent. Alternatively, \citet{Gretton15}
shows that it is sufficient for each of $k$ and $l$ to be characteristic
to their respective domains (meaning that distribution embeddings
are injective in each marginal domain: see \citet{Sriperumbudur2010}).
Given a joint sample $\mathsf{Z}_{n}=\{(\mathbf{x}_{i},\mathbf{y}_{i})\}_{i=1}^{n}\sim P_{xy}$,
an empirical estimator of HSIC can be computed in $\mathcal{O}(n^{2})$
time by replacing the population expectations in (\ref{eq:hsic})
with their corresponding empirical expectations based on $\mathsf{Z}_{n}$. 

We now propose our new linear-time dependence measure, the Finite
Set Independence Criterion (FSIC). Let $\mathcal{X}\subset\mathbb{R}^{d_{x}}$
and $\mathcal{Y}\subset\mathbb{R}^{d_{y}}$ be open sets. Define the
empirical measure $\nu:=\frac{1}{J}\sum_{i=1}^{J}\delta_{(\mathbf{v}_{i},\mathbf{w}_{i})}$
over $J$ test locations $V_{J}:=\{(\mathbf{v}_{i},\mathbf{w}_{i})\}_{i=1}^{J}\subset\mathcal{X}\times\mathcal{Y}$
where $\delta_{\mathbf{t}}$ denotes the Dirac measure centered on
$\mathbf{t}$, and $(\mathbf{v}_{i},\mathbf{w}_{i})$ are realizations
from an absolutely continuous distribution (wrt the Lebesgue measure).
Write $\mathbb{E}_{\mathbf{x}\mathbf{y}}$ for $\mathbb{E}_{(\mathbf{x},\mathbf{y})\sim P_{xy}}$.
The idea is to see $\mu_{xy}(\mathbf{v},\mathbf{w})=\mathbb{E}_{\mathbf{x}\mathbf{y}}[k(\mathbf{x},\mathbf{v})l(\mathbf{y},\mathbf{w})],\mu_{x}(\mathbf{v})=\mathbb{E}_{\mathbf{x}}[k(\mathbf{x},\mathbf{v})]$
and $\mu_{y}(\mathbf{w})=\mathbb{E}_{\mathbf{y}}[l(\mathbf{y},\mathbf{w})]$
as smooth functions, and consider an $L^{2}(\mathcal{X}\times\mathcal{Y},\nu)$
distance between $\mu_{xy}$ and $\mu_{x}\mu_{y}$ instead of a Hilbert-Schmidt
distance as in HSIC \citep{Gretton2005}. Let $\mu_{x}\mu_{y}(\mathbf{x},\mathbf{y}):=\mu_{x}(\mathbf{x})\mu_{y}(\mathbf{y})$.
FSIC is defined as 
\begin{align*}
 & \mathrm{FSIC}^{2}(X,Y):=\|\mu_{xy}-\mu_{x}\mu_{y}\|_{L^{2}(\mathcal{X}\times\mathcal{Y},\text{\ensuremath{\nu}})}^{2}\\
 & =\int_{\mathcal{X}}\int_{\mathcal{Y}}(\mu_{xy}(\mathbf{x},\mathbf{y})-\mu_{x}(\mathbf{x})\mu_{y}(\mathbf{y}))^{2}\thinspace\mathrm{d}\nu(\mathbf{x},\mathbf{y})\\
 & :=\frac{1}{J}\sum_{i=1}^{J}u(\mathbf{v}_{i},\mathbf{w}_{i})^{2}=\frac{1}{J}\|\mathbf{u}\|_{2}^{2},\text{ where }
\end{align*}
\vspace{-5mm}
\begin{align}
 & u(\mathbf{v},\mathbf{w}):=\mu_{xy}(\mathbf{v},\mathbf{w})-\mu_{x}(\mathbf{v})\mu_{y}(\mathbf{w})\nonumber \\
 & =\mathbb{E}_{\mathbf{x}\mathbf{y}}[k(\mathbf{x},\mathbf{v})l(\mathbf{y},\mathbf{w})]-\mathbb{E}_{\mathbf{x}}[k(\mathbf{x},\mathbf{v})]\mathbb{E}_{\mathbf{y}}[l(\mathbf{y},\mathbf{w})],\label{eq:mean_embed_diff}\\
 & =\mathrm{cov_{\mathbf{x}\mathbf{y}}}[k(\mathbf{x},\mathbf{v}),l(\mathbf{y},\mathbf{w})],\nonumber 
\end{align}
and $\mathbf{u}:=(u(\mathbf{v}_{1},\mathbf{w}_{1}),\ldots,u(\mathbf{v}_{J},\mathbf{w}_{J}))^{\top}$. 

Our first result in Proposition \ref{prop:fsic_dependence_measure}
states that $\mathrm{FSIC}(X,Y)$ almost surely defines a dependence
measure for the random variables $X$ and $Y$, provided that the
product kernel on the joint space $\mathcal{X}\times\mathcal{Y}$
is characteristic and analytic (see Definition \ref{def:analytic_kernel}). 
\begin{defn}[Analytic kernels \citep{Chwialkowski2015}]
\label{def:analytic_kernel} Let $\mathcal{X}$ be an open set in
$\mathbb{R}^{d}$. A positive definite kernel $k:\mathcal{X}\times\mathcal{X}\to\mathbb{R}$
is said to be analytic on its domain $\mathcal{X}\times\mathcal{X}$
if for all $\mathbf{v}\in\mathcal{X}$, $f(\mathbf{x}):=k(\mathbf{x},\mathbf{v})$
is an analytic function on $\mathcal{X}$.

\begin{assumption}
\label{as:prod_ker_bound}
The kernels $k:\mathcal{X}\times\mathcal{X} \to\mathbb{R}$ and $l:\mathcal{Y}\times\mathcal{Y}\to\mathbb{R}$ are bounded by $B_{k}$ and $B_{l}$ respectively, and the product kernel $g((\mathbf{x},\mathbf{y}),(\mathbf{x}',\mathbf{y}')):=k(\mathbf{x},\mathbf{x}')l(\mathbf{y},\mathbf{y}')$ is characteristic \citep[Definition 6]{Sriperumbudur2010}, and analytic (Definition\,\ref{def:analytic_kernel}) on $(\mathcal{X}\times\mathcal{Y})\times(\mathcal{X}\times\mathcal{Y})$.  
\end{assumption}
\end{defn}
\begin{prop}[FSIC is a dependence measure]
\label{prop:fsic_dependence_measure} 

Assume that 
\begin{enumerate}
\item Assumption \ref{as:prod_ker_bound} holds.
\item The test locations $V_{J}=\{(\mathbf{v}_{i},\mathbf{w}_{i})\}_{i=1}^{J}$
are drawn from an absolutely continuous distribution. 
\end{enumerate}
Then, almost surely, $\mathrm{FSIC}(X,Y)=\frac{1}{\sqrt{J}}\|\mathbf{u}\|_{2}=0$
if and only if $X$ and $Y$ are independent.

\end{prop}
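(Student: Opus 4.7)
The plan is to prove the two directions separately, with the harder reverse direction using the analytic-function strategy of \citet{Chwialkowski2015} adapted to the product-kernel setting. For the forward direction, if $X$ and $Y$ are independent then $P_{xy}=P_xP_y$, so the joint expectation in the definition of $u(\mathbf{v},\mathbf{w})$ factorizes and $u\equiv 0$ on $\mathcal{X}\times\mathcal{Y}$; hence $\mathrm{FSIC}(X,Y)=0$ for every choice of test locations, not just almost surely.

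For the substantive direction I would argue the contrapositive and suppose $P_{xy}\neq P_xP_y$. Since the product kernel $g((\mathbf{x},\mathbf{y}),(\mathbf{x}',\mathbf{y}'))=k(\mathbf{x},\mathbf{x}')l(\mathbf{y},\mathbf{y}')$ is characteristic on $\mathcal{X}\times\mathcal{Y}$, the mean embeddings of $P_{xy}$ and of $P_xP_y$ under $g$ are distinct elements of $\mathcal{H}_g$. Evaluating these embeddings at a point $(\mathbf{v},\mathbf{w})$ via the reproducing property yields $\mu_{xy}(\mathbf{v},\mathbf{w})$ and $\mu_x(\mathbf{v})\mu_y(\mathbf{w})$ respectively, so the function $u=\mu_{xy}-\mu_x\mu_y$ is not identically zero on $\mathcal{X}\times\mathcal{Y}$.

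The key technical step is then to verify that $u$ is real-analytic in $(\mathbf{v},\mathbf{w})$. Because $k$ and $l$ are bounded by $B_k$ and $B_l$, and because $g$ is analytic in each argument, a standard dominated-convergence argument shows that the power series of $k(\mathbf{x},\cdot)$ and $l(\mathbf{y},\cdot)$ around any point can be interchanged with the expectation defining $\mu_{xy}$, $\mu_x$, and $\mu_y$; hence each of these three functions is analytic, and so is the difference $u$. By the standard fact that the zero set of a non-identically-zero real-analytic function on a connected open subset of $\mathbb{R}^{d_x+d_y}$ has Lebesgue measure zero, the set $N:=\{(\mathbf{v},\mathbf{w})\in\mathcal{X}\times\mathcal{Y}:u(\mathbf{v},\mathbf{w})=0\}$ is Lebesgue-null.

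Finally, since the distribution from which the locations $V_J$ are drawn is absolutely continuous with respect to Lebesgue measure, $\mathbb{P}((\mathbf{v}_i,\mathbf{w}_i)\in N)=0$ for each $i$, and a union bound over $i=1,\ldots,J$ yields that almost surely every $u(\mathbf{v}_i,\mathbf{w}_i)$ is nonzero, so $\mathrm{FSIC}(X,Y)=\tfrac{1}{\sqrt{J}}\|\mathbf{u}\|_2>0$. Combining the two directions proves the claim. I expect the main obstacle to be rigorously justifying the analyticity of $u$: one must produce a uniform integrable dominating bound for the multivariate Taylor expansion of $g$ around any interior point of $\mathcal{X}\times\mathcal{Y}$, for which the boundedness hypothesis on $k$ and $l$ is exactly what is needed; everything else reduces to invoking the analytic-zero-set theorem and the absolute continuity of the sampling distribution.
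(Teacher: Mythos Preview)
Your proposal is correct and follows essentially the same strategy as the paper: use the characteristic property of $g$ to conclude $u\not\equiv 0$ when $P_{xy}\neq P_xP_y$, establish that $u$ is real-analytic, and then invoke the zero-set theorem together with absolute continuity of the sampling distribution. The only cosmetic difference is that the paper obtains analyticity of $\mu_{xy}$ and $\mu_x\mu_y$ by observing they lie in $\mathcal{H}_g$ and citing the lemma of \citet{Chwialkowski2015} that every RKHS function of a bounded analytic kernel is analytic, whereas you propose to justify analyticity directly via dominated convergence on the Taylor coefficients; both routes work, and the paper then wraps the zero-set argument into their Lemma~3 rather than spelling it out as you do.
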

\begin{proof}
Since $g$ is characteristic, the mean embedding map $\Pi_{g}:P\mapsto\mathbb{E}_{(\mathbf{x},\mathbf{y})\sim P}[g((\mathbf{x},\mathbf{y}),\cdot)]$
is injective \citep[Section 3]{Sriperumbudur2010}, where $P$ is
a probability distribution on $\mathcal{X}\times\mathcal{Y}$. Since
$g$ is analytic, by Lemma \ref{lem:analytic_rkhs} (Appendix), $\mu_{xy}$
and $\mu_{x}\mu_{y}$ are analytic functions. Thus, Lemma\,\ref{lem:metric_prob}
(Appendix, setting $\Lambda=\Pi_{g}$) guarantees that $\mathrm{FSIC}(X,Y)=0\iff P_{xy}=P_{x}P_{y}\iff X$
and $Y$ are independent almost surely.
\end{proof}

FSIC uses $\mu_{xy}$ as a proxy for $P_{xy}$, and $\mu_{x}\mu_{y}$
as a proxy for $P_{x}P_{y}$. Proposition \ref{prop:fsic_dependence_measure}
suggests that, to detect the dependence between $X$ and $Y$, it
is sufficient to evaluate at a finite number of locations (defined
by $V_{J}$) the difference of the population joint embedding $\mu_{xy}$
and the embedding of the product of the marginal distributions $\mu_{x}\mu_{y}$.
A brief explanation to justify this property is as follows. If $P_{xy}=P_{x}P_{y}$,
then $\rho(\mathbf{v},\mathbf{w}):=\mu_{xy}(\mathbf{v},\mathbf{w})-\mu_{x}\mu_{y}(\mathbf{v},\mathbf{w})$
is zero, and $\mathrm{FSIC}(X,Y)=0$ for any $V_{J}$. If $P_{xy}\neq P_{x}P_{y}$,
then $\rho$ will not be a zero function, since the mean embedding
map is injective (require the product kernel to be characteristic).
Using the same argument as in \citet{Chwialkowski2015}, since $k$
and $l$ are analytic, $\rho$ is also analytic, and the set of roots
$R:=\{(\mathbf{v},\mathbf{w})\mid\rho(\mathbf{v},\mathbf{w})=0\}$
has Lebesgue measure zero. Thus, it is sufficient to draw $(\mathbf{v},\mathbf{w})$
from an absolutely continuous distribution, as we are guaranteed that
$(\mathbf{v},\mathbf{w})\notin R$ giving $\mathrm{FSIC}(X,Y)>0$. 

For FSIC to be a dependence measure, the product kernel is required
to be characteristic and analytic. We next show in Proposition \ref{prop:product_gaussian_kers}
that Gaussian kernels $k$ and $l$ yield such a product kernel.

\newcommand{\prodgkers}{Let $k(\mathbf{x},\mathbf{x}')=\exp\left(-(\mathbf{x}-\mathbf{x}')^{\top}\mathbf{A}(\mathbf{x}-\mathbf{x}')\right)$
and $l(\mathbf{y},\mathbf{y}')=\exp\left(-(\mathbf{y}-\mathbf{y}')^{\top}\mathbf{B}(\mathbf{y}-\mathbf{y}')\right)$
be Gaussian kernels on $\mathbb{R}^{d_{x}}\times\mathbb{R}^{d_{x}}$
and $\mathbb{R}^{d_{y}}\times\mathbb{R}^{d_{y}}$ respectively, for
positive definite matrices $\mathbf{A}$ and $\mathbf{B}$. Then,
$g((\mathbf{x},\mathbf{y}),(\mathbf{x}',\mathbf{y}'))=k(\mathbf{x},\mathbf{x}')l(\mathbf{y},\mathbf{y}')$
is characteristic and analytic on $(\mathbb{R}^{d_{x}}\times\mathbb{R}^{d_{y}})\times(\mathbb{R}^{d_{x}}\times\mathbb{R}^{d_{y}})$.}
\begin{prop}[A product of Gaussian kernels is characteristic and analytic]
\label{prop:product_gaussian_kers} \prodgkers{} \vspace{-2mm}
\end{prop}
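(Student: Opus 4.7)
The plan is to rewrite $g$ as a single Gaussian kernel on the product space $\mathbb{R}^{d_x+d_y}$ and then appeal to known facts about Gaussian kernels on Euclidean space. Concretely, stacking $\mathbf{z}:=(\mathbf{x}^\top,\mathbf{y}^\top)^\top\in\mathbb{R}^{d_x+d_y}$ and defining the block-diagonal matrix
\[
\mathbf{C} \;:=\; \begin{pmatrix}\mathbf{A} & \mathbf{0}\\ \mathbf{0} & \mathbf{B}\end{pmatrix},
\]
which is positive definite because $\mathbf{A}$ and $\mathbf{B}$ are, one has
\[
g\bigl((\mathbf{x},\mathbf{y}),(\mathbf{x}',\mathbf{y}')\bigr) \;=\; \exp\!\left(-(\mathbf{z}-\mathbf{z}')^\top \mathbf{C}\,(\mathbf{z}-\mathbf{z}')\right),
\]
i.e.\ $g$ is itself a Gaussian kernel on $\mathbb{R}^{d_x+d_y}\!\times\mathbb{R}^{d_x+d_y}$.

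For the characteristic property, I would cite the standard result (e.g.\ \citet{Sriperumbudur2010}, Section 3, and references therein) that every Gaussian kernel with strictly positive definite covariance matrix is characteristic on $\mathbb{R}^{m}$, since its spectral measure has full support. Applying this to $g$ with $m=d_x+d_y$ and covariance derived from $\mathbf{C}$ immediately yields that $g$ is characteristic on the product space. No additional work is needed beyond invoking this result on the rewritten form.

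For analyticity in the sense of Definition \ref{def:analytic_kernel}, fix an arbitrary $(\mathbf{v},\mathbf{w})\in\mathbb{R}^{d_x+d_y}$ and consider
\[
f(\mathbf{x},\mathbf{y}) \;:=\; g\bigl((\mathbf{x},\mathbf{y}),(\mathbf{v},\mathbf{w})\bigr) \;=\; \exp\!\left(-\bigl((\mathbf{x},\mathbf{y})-(\mathbf{v},\mathbf{w})\bigr)^\top \mathbf{C}\,\bigl((\mathbf{x},\mathbf{y})-(\mathbf{v},\mathbf{w})\bigr)\right).
\]
The exponent is a polynomial (in fact a quadratic) in $(\mathbf{x},\mathbf{y})$, hence a real-analytic function on $\mathbb{R}^{d_x+d_y}$, and $\exp(\cdot)$ is entire. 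Real analyticity is preserved under composition, so $f$ is real-analytic on $\mathbb{R}^{d_x+d_y}$, establishing analyticity of $g$.

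I do not expect a serious obstacle here: the only slightly delicate point is ensuring that the characteristic property truly transfers when one replaces the isotropic Gaussian by one with an arbitrary positive definite covariance, but this is handled by the standard spectral-support argument since the Fourier transform of $\exp(-\mathbf{z}^\top \mathbf{C}\mathbf{z})$ is a nondegenerate Gaussian density and thus strictly positive everywhere. The rewriting as a Gaussian on the product space is the key conceptual step that collapses the proposition to well-known facts.
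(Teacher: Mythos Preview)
Your proposal is correct and essentially identical to the paper's own proof: both stack the variables, rewrite $g$ as a single Gaussian kernel on $\mathbb{R}^{d_x+d_y}$ with the block-diagonal positive definite matrix $\mathbf{C}$, invoke the full-support spectral measure (Bochner) argument via \citet{Sriperumbudur2010} for the characteristic property, and establish analyticity by noting the exponent is a polynomial and $\exp$ is analytic, so the composition is analytic.
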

\begin{proof}[Proof (sketch)]
 The main idea is to use the fact a Gaussian kernel is analytic,
and a product of Gaussian kernels is a Gaussian kernel on the pair
of variables. See the full proof in Appendix \ref{sec:proof_prod_kgauss_ana}.
\end{proof}
\textbf{Plug-in Estimator} We now give an empirical estimator of FSIC.
Assume that we observe a joint sample $\mathsf{Z}_{n}:=\{(\mathbf{x}_{i},\mathbf{y}_{i})\}_{i=1}^{n}\stackrel{i.i.d.}{\sim}P_{xy}$.
Unbiased estimators of $\mu_{xy}(\mathbf{v},\mathbf{w})$ and $\mu_{x}\mu_{y}(\mathbf{v},\mathbf{w})$
are $\hat{\mu}_{xy}(\mathbf{v},\mathbf{w}):=\frac{1}{n}\sum_{i=1}^{n}k(\mathbf{x}_{i},\mathbf{v})l(\mathbf{y}_{i},\mathbf{w})$
and $\widehat{\mu_{x}\mu_{y}}(\mathbf{v},\mathbf{w}):=\frac{1}{n(n-1)}\sum_{i=1}^{n}\sum_{j\neq i}k(\mathbf{x}_{i},\mathbf{v})l(\mathbf{y}_{j},\mathbf{w})$,
respectively. A straightforward empirical estimator of $\mathrm{FSIC}^{2}$
is then given by{\small{}
\begin{align}
\widehat{\mathrm{FSIC^{2}}}(\mathsf{Z}_{n}) & =\frac{1}{J}\sum_{i=1}^{J}\hat{u}(\mathbf{v}_{i},\mathbf{w}_{i})^{2},\nonumber \\
\hat{u}(\mathbf{v},\mathbf{w}) & :=\hat{\mu}_{xy}(\mathbf{v},\mathbf{w})-\widehat{\mu_{x}\mu_{y}}(\mathbf{v},\mathbf{w})\label{eq:ustat0}\\
 & =\frac{2}{n(n-1)}\sum_{i<j}h_{(\mathbf{v},\mathbf{w})}((\mathbf{x}_{i},\mathbf{y}_{i}),(\mathbf{x}_{j},\mathbf{y}_{j})),\label{eq:ustat}
\end{align}
}where $h_{(\mathbf{v},\mathbf{w})}((\mathbf{x},\mathbf{y}),(\mathbf{x}',\mathbf{y}')):=\frac{1}{2}(k(\mathbf{x},\mathbf{v})-k(\mathbf{x}',\mathbf{v}))(l(\mathbf{y},\mathbf{w})-l(\mathbf{y}',\mathbf{w}))$.
For conciseness, we define $\hat{\mathbf{u}}:=(\hat{u}_{1},\ldots,\hat{u}_{J})^{\top}\in\mathbb{R}^{J}$
where $\hat{u}_{i}:=\hat{u}(\mathbf{v}_{i},\mathbf{w}_{i})$ so that
$\widehat{\mathrm{FSIC^{2}}}(\mathsf{Z}_{n})=\frac{1}{J}\hat{\mathbf{u}}^{\top}\hat{\mathbf{u}}$. 

$\widehat{\mathrm{FSIC^{2}}}$ can be efficiently computed in $\mathcal{O}((d_{x}+d_{y})Jn)$
time {[}see (\ref{eq:ustat0}){]}, assuming that the runtime complexity
of evaluating $k(\mathbf{x},\mathbf{v})$ is $\mathcal{O}(d_{x})$
and that of $l(\mathbf{y},\mathbf{w})$ is $\mathcal{O}(d_{y})$.
The unbiasedness of $\widehat{\mu_{x}\mu_{y}}$ is necessary for $\eqref{eq:ustat}$
to be a U-statistic. This fact and the rewriting of $\widehat{\mathrm{FSIC^{2}}}$
in terms of $h_{(\mathbf{v},\mathbf{w})}((\mathbf{x},\mathbf{y}),(\mathbf{x}',\mathbf{y}'))$
will be exploited when the asymptotic distribution of $\hat{\mathbf{u}}$
is derived (Proposition \ref{prop:asymp_u}). 

Since $\mathrm{FSIC}$ satisfies $\mathrm{FSIC}(X,Y)=0\iff X\perp Y$,
in principle its empirical estimator can be used as a test statistic
for an independence test proposing a null hypothesis $H_{0}:$ ``$X$
and $Y$ are independent'' against an alternative $H_{1}:$ ``$X$
and $Y$ are dependent''. The null distribution (i.e., distribution
of the test statistic assuming that $H_{0}$ is true) is challenging
to obtain, however and depends on the unknown $P_{xy}$. This prompts
us to consider a normalized version of $\mathrm{FSIC}$ whose asymptotic
null distribution of a convenient form. We first derive the asymptotic
distribution of $\hat{\mathbf{u}}$ in Proposition \ref{prop:asymp_u},
which we  use to derive the normalized test statistic in Theorem \ref{thm:nfsic_good_test}.
As a shorthand, we write $\mathbf{z}:=(\mathbf{x},\mathbf{y})$, and
$\mathbf{t}:=(\mathbf{v},\mathbf{w})$.
\begin{prop}[Asymptotic distribution of $\hat{\mathbf{u}}$]
\label{prop:asymp_u} Define $\tilde{k}(\mathbf{x},\mathbf{v}):=k(\mathbf{x},\mathbf{v})-\mathbb{E}_{\mathbf{x}'}k(\mathbf{x}',\mathbf{v})$,
and $\tilde{l}(\mathbf{y},\mathbf{w}):=l(\mathbf{y},\mathbf{w})-\mathbb{E}_{\mathbf{y}'}l(\mathbf{y}',\mathbf{w})$.
Then, under both $H_{0}$ and $H_{1}$, for any fixed locations $\mathbf{t}$
and $\mathbf{t}'$, {\small{}
\begin{align*}
 & \mathrm{cov_{\mathbf{z}}}[\hat{u}(\mathbf{t}),\hat{u}(\mathbf{t}')]\xrightarrow{n\to\infty}\mathrm{cov_{\mathbf{z}}}[\tilde{k}(\mathbf{x},\mathbf{v})\tilde{l}(\mathbf{y},\mathbf{w}),\tilde{k}(\mathbf{x},\mathbf{v}')\tilde{l}(\mathbf{y},\mathbf{w}')]\\
 & =\mathbb{E}_{\mathbf{x}\mathbf{y}}[\big(\tilde{k}(\mathbf{x},\mathbf{v})\tilde{l}(\mathbf{y},\mathbf{w})-u(\mathbf{t})\big)\big(\tilde{k}(\mathbf{x},\mathbf{v}')\tilde{l}(\mathbf{y},\mathbf{w}')-u(\mathbf{t}')\big)],
\end{align*}
}where $u(\mathbf{v},\mathbf{w})$ is given in (\ref{eq:mean_embed_diff}),
and $\hat{u}(\mathbf{v},\mathbf{w})$ is defined in (\ref{eq:ustat}).
Second, if $0<\mathrm{cov}_{\mathbf{z}}[\hat{u}(\mathbf{t}_{i}),\hat{u}(\mathbf{t}_{i})]<\infty$
for $i=1,\ldots,J$, then $\sqrt{n}(\hat{\mathbf{u}}-\mathbf{u})\stackrel{d}{\to}\mathcal{N}(\mathbf{0},\boldsymbol{\Sigma})$
as $n\to\infty$, where $\Sigma_{ij}=\mathrm{cov}[\hat{u}(\mathbf{t}_{i}),\hat{u}(\mathbf{t}_{j})]$
and $\mathbf{u}:=(u(\mathbf{t}_{1}),\ldots,u(\mathbf{t}_{J}))^{\top}$.
\end{prop}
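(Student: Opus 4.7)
The plan is to recognize that each $\hat{u}(\mathbf{t})$ is a second-order U-statistic, apply Hoeffding's classical asymptotic theory for U-statistics to get both the covariance formula and marginal normality, and then upgrade to joint normality via the Cramér--Wold device.

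First, I would verify the U-statistic structure: from the rewriting in equation~\eqref{eq:ustat}, $\hat{u}(\mathbf{t})=\tfrac{2}{n(n-1)}\sum_{i<j}h_{\mathbf{t}}(\mathbf{z}_i,\mathbf{z}_j)$, and a direct check shows the kernel $h_{\mathbf{t}}((\mathbf{x},\mathbf{y}),(\mathbf{x}',\mathbf{y}'))=\tfrac12(k(\mathbf{x},\mathbf{v})-k(\mathbf{x}',\mathbf{v}))(l(\mathbf{y},\mathbf{w})-l(\mathbf{y}',\mathbf{w}))$ is symmetric in its two arguments; its expectation under $\mathbf{z},\mathbf{z}'\stackrel{iid}{\sim}P_{xy}$ equals $u(\mathbf{t})$ by expansion of the product and the definition of $u$ in \eqref{eq:mean_embed_diff}. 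Next, I would compute the Hoeffding projection $h_1^{(\mathbf{t})}(\mathbf{z}):=\mathbb{E}_{\mathbf{z}'}[h_{\mathbf{t}}(\mathbf{z},\mathbf{z}')]$. Expanding the product and taking the expectation over $(\mathbf{x}',\mathbf{y}')\sim P_{xy}$ (which introduces $\mu_x,\mu_y$ on the ``cross'' terms and $\mu_{xy}$ on the ``diagonal'' term) gives, after regrouping,
\begin{equation*}
h_1^{(\mathbf{t})}(\mathbf{z})-u(\mathbf{t})=\tfrac12\bigl[\tilde k(\mathbf{x},\mathbf{v})\tilde l(\mathbf{y},\mathbf{w})-u(\mathbf{t})\bigr].
\end{equation*}
This identity is the heart of the computation and holds under both $H_0$ and $H_1$.

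For the first claim, I would invoke the standard asymptotic variance formula for (possibly different) U-statistics of order two: the (scaled) covariance of two such U-statistics with symmetric kernels $h_{\mathbf{t}}$ and $h_{\mathbf{t}'}$ satisfies $n\,\mathrm{cov}_{\mathbf{z}}[\hat u(\mathbf{t}),\hat u(\mathbf{t}')]\to 4\,\mathrm{cov}_{\mathbf{z}}[h_1^{(\mathbf{t})}(\mathbf{z}),h_1^{(\mathbf{t}')}(\mathbf{z})]$ as $n\to\infty$. Substituting the projection identity above, the factor $4\cdot\tfrac14$ cancels, and the centering by $u(\mathbf{t})$ and $u(\mathbf{t}')$ inside the covariance is free (covariance is translation-invariant), yielding exactly $\mathrm{cov}_{\mathbf{z}}[\tilde k(\mathbf{x},\mathbf{v})\tilde l(\mathbf{y},\mathbf{w}),\tilde k(\mathbf{x},\mathbf{v}')\tilde l(\mathbf{y},\mathbf{w}')]$, which is the stated limit (and the second equality in the display follows by expanding the covariance as $\mathbb{E}[(\cdot-u(\mathbf{t}))(\cdot-u(\mathbf{t}'))]$, using $\mathbb{E}[\tilde k(\mathbf{x},\mathbf{v})\tilde l(\mathbf{y},\mathbf{w})]=u(\mathbf{t})$).

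For the second claim, I would use the Cramér--Wold device: fix any $\boldsymbol{\alpha}\in\mathbb{R}^J$ and note that $\boldsymbol{\alpha}^{\top}\hat{\mathbf{u}}$ is again a degree-two U-statistic with symmetric kernel $\sum_{i=1}^{J}\alpha_i h_{\mathbf{t}_i}$; boundedness of $k$ and $l$ by $B_k$ and $B_l$ (Assumption~\ref{as:prod_ker_bound}) makes this kernel bounded, hence square-integrable. The hypothesis $0<\mathrm{cov}_{\mathbf{z}}[\hat u(\mathbf{t}_i),\hat u(\mathbf{t}_i)]<\infty$ for each $i$, combined with the bilinear form of the limiting covariance, ensures the associated first-order projection has positive variance for generic $\boldsymbol{\alpha}$, so Hoeffding's CLT for non-degenerate U-statistics (e.g.\ \citet{Serfling1980}, Ch.~5) gives $\sqrt{n}(\boldsymbol{\alpha}^{\top}\hat{\mathbf{u}}-\boldsymbol{\alpha}^{\top}\mathbf{u})\stackrel{d}{\to}\mathcal{N}(0,\boldsymbol{\alpha}^{\top}\boldsymbol{\Sigma}\boldsymbol{\alpha})$ with $\boldsymbol{\Sigma}$ as defined in the statement; for $\boldsymbol{\alpha}$ that produce a degenerate projection the limit is $0$, matching $\boldsymbol{\alpha}^{\top}\boldsymbol{\Sigma}\boldsymbol{\alpha}=0$. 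Cramér--Wold then yields joint normality of $\sqrt{n}(\hat{\mathbf{u}}-\mathbf{u})$.

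No genuine obstacle is foreseen: the entire argument is bookkeeping around a single projection calculation. The one step that requires some care is the clean simplification $h_1^{(\mathbf{t})}(\mathbf{z})-u(\mathbf{t})=\tfrac12(\tilde k\tilde l-u(\mathbf{t}))$, since under $H_1$ the expectation $\mathbb{E}_{(\mathbf{x}',\mathbf{y}')}[k(\mathbf{x}',\mathbf{v})l(\mathbf{y}',\mathbf{w})]$ does \emph{not} factor; the extra $u(\mathbf{t})$ offset is precisely what absorbs this non-factorization so that the same formula covers both $H_0$ and $H_1$.
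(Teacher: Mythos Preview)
Your proposal is correct and follows essentially the same route as the paper: recognize each $\hat u(\mathbf t)$ as a symmetric second-order U-statistic, compute the first Hoeffding projection $h_1^{(\mathbf t)}(\mathbf z)=\tfrac12(\tilde k\tilde l+u(\mathbf t))$, and invoke the standard asymptotic covariance formula $4\,\mathrm{cov}[h_1^{(\mathbf t)},h_1^{(\mathbf t')}]$ to obtain the stated limit. The only (minor) difference is in the second claim: the paper cites a multivariate U-statistic CLT directly (Lehmann, Kowalski--Tu), whereas you reduce to the univariate case via Cram\'er--Wold; both are standard and equivalent here.
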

\begin{proof}
We first note that for a fixed $\mathbf{t}=(\mathbf{v},\mathbf{w})$,
$\hat{u}(\mathbf{v},\mathbf{w})$ is a one-sample second-order U-statistic
\citep[Section 5.1.3]{Serfling2009} with a U-statistic kernel $h_{\mathbf{t}}$
where $h_{\mathbf{t}}(\mathbf{a},\mathbf{b})=h_{\mathbf{t}}(\mathbf{b},\mathbf{a})$.
Thus, by \citet[Section 5.1, Theorem 1]{Kowalski2008}, it follows
directly that $\mathrm{cov}[\hat{u}(\mathbf{t}),\hat{u}(\mathbf{t}')]=4\mathrm{cov}_{\mathbf{z}}[\mathbb{E}_{\mathbf{a}}h_{\mathbf{t}}(\mathbf{z},\mathbf{a}),\mathbb{E}_{\mathbf{b}}h_{\mathbf{t}'}(\mathbf{z},\mathbf{b})]$.
Substituting $h_{\mathbf{t}}$ with its definition yields the first
claim, where we note that $\mathbb{E}_{\mathbf{x}\mathbf{y}}[\tilde{k}(\mathbf{x},\mathbf{v})\tilde{l}(\mathbf{y},\mathbf{w})]=u(\mathbf{v},\mathbf{w})$. 

For the second claim, since $\hat{\mathbf{u}}$ is a multivariate
one-sample U-statistic, by \citet[Theorem 6.1.6]{Lehmann1999} and
\citet[Section 5.1, Theorem 1]{Kowalski2008}, it follows that $\sqrt{n}(\hat{\mathbf{u}}-\mathbf{u})\stackrel{d}{\to}\mathcal{N}(\mathbf{0},\boldsymbol{\Sigma})$
as $n\to\infty$, where $\Sigma_{ij}=\mathrm{cov}[\hat{u}(\mathbf{t}_{i}),\hat{u}(\mathbf{t}_{j})]$.
\end{proof}

Recall from Proposition \ref{prop:fsic_dependence_measure} that $\mathbf{u}=0$
holds almost surely under $H_{0}$. The asymptotic normality in the
second claim of Proposition \ref{prop:asymp_u} implies that $n\widehat{\mathrm{FSIC^{2}}}=\frac{n}{J}\hat{\mathbf{u}}^{\top}\hat{\mathbf{u}}$
converges in distribution to a sum of $J$ dependent weighted $\chi^{2}$
random variables. The dependence comes from the fact that the coordinates
$\hat{u}_{1}\ldots,\hat{u}_{J}$ of $\hat{\mathbf{u}}$ all depend
on the sample $\mathsf{Z}_{n}$. This null distribution is not analytically
tractable, and requires a large number of simulations to compute the
rejection threshold $T_{\alpha}$ for a given significance value $\alpha$. 

\subsection{Normalized FSIC and Adaptive Test\label{sec:independence_test}}

For the purpose of an independence test, we will consider a normalized
variant of $\widehat{\mathrm{FSIC^{2}}}$, which we call \textcolor{black}{$\widehat{\mathrm{NFSIC^{2}}}$,}
whose tractable asymptotic null distribution is $\chi^{2}(J)$, the
chi-squared distribution with $J$ degrees of freedom. We then show
that the independence test defined by \textcolor{black}{$\widehat{\mathrm{NFSIC^{2}}}$}
is consistent. These results are given in Theorem \ref{thm:nfsic_good_test}.

\begin{restatable}[Independence test using $\widehat{\mathrm{NFSIC^2}}$ is consistent]{thm}{nfsicgoodtest}
\label{thm:nfsic_good_test}

Let $\hat{\boldsymbol{\Sigma}}$ be a consistent estimate of $\boldsymbol{\Sigma}$
based on the joint sample $\mathsf{Z}_{n}$. \textcolor{black}{The
$\widehat{\mathrm{NFSIC^{2}}}$} statistic is defined as $\hat{\lambda}_{n}:=n\hat{\mathbf{u}}^{\top}\left(\hat{\boldsymbol{\Sigma}}+\gamma_{n}\mathbf{I}\right)^{-1}\hat{\mathbf{u}}$
where $\gamma_{n}\ge0$ is a regularization parameter. Assume that

\begin{enumerate}
\item Assumption \ref{as:prod_ker_bound} holds.
\item $\boldsymbol{\Sigma}$ is invertible almost surely with respect to
$V_{J}=\{(\mathbf{v}_{i},\mathbf{w}_{i})\}_{i=1}^{J}$ drawn from
an absolutely continuous distribution.
\item $\lim_{n\to\infty}\gamma_{n}=0$.
\end{enumerate}
Then, for any $k,l$ and $V_{J}$ satisfying the assumptions,
\begin{enumerate}
\item Under $H_{0}$, $\hat{\lambda}_{n}\stackrel{d}{\to}\chi^{2}(J)$ as
$n\to\infty$.
\item Under $H_{1}$, for any $r\in\mathbb{R}$, $\lim_{n\to\infty}\mathbb{P}\left(\hat{\lambda}_{n}\ge r\right)=1$
almost surely. That is, the independence test based on $\widehat{\mathrm{NFSIC^{2}}}$
is consistent.
\end{enumerate}
\end{restatable}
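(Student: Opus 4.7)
My overall strategy is to decouple the random quantities that drive the statistic and then apply Slutsky's theorem and the continuous mapping theorem, using the CLT from Proposition~\ref{prop:asymp_u} for $\hat{\mathbf{u}}$ as the ``distributional'' input, and consistency of $\hat{\boldsymbol{\Sigma}}$ together with $\gamma_n\to 0$ as the ``probability'' input. The only structural point I will exploit specifically about $H_0$ versus $H_1$ is Proposition~\ref{prop:fsic_dependence_measure}: $\mathbf{u}=0$ almost surely under $H_0$, and $\mathbf{u}\neq\mathbf{0}$ almost surely under $H_1$ (for $V_J$ drawn from an absolutely continuous distribution).

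\textbf{Null limit ($H_0$).} Under $H_0$ we have $\mathbf{u}=\mathbf{0}$ a.s., so Proposition~\ref{prop:asymp_u} gives $\sqrt{n}\,\hat{\mathbf{u}}\xrightarrow{d}\mathbf{Z}\sim\mathcal{N}(\mathbf{0},\boldsymbol{\Sigma})$. Because $\boldsymbol{\Sigma}$ is a.s.\ invertible, $\hat{\boldsymbol{\Sigma}}\xrightarrow{p}\boldsymbol{\Sigma}$, and $\gamma_n\to 0$, the continuous mapping theorem (matrix inversion is continuous on the open set of invertible matrices) yields $(\hat{\boldsymbol{\Sigma}}+\gamma_n\mathbf{I})^{-1}\xrightarrow{p}\boldsymbol{\Sigma}^{-1}$. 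Writing
\[
\hat{\lambda}_n = \bigl(\sqrt{n}\,\hat{\mathbf{u}}\bigr)^{\!\top}(\hat{\boldsymbol{\Sigma}}+\gamma_n\mathbf{I})^{-1}\bigl(\sqrt{n}\,\hat{\mathbf{u}}\bigr),
\]
Slutsky's theorem (applied to the joint vector of $\sqrt{n}\hat{\mathbf{u}}$ and the regularised precision matrix) together with continuity of the quadratic form gives $\hat{\lambda}_n\xrightarrow{d}\mathbf{Z}^\top\boldsymbol{\Sigma}^{-1}\mathbf{Z}$. Finally, writing $\mathbf{Z}=\boldsymbol{\Sigma}^{1/2}\mathbf{W}$ with $\mathbf{W}\sim\mathcal{N}(\mathbf{0},\mathbf{I}_J)$ identifies this quadratic form as $\mathbf{W}^\top\mathbf{W}\sim\chi^2(J)$.

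\textbf{Alternative ($H_1$).} Here $\mathbf{u}\neq\mathbf{0}$ a.s. From the CLT $\sqrt{n}(\hat{\mathbf{u}}-\mathbf{u})\xrightarrow{d}\mathcal{N}(\mathbf{0},\boldsymbol{\Sigma})$ I conclude $\hat{\mathbf{u}}\xrightarrow{p}\mathbf{u}$, and the same continuity argument as above gives $(\hat{\boldsymbol{\Sigma}}+\gamma_n\mathbf{I})^{-1}\xrightarrow{p}\boldsymbol{\Sigma}^{-1}$. Dividing by $n$,
\[
\frac{\hat{\lambda}_n}{n}=\hat{\mathbf{u}}^\top(\hat{\boldsymbol{\Sigma}}+\gamma_n\mathbf{I})^{-1}\hat{\mathbf{u}}\xrightarrow{p}\mathbf{u}^\top\boldsymbol{\Sigma}^{-1}\mathbf{u}=:c,
\]
and $c>0$ because $\mathbf{u}\neq\mathbf{0}$ and $\boldsymbol{\Sigma}^{-1}$ is positive definite a.s. Hence for any fixed $r$, $\mathbb{P}(\hat{\lambda}_n\ge r)=\mathbb{P}(\hat{\lambda}_n/n\ge r/n)\to 1$ since $\hat{\lambda}_n/n\to c>0$ in probability while $r/n\to 0$. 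This is consistency.

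\textbf{Main obstacle.} The substantive content is all concentrated in the $H_0$ step, where one must turn a joint convergence of ``distributional'' and ``in-probability'' pieces into convergence of the quadratic form to $\chi^2(J)$. The subtlety is that $\boldsymbol{\Sigma}$ is only invertible \emph{almost surely} in $V_J$, which is why the regulariser $\gamma_n\to 0$ is introduced: it guarantees the statistic is well-defined for every finite $n$ while not affecting the asymptotic limit. I do not anticipate difficulty beyond this; the $H_1$ direction is a routine law-of-large-numbers argument once the $H_0$ machinery is in place.
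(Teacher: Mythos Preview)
Your proposal is correct and follows essentially the same route as the paper's proof: under $H_0$, combine the CLT for $\sqrt{n}\hat{\mathbf{u}}$ from Proposition~\ref{prop:asymp_u} with $(\hat{\boldsymbol{\Sigma}}+\gamma_n\mathbf{I})^{-1}\xrightarrow{p}\boldsymbol{\Sigma}^{-1}$ via Slutsky and the continuous mapping theorem to obtain the $\chi^2(J)$ limit; under $H_1$, show $\hat{\lambda}_n/n$ converges to a strictly positive constant. The only cosmetic difference is that under $H_1$ the paper invokes the strong law for U-statistics (Serfling) to get $\hat{\mathbf{u}}\xrightarrow{a.s.}\mathbf{u}$, whereas you deduce $\hat{\mathbf{u}}\xrightarrow{p}\mathbf{u}$ from the CLT itself; either suffices.
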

\begin{proof}[Proof (sketch) ]
 Under $H_{0}$, $n\hat{\mathbf{u}}^{\top}(\hat{\boldsymbol{\Sigma}}+\gamma_{n}\mathbf{I})^{-1}\hat{\mathbf{u}}$
asymptotically follows $\chi^{2}(J)$ because $\sqrt{n}\hat{\mathbf{u}}$
is asymptotically normally distributed (see Proposition \ref{prop:asymp_u}).
Claim 2 builds on the result in Proposition \ref{prop:fsic_dependence_measure}
stating that $\mathbf{u}\neq0$ under $H_{1}$; it follows using the
convergence of $\hat{\mathbf{u}}$ to $\mathbf{u}$. The full proof
can be found in Appendix \ref{sec:proof_nfsic_consistent}.
\end{proof}
Theorem \ref{thm:nfsic_good_test} states that if $H_{1}$ holds,
the statistic can be arbitrarily large as $n$ increases, allowing
$H_{0}$ to be rejected for any fixed threshold. Asymptotically the
test threshold $T_{\alpha}$ is given by the $(1-\alpha)$-quantile
of $\chi^{2}(J)$ and is independent of $n$. The assumption on the
consistency of $\hat{\mathbf{\Sigma}}$ is required to obtain the
asymptotic chi-squared distribution. The regularization parameter
$\gamma_{n}$ is to ensure that $(\hat{\mathbf{\Sigma}}+\gamma_{n}\mathbf{I})^{-1}$
can be stably computed. In practice, $\gamma_{n}$ requires no tuning,
and can be set to be a very small constant.

The next proposition states that the computational complexity of the
\textcolor{black}{$\widehat{\mathrm{NFSIC^{2}}}$} estimator is linear
in both the input dimension and sample size, and that it can be expressed
in terms of the $\mathbf{K=}[K{}_{ij}]=[k(\mathbf{v}_{i},\mathbf{x}_{j})]\in\mathbb{R}^{J\times n},\mathbf{L}=[L_{ij}]=[l(\mathbf{w}_{i},\mathbf{y}_{j})]\in\mathbb{R}^{J\times n}$
matrices.
\begin{prop}[An empirical estimator of \textcolor{black}{$\widehat{\mathrm{NFSIC^{2}}}$}]
\label{prop:estimators} Let $\boldsymbol{1}_{n}:=(1,\ldots,1)^{\top}\in\mathbb{R}^{n}$.
Denote by $\circ$ the element-wise matrix product. Then, 

\begin{enumerate}
\item $\hat{\mathbf{u}}=\frac{\left(\mathbf{K}\circ\mathbf{L}\right)\boldsymbol{1}_{n}}{n-1}-\frac{\left(\mathbf{K}\boldsymbol{1}_{n}\right)\circ\left(\mathbf{L}\boldsymbol{1}_{n}\right)}{n(n-1)}$.
\item A consistent estimator for $\boldsymbol{\Sigma}$ is $\hat{\boldsymbol{\Sigma}}=\frac{\Gamma\Gamma^{\top}}{n}$
where 
\begin{align*}
\Gamma & :=(\mathbf{K}-n^{-1}\mathbf{K}\boldsymbol{1}_{n}\boldsymbol{1}_{n}^{\top})\circ(\mathbf{L}-n^{-1}\mathbf{L}\boldsymbol{1}_{n}\boldsymbol{1}_{n}^{\top})-\hat{\mathbf{u}}^{b}\boldsymbol{1}_{n}^{\top},\\
\hat{\mathbf{u}}^{b} & =n^{-1}\left(\mathbf{K}\circ\mathbf{L}\right)\boldsymbol{1}_{n}-n^{-2}\left(\mathbf{K}\boldsymbol{1}_{n}\right)\circ\left(\mathbf{L}\boldsymbol{1}_{n}\right).
\end{align*}
\end{enumerate}
Assume that the complexity of the kernel evaluation is linear in the
input dimension. Then the test statistic $\hat{\lambda}_{n}=n\hat{\mathbf{u}}^{\top}\left(\hat{\boldsymbol{\Sigma}}+\gamma_{n}\mathbf{I}\right)^{-1}\hat{\mathbf{u}}$
can be computed in $\mathcal{O}(J^{3}+J^{2}n+(d_{x}+d_{y})Jn)$ time. 

\end{prop}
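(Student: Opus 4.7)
I would treat the three claims separately. Parts (1) and (3) are essentially bookkeeping, while part (2) is the only step requiring a genuine probabilistic argument.

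For part (1), I would simply substitute the definitions of $\hat{\mu}_{xy}$ and $\widehat{\mu_{x}\mu_{y}}$ into $\hat{u}(\mathbf{v},\mathbf{w})=\hat{\mu}_{xy}(\mathbf{v},\mathbf{w})-\widehat{\mu_{x}\mu_{y}}(\mathbf{v},\mathbf{w})$ and read off the matrix form. With $K_{ij}=k(\mathbf{v}_{i},\mathbf{x}_{j})$ and $L_{ij}=l(\mathbf{w}_{i},\mathbf{y}_{j})$, the $i$-th entry of $(\mathbf{K}\circ\mathbf{L})\mathbf{1}_{n}$ is $\sum_{a}k(\mathbf{v}_{i},\mathbf{x}_{a})l(\mathbf{w}_{i},\mathbf{y}_{a})$, the $i$-th entry of $(\mathbf{K}\mathbf{1}_{n})\circ(\mathbf{L}\mathbf{1}_{n})$ is $(\sum_{a}k(\mathbf{v}_{i},\mathbf{x}_{a}))(\sum_{b}l(\mathbf{w}_{i},\mathbf{y}_{b}))$, and splitting the double sum in $\widehat{\mu_{x}\mu_{y}}$ into ``$j\neq i$'' and ``$j=i$'' parts immediately gives the stated identity.

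For part (2), the key observation is that $\Gamma$ is exactly the row-centered version of $\tilde{\mathbf{K}}\circ\tilde{\mathbf{L}}$, where $\tilde{\mathbf{K}}:=\mathbf{K}-n^{-1}\mathbf{K}\mathbf{1}_{n}\mathbf{1}_{n}^{\top}$ and $\tilde{\mathbf{L}}:=\mathbf{L}-n^{-1}\mathbf{L}\mathbf{1}_{n}\mathbf{1}_{n}^{\top}$. A short expansion shows that the row averages of $\tilde{\mathbf{K}}\circ\tilde{\mathbf{L}}$ equal $\hat{\mathbf{u}}^{b}$, so $\Gamma\mathbf{1}_{n}=\mathbf{0}$ and hence $(\Gamma\Gamma^{\top})_{ij}/n$ is the usual sample covariance of the pair of sequences
\[
\Big(\tilde{k}_{n}(\mathbf{x}_{a},\mathbf{v}_{i})\tilde{l}_{n}(\mathbf{y}_{a},\mathbf{w}_{i})\Big)_{a=1}^{n},\qquad\Big(\tilde{k}_{n}(\mathbf{x}_{a},\mathbf{v}_{j})\tilde{l}_{n}(\mathbf{y}_{a},\mathbf{w}_{j})\Big)_{a=1}^{n},
\]
with sample-centered kernels $\tilde{k}_{n},\tilde{l}_{n}$. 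I would then invoke the strong law of large numbers twice: first on $n^{-1}\sum_{b}k(\mathbf{v}_{i},\mathbf{x}_{b})\to\mu_{x}(\mathbf{v}_{i})$ and the analogous $y$-statement, which under Assumption \ref{as:prod_ker_bound} hold by boundedness of $k,l$, to replace the sample-centered $\tilde{k}_{n},\tilde{l}_{n}$ by their population-centered counterparts $\tilde{k},\tilde{l}$ at negligible cost; and second on the resulting i.i.d.\ averages to conclude that $(\Gamma\Gamma^{\top})_{ij}/n\to\mathrm{cov}_{\mathbf{z}}[\tilde{k}(\mathbf{x},\mathbf{v}_{i})\tilde{l}(\mathbf{y},\mathbf{w}_{i}),\tilde{k}(\mathbf{x},\mathbf{v}_{j})\tilde{l}(\mathbf{y},\mathbf{w}_{j})]$ almost surely. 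Combined with the limit identified in Proposition \ref{prop:asymp_u}, this gives $\hat{\boldsymbol{\Sigma}}\to\boldsymbol{\Sigma}$ entrywise a.s., i.e.\ consistency. I expect this substitution argument to be the only nontrivial step; the main care needed is to show that the error from using $\hat{\mu}_{x},\hat{\mu}_{y}$ in place of $\mu_{x},\mu_{y}$ is $o(1)$ a.s.\ uniformly across the finitely many test locations, which follows from boundedness of the kernels and the fact that $J$ is fixed.

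For part (3), I would tally the operations required for the formulas in parts (1) and (2). Evaluating $\mathbf{K}$ and $\mathbf{L}$ costs $\mathcal{O}((d_{x}+d_{y})Jn)$ under the linear-evaluation assumption. Forming $\mathbf{K}\circ\mathbf{L}$, the row and column sums, and $\hat{\mathbf{u}},\hat{\mathbf{u}}^{b}$ from them each costs $\mathcal{O}(Jn)$. Constructing $\Gamma\in\mathbb{R}^{J\times n}$ is $\mathcal{O}(Jn)$, and the Gram-like product $\Gamma\Gamma^{\top}\in\mathbb{R}^{J\times J}$ dominates the $\hat{\boldsymbol{\Sigma}}$ computation at $\mathcal{O}(J^{2}n)$. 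Finally, solving the $J\times J$ linear system $(\hat{\boldsymbol{\Sigma}}+\gamma_{n}\mathbf{I})^{-1}\hat{\mathbf{u}}$ is $\mathcal{O}(J^{3})$, and the outer inner product is $\mathcal{O}(J)$. Summing gives the claimed $\mathcal{O}(J^{3}+J^{2}n+(d_{x}+d_{y})Jn)$ bound.
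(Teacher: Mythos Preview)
Your proof is correct and follows essentially the same overall structure as the paper: claim 1 is direct algebra, the formula for $\hat{\boldsymbol{\Sigma}}$ in claim 2 comes from the asymptotic covariance in Proposition~\ref{prop:asymp_u}, and claim 3 is an operation count.

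The one genuine difference is in how you establish \emph{consistency} of $\hat{\boldsymbol{\Sigma}}$. You argue via the strong law of large numbers: first SLLN on the empirical kernel means to pass from $\tilde{k}_n,\tilde{l}_n$ to $\tilde{k},\tilde{l}$, then SLLN on the resulting i.i.d.\ averages, using boundedness of $k,l$ and finiteness of $J$. The paper instead points to the finite-sample concentration bound $\mathbb{P}(\|\hat{\boldsymbol{\Sigma}}-\boldsymbol{\Sigma}\|_F>t)\to 0$ that is implicitly derived in Appendix~\ref{sec:bound_sigma} (and the sections following it) as part of the proof of Theorem~\ref{thm:lower_bound_pow}: there the difference $\hat S(\mathbf{t},\mathbf{t}')-S(\mathbf{t},\mathbf{t}')$ is expanded into twelve terms, each bounded by a Hoeffding-type inequality (Lemma~\ref{lem:bound_product_diff}). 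Your SLLN route is more elementary and entirely sufficient for the bare consistency claim in the proposition; the paper's route is heavier but yields explicit exponential tail bounds, which are exactly what Theorem~\ref{thm:lower_bound_pow} needs anyway, so the consistency of $\hat{\boldsymbol{\Sigma}}$ comes ``for free'' from that later analysis. Your ``negligible cost'' replacement step would, if written out, amount to the same term-by-term expansion the paper performs, just with SLLN in place of Hoeffding at each term.
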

\begin{proof}[Proof (sketch)]
Claim 1 for $\hat{\mathbf{u}}$ is straightforward. The expression
for $\hat{\mathbf{\Sigma}}$ in claim 2 follows directly from the
asymptotic covariance expression in Proposition \ref{prop:asymp_u}.
The consistency of $\hat{\mathbf{\Sigma}}$ can be obtained by noting
that the finite sample bound for $\mathbb{P}(\|\hat{\mathbf{\Sigma}}-\mathbf{\Sigma}\|_{F}>t)$
decreases as $n$ increases. This is implicitly shown in Appendix
\ref{sec:bound_sigma} and its following sections.
\end{proof}
Although the dependency of the estimator on $J$ is cubic, we empirically
observe that only a small value of $J$ is required (see Section \ref{sec:experiments}).
The number of test locations $J$ relates to the number of regions
in $\mathcal{X}\times\mathcal{Y}$ of $p_{xy}$ and $p_{x}p_{y}$
that differ (see Figure \ref{fig:illus_nfsic}). In particular, $J$
need not increase with $n$ for test consistency. 

Our final theoretical result gives a lower bound on the test power
of \textcolor{black}{$\widehat{\mathrm{NFSIC^{2}}}$} i.e., the probability
of correctly rejecting $H_{0}$. We will use this lower bound as the
objective function to determine $V_{J}$ and the kernel parameters.
Let $\|\cdot\|_{F}$ be the Frobenius norm.

\begin{restatable}[A lower bound on the test power]{thm}{lbpow}
\label{thm:lower_bound_pow}Let $\mathrm{NFSIC^{2}}(X,Y):=\lambda_{n}:=n\mathbf{u}^{\top}\boldsymbol{\Sigma}^{-1}\mathbf{u}$.
Let $\mathcal{K}$ be a kernel class for $k$, $\mathcal{L}$ be a
kernel class for $l$, and $\mathcal{V}$ be a collection with each
element being a set of $J$ locations. Assume that 
\begin{enumerate}
\item There exist finite $B_{k}$ and $B_{l}$ such that $\sup_{k\in\mathcal{K}}\sup_{\mathbf{x},\mathbf{x}'\in\mathcal{X}}|k(\mathbf{x},\mathbf{x}')|\le B_{k}$
and $\sup_{l\in\mathcal{L}}\sup_{\mathbf{y},\mathbf{y}'\in\mathcal{Y}}|l(\mathbf{y},\mathbf{y}')|\le B_{l}$. 
\item $\tilde{c}:=\sup_{k\in\mathcal{K}}\sup_{l\in\mathcal{L}}\sup_{V_{J}\in\mathcal{V}}\|\boldsymbol{\Sigma}^{-1}\|_{F}<\infty$. 
\end{enumerate}
Then, for any $k\in\mathcal{K},l\in\mathcal{L},V_{J}\in\mathcal{V}$,
and $\lambda_{n}\ge r$, the test power satisfies $\mathbb{P}\left(\hat{\lambda}_{n}\ge r\right)\ge L(\lambda_{n})$
where
\begin{align*}
L(\lambda_{n}) & =1-62e^{-\xi_{1}\gamma_{n}^{2}(\lambda_{n}-r)^{2}/n}-2e^{-\lfloor0.5n\rfloor(\lambda_{n}-r)^{2}/\left[\xi_{2}n^{2}\right]}\\
 & -2e^{-\left[(\lambda_{n}-r)\gamma_{n}(n-1)/3-\xi_{3}n-c_{3}\gamma_{n}^{2}n(n-1)\right]^{2}/\left[\xi_{4}n^{2}(n-1)\right]},
\end{align*}
 $\lfloor\cdot\rfloor$ is the floor function, $\xi_{1}:=\frac{1}{3^{2}c_{1}^{2}J^{2}B^{*}}$,
$\xi_{2}:=72c_{2}^{2}JB^{2}$, $B:=B_{k}B_{l}$, $\xi_{3}:=8c_{1}B^{2}J$,
$c_{3}:=4B^{2}J\tilde{c}^{2}$, $\xi_{4}:=2^{8}B^{4}J^{2}c_{1}^{2}$,
$c_{1}:=4B^{2}J\sqrt{J}\tilde{c}$, $c_{2}:=4B\sqrt{J}\tilde{c}$,
and $B^{*}$ is a constant depending on only $B_{k}$ and $B_{l}$.
Moreover, for sufficiently large fixed $n$, $L(\lambda_{n})$ is
increasing in $\lambda_{n}$. 

\end{restatable}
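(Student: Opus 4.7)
The strategy is to turn the lower bound on power into an upper bound on the two-sided deviation $|\hat\lambda_n-\lambda_n|$, and then control that deviation via concentration for the two stochastic ingredients $\hat{\mathbf{u}}$ and $\hat\Sigma$. Since $\lambda_n\ge r$,
\begin{align*}
\mathbb{P}(\hat\lambda_n\ge r) &\ge 1-\mathbb{P}\bigl(|\hat\lambda_n-\lambda_n|>\lambda_n-r\bigr),
\end{align*}
so it suffices to upper bound the right-hand tail. Writing $\hat S:=\hat\Sigma+\gamma_n\mathbf{I}$ and using the perturbation identity $\hat S^{-1}-\Sigma^{-1}=\hat S^{-1}(\Sigma-\hat\Sigma-\gamma_n\mathbf{I})\Sigma^{-1}$ together with the telescoping
\begin{align*}
\hat{\mathbf{u}}^\top\hat S^{-1}\hat{\mathbf{u}}-\mathbf{u}^\top\Sigma^{-1}\mathbf{u} &=(\hat{\mathbf{u}}-\mathbf{u})^\top\hat S^{-1}(\hat{\mathbf{u}}+\mathbf{u})+\mathbf{u}^\top\bigl(\hat S^{-1}-\Sigma^{-1}\bigr)\mathbf{u},
\end{align*}
I obtain a deterministic bound of the form $|\hat\lambda_n-\lambda_n|\le A_1\|\hat{\mathbf{u}}-\mathbf{u}\|_2+A_2\|\hat\Sigma-\Sigma\|_F+A_3$, where the constants $A_i$ depend on $n,J,B_k,B_l,\tilde c,\gamma_n$ through the bounds $\|\hat S^{-1}\|_{\mathrm{op}}\le 1/\gamma_n$ and $\|\Sigma^{-1}\|_F\le\tilde c$, and $A_3$ absorbs the residual regularization term $-\gamma_n\mathbf{u}^\top\hat S^{-1}\Sigma^{-1}\mathbf{u}$.

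Next I apply concentration to each ingredient. Each coordinate $\hat u_i$ of $\hat{\mathbf{u}}$ is a one-sample second-order U-statistic (Proposition \ref{prop:asymp_u}) with U-kernel uniformly bounded by $2B_kB_l$, so Hoeffding's inequality for U-statistics, applied coordinate-wise and aggregated via $\|\hat{\mathbf{u}}-\mathbf{u}\|_2\le\sqrt J\,\max_i|\hat u_i-u_i|$, gives the middle exponential term of $L(\lambda_n)$ with its characteristic $\lfloor 0.5n\rfloor$ factor and constant $\xi_2$. For $\hat\Sigma$, I use the closed form of Proposition \ref{prop:estimators} to decompose each entry $\hat\Sigma_{ij}-\Sigma_{ij}$ into a dominant empirical-average piece (a mean of bounded centred kernel products) plus lower-order remainders produced by the sample-based centring. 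Applying Bernstein's inequality to the dominant piece contributes the third exponent of $L(\lambda_n)$ with constants $\xi_3,c_3,\xi_4$, while a Hoeffding bound on the remainders, combined with union-bounding over the $J^2$ entries and a Weyl-type stability bound $\|\hat S^{-1}\|_{\mathrm{op}}\le(\gamma_n+\lambda_{\min}(\Sigma)-\|\hat\Sigma-\Sigma\|_{\mathrm{op}})^{-1}$ on the high-probability event that $\hat\Sigma$ fluctuates little, aggregates into the first exponential with its prefactor $62$ and constant $\xi_1$.

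The main obstacle is this entry-wise control of $\hat\Sigma-\Sigma$: once the sample-centred products are expanded, the estimator becomes a fourth-order functional of the data, and obtaining the explicit Bernstein constants $\xi_1,\xi_3,c_3,\xi_4$ requires careful bookkeeping of the sup-norm and variance of every sub-average, together with the $J^2$ union bound that is absorbed into $c_1,c_2$. Combining the three concentration bounds through the deterministic perturbation estimate of paragraph one and a final union bound yields $\mathbb{P}(|\hat\lambda_n-\lambda_n|>\lambda_n-r)\le 1-L(\lambda_n)$, which gives the claimed power bound. The closing monotonicity assertion then follows by direct inspection of $L$: each summand is of the form $\exp(-a(\lambda_n-r)^2+\text{offset})$ whose quadratic decay in $\lambda_n-r$ dominates the $O(n)$ and $O(n(n-1))$ offsets $\xi_3 n$ and $c_3\gamma_n^2 n(n-1)$ once $n$ is large enough, so $L'(\lambda_n)>0$ on that regime. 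The detailed algebra and the bookkeeping for the $\hat\Sigma$ concentration are deferred to the appendix.
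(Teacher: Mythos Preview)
Your high-level skeleton matches the paper: reduce to a two-sided deviation $|\hat\lambda_n-\lambda_n|$, obtain a deterministic perturbation inequality of the form
\[
|\hat\lambda_n-\lambda_n|\le \tfrac{c_1 n}{\gamma_n}\|\hat\Sigma-\Sigma\|_F+c_2 n\|\hat{\mathbf u}-\mathbf u\|_2+c_3 n\gamma_n,
\]
and then apply concentration to each random piece. Your treatment of $\|\hat{\mathbf u}-\mathbf u\|_2$ via $\sqrt{J}\max_i|\hat u_i-u_i|$ and the U-statistic Hoeffding bound is exactly what the paper does for the middle exponential with constant $\xi_2$.

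Where your plan diverges from the paper, and would not reproduce the stated bound, is the $\hat\Sigma$ part. You attribute the third exponential to a Bernstein inequality on a ``dominant empirical-average piece'' of $\hat\Sigma$, and the first exponential (prefactor $62$) to a Hoeffding bound on remainders plus a $J^2$ union bound and a Weyl stability argument. None of these ingredients appear in the paper's proof, and they do not produce the specific constants. In the paper, $\|\hat\Sigma-\Sigma\|_F$ is split as $\|\hat{\mathbf S}-\mathbf S\|_F+\|\hat{\mathbf u}^b\hat{\mathbf u}^{b\top}-\mathbf u\mathbf u^\top\|_F$. The second piece reduces (up to an $O(1/(n-1))$ bias) to another U-statistic deviation $|\hat u(\tilde{\mathbf t})-u(\tilde{\mathbf t})|$; combining it with the deterministic offsets $\tfrac{8B^2Jc_1 n}{(n-1)\gamma_n}$ and $c_3 n\gamma_n$ and applying U-statistic Hoeffding gives the third exponential --- the shifts $\xi_3 n$ and $c_3\gamma_n^2 n(n-1)$ in its exponent are \emph{deterministic} bias terms, not Bernstein variance terms. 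The first exponential with prefactor $62$ comes from $\|\hat{\mathbf S}-\mathbf S\|_F\le J|\hat S(\mathbf t,\mathbf t')-S(\mathbf t,\mathbf t')|$ followed by expanding the centred fourth-order product $(a-\bar a)(b-\bar b)(a'-\bar a')(b'-\bar b')$ into twelve cross-terms; each is bounded by a product-variant Hoeffding lemma (telescoping a product of empirical means into a sum of single-mean deviations), and the prefactors $2+8+8+24+6+6+8$ sum to $62$. No Weyl bound is used anywhere --- the paper only uses the crude $\|(\hat\Sigma+\gamma_n\mathbf I)^{-1}\|_F\le\sqrt{J}/\gamma_n$, which is why $\gamma_n$ appears in the exponents rather than $\lambda_{\min}(\Sigma)$.

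In short: your architecture is correct, but Bernstein and Weyl are the wrong tools here, and your mapping of which ingredient produces which of the three exponentials is off. The specific constants $\xi_1,\xi_3,\xi_4,c_3$ and the $62$ are fingerprints of the twelve-term product expansion plus two separate applications of U-statistic Hoeffding; your proposed route would yield a valid bound of the same flavour but with different (and likely messier) constants.
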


We provide the proof in Appendix \ref{sec:proof_lb_pow}. To put Theorem
\ref{thm:lower_bound_pow} into perspective, let $\theta_{x}$ and
$\theta_{y}$ be the parameters of the kernels $k$ and $l$, respectively.
We denote by $\theta=\{\theta_{x},\theta_{y},V_{J}\}$ the collection
of all tuning parameters of the test. Assume that $\mathcal{K}=\left\{ (\mathbf{x},\mathbf{v})\mapsto\exp\left(-\frac{\|\mathbf{x}-\mathbf{v}\|^{2}}{2\sigma_{x}^{2}}\right)\mid\sigma_{x}^{2}\in[\sigma_{x,l}^{2},\sigma_{x,u}^{2}]\right\} =:\mathcal{K}_{g}$
for some $0<\sigma_{x,l}^{2}<\sigma_{x,u}^{2}<\infty$ and $\mathcal{L}=\left\{ (\mathbf{y},\mathbf{w})\mapsto\exp\left(-\frac{\|\mathbf{y}-\mathbf{w}\|^{2}}{2\sigma_{y}^{2}}\right)\mid\sigma_{y}^{2}\in[\sigma_{y,l}^{2},\sigma_{y,u}^{2}]\right\} =:\mathcal{L}_{g}$
for some $0<\sigma_{y,l}^{2}<\sigma_{y,u}^{2}<\infty$ are Gaussian
kernel classes. Then, in Theorem \ref{thm:lower_bound_pow}, $B=B_{k}=B_{l}=1$,
and $B^{*}=2$. The assumption $\tilde{c}<\infty$ is a technical
condition to guarantee that the test power lower bound is finite for
all $\theta$ defined by the feasible sets $\mathcal{K},\mathcal{L},$
and $\mathcal{V}$. Let $\mathcal{V}_{\epsilon,r}:=\big\{ V_{J}\mid\|\mathbf{v}_{i}\|^{2},\|\mathbf{w}_{i}\|^{2}\le r\text{ and }\|\mathbf{v}_{i}-\mathbf{v}_{j}\|_{2}^{2}+\|\mathbf{w}_{i}-\mathbf{w}_{j}\|_{2}^{2}\ge\epsilon,\text{ for all }i\neq j\big\}$.
If we set $\mathcal{K}=\mathcal{K}_{g},\mathcal{L}=\mathcal{L}_{g},$
and $\mathcal{V}=\mathcal{V}_{\epsilon,r}$ for some $\epsilon,r>0$,
then $\tilde{c}<\infty$ as $\mathcal{K}_{g},\mathcal{L}_{g},$ and
$\mathcal{V}_{\epsilon,r}$ are compact. In practice, these conditions
do not necessarily create restrictions as they almost always hold
implicitly. We show in Appendix \ref{sec:pow_vs_J} that the objective
function used to choose $V_{J}$ will discourage any two locations
to be in the same neighborhood.

\textbf{Parameter Tuning} The test power lower bound $L(\lambda_{n})$
in Theorem \ref{thm:lower_bound_pow} is a function of $\lambda_{n}=n\mathbf{u}^{\top}\mathbf{\Sigma}^{-1}\mathbf{u}$
which is the population counterpart of the test statistic $\hat{\lambda}_{n}$.
As in FSIC, it can be shown that $\lambda_{n}=0$ if and only if $X$
are $Y$ are independent (from Proposition \ref{prop:fsic_dependence_measure}).
If $X$ and $Y$ are dependent, then $\lambda_{n}>0$. According to
Theorem \ref{thm:lower_bound_pow}, for a sufficiently large $n$,
the test power lower bound is increasing in $\lambda_{n}$. One can
therefore think of $\lambda_{n}$ (a function of $\theta$) as representing
how easily the test rejects $H_{0}$ given a problem $P_{xy}$. The
higher the $\lambda_{n}$, the greater the lower bound on the test
power, and thus the more likely it is that the test will reject $H_{0}$
when it is false.

In light of this reasoning, we propose setting $\theta$ to $\theta^{*}=\arg\max_{\theta}\lambda_{n}$.
That this procedure is also valid under $H_{0}$ can be seen as follows.
Under $H_{0}$, $\theta^{*}=\arg\max_{\theta}0$ will be arbitrary.
Since Theorem \ref{thm:lower_bound_pow} guarantees that $\hat{\lambda}_{n}\stackrel{d}{\to}\chi^{2}(J)$
as $n\to\infty$ for any $\theta$, the asymptotic null distribution
does not change by using $\theta^{*}$. In practice, $\lambda_{n}$
is a population quantity which is unknown. We propose dividing the
sample $\mathsf{Z}_{n}$ into two disjoint sets: training and test
sets. The training set is used to optimize for $\theta^{*}$, and
the test set is used for the actual independence test with the optimized
$\theta^{*}$. The splitting is to guarantee the independence of $\theta^{*}$
and the test sample, which is an assumption of Theorem \ref{thm:nfsic_good_test}.

To better under $\widehat{\mathrm{NFSIC^{2}}}$, we visualize $\hat{\mu}_{xy}(\mathbf{v},\mathbf{w}),\widehat{\mu_{x}\mu_{y}}(\mathbf{v},\mathbf{w})$
and $\hat{\mathbf{\Sigma}}(\mathbf{v},\mathbf{w})$ as a function
of one test location $(\mathbf{v},\mathbf{w})$ on a simple toy problem.
In this problem, $Y=-X+Z$ where $Z\sim\mathcal{N}(0,0.3^{2})$. As
we consider only one location $(J=1)$, $\hat{\mathbf{\Sigma}}(\mathbf{v},\mathbf{w})$
is a scalar. The statistic can be written as $\hat{\lambda}_{n}=n\frac{\left(\hat{\mu}_{xy}(\mathbf{v},\mathbf{w})-\widehat{\mu_{x}\mu_{y}}(\mathbf{v},\mathbf{w})\right)^{2}}{\hat{\mathbf{\Sigma}}(\mathbf{v},\mathbf{w})}$.
These components are shown in Figure \ref{fig:illus_nfsic}, where
we use Gaussian kernels for both $X$ and $Y$, and the horizontal
and vertical axes correspond to $\mathbf{v}\in\mathbb{R}$ and $\mathbf{w}\in\mathbb{R}$,
respectively. 
\begin{figure}[t]
\centering

\subfloat[$\hat{\mu}_{xy}(\mathbf{v}, \mathbf{w})$]{
\includegraphics[width=0.48\linewidth]{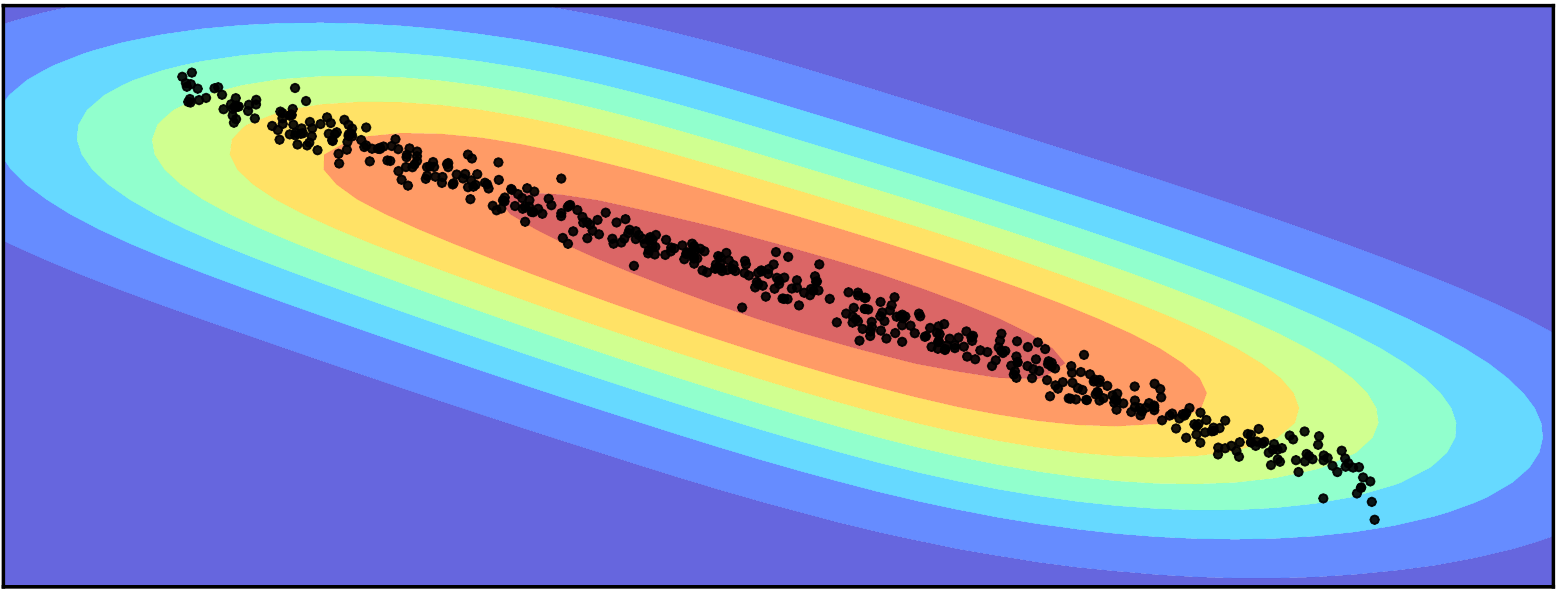}
}
\subfloat[$\widehat{\mu_x \mu_y}(\mathbf{v}, \mathbf{w})$]{ 
\includegraphics[width=0.48\linewidth]{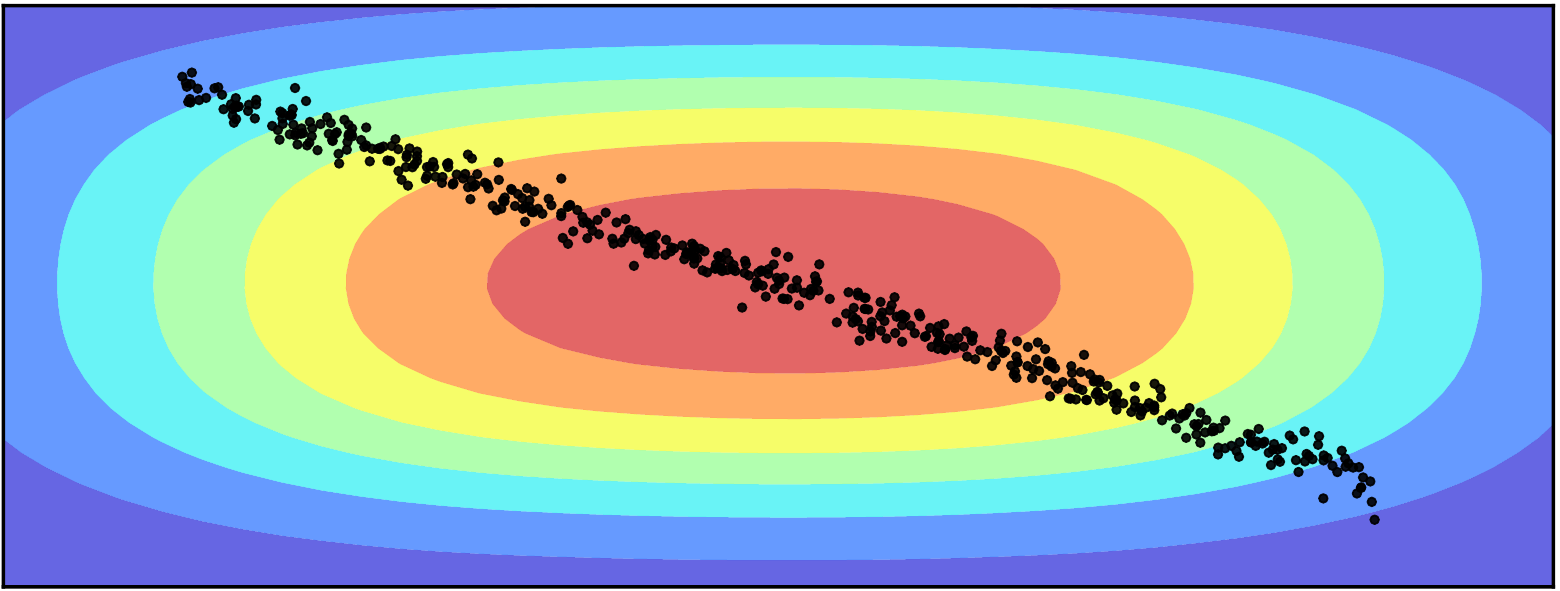} 
}
\\ \vspace{-2mm}
\subfloat[$ \widehat{\mathbf{\Sigma}}(\mathbf{v}, \mathbf{w})$ \label{fig:sigma_hat}]{ 
\includegraphics[width=0.48\linewidth]{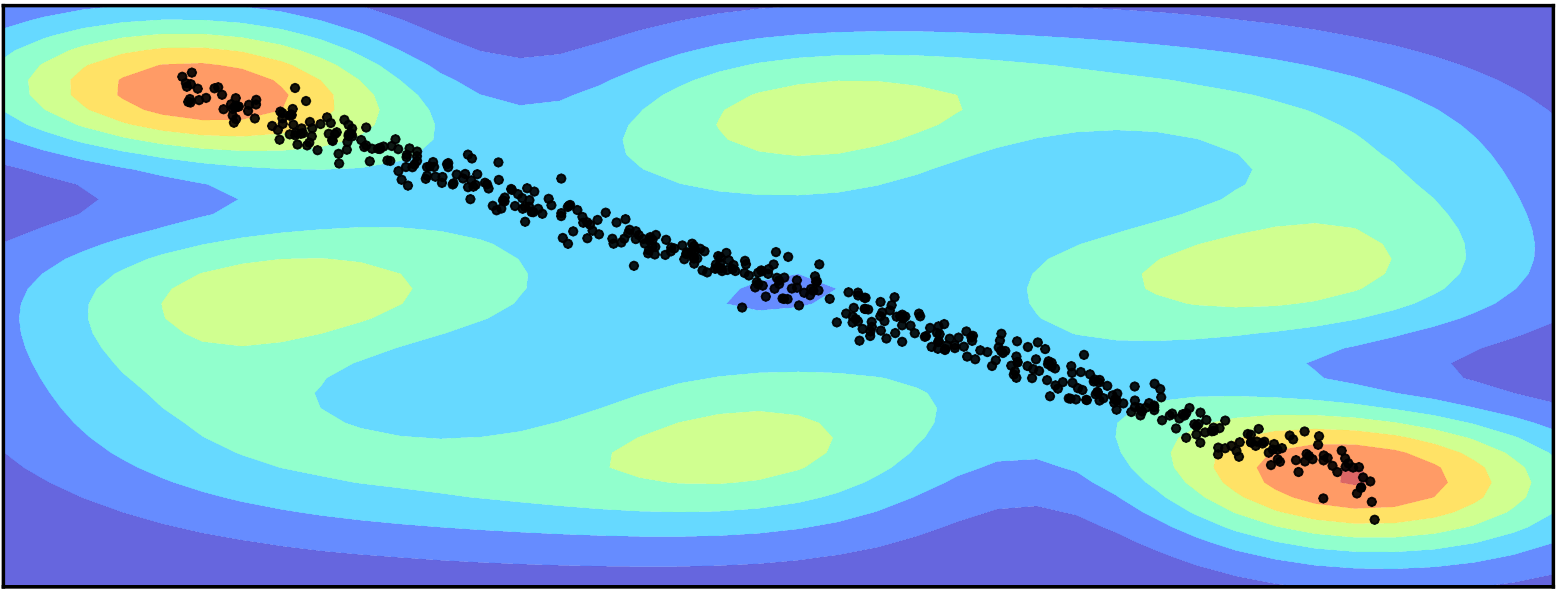} 
}
\subfloat[Statistic $\hat{\lambda}_n(\mathbf{v}, \mathbf{w}) \label{fig:nfsic_surface} $ ]{ 
\includegraphics[width=0.48\linewidth]{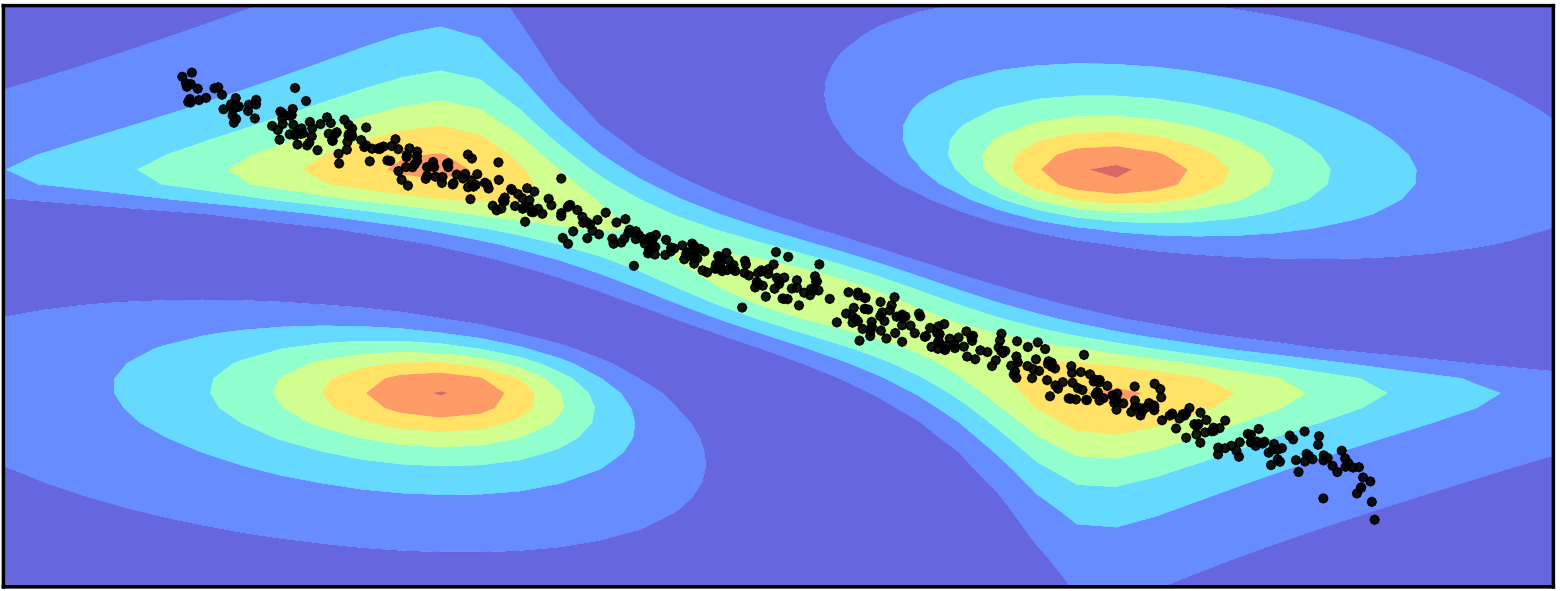} 
}

\caption{Illustration of $\widehat{\mathrm{NFSIC^2}}$. }
\label{fig:illus_nfsic}
\vspace{-4mm}
\end{figure}

Intuitively, $\hat{u}(\mathbf{v},\mathbf{w})=\hat{\mu}_{xy}(\mathbf{v},\mathbf{w})-\widehat{\mu_{x}\mu_{y}}(\mathbf{v},\mathbf{w})$
captures the difference of the joint distribution and the product
of the marginals as a function of $(\mathbf{v},\mathbf{w})$. Squaring
$\hat{u}(\mathbf{v},\mathbf{w})$ and dividing it by the variance
shown in Figure \ref{fig:sigma_hat} gives the statistic (also the
parameter tuning objective) shown in Figure \ref{fig:nfsic_surface}.
The latter figure suggests that the parameter tuning objective function
can be non-convex. However, we note that the non-convexity arises
since there are multiple ways to detect the difference between the
joint distribution and the product of the marginals. In this case,
the lower left and upper right regions equally indicate the largest
difference. 

\section{Experiments\label{sec:experiments}}

\begin{figure*}[htb]
\centering
\subfloat[SG $(\alpha = 0.05)$ \label{fig:sg_param_runtime} ]{
\includegraphics[width=0.21\textwidth]{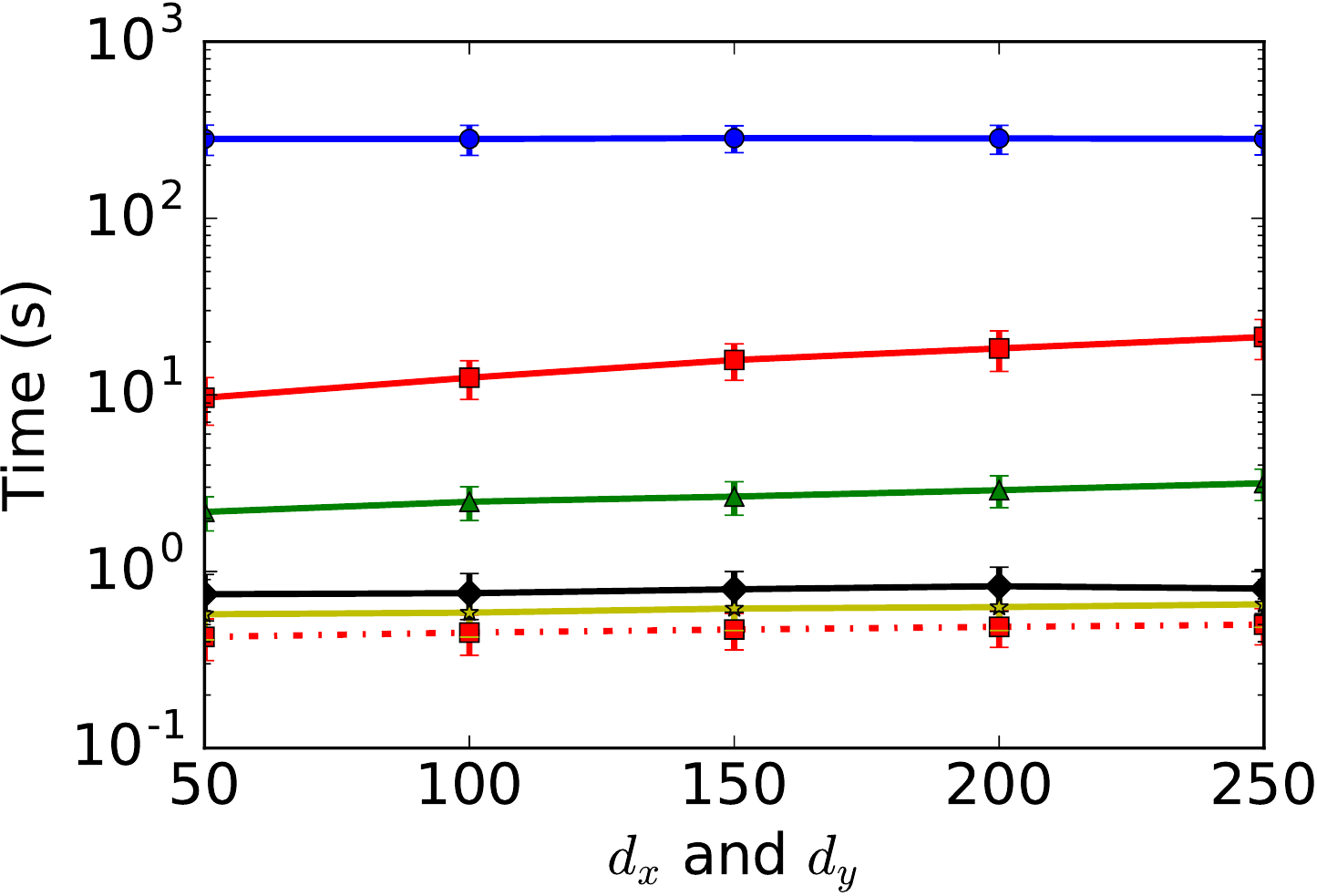}
}
\subfloat[SG $(\alpha = 0.05)$ \label{fig:sg_param_pow} ]{
\includegraphics[width=0.21\textwidth]{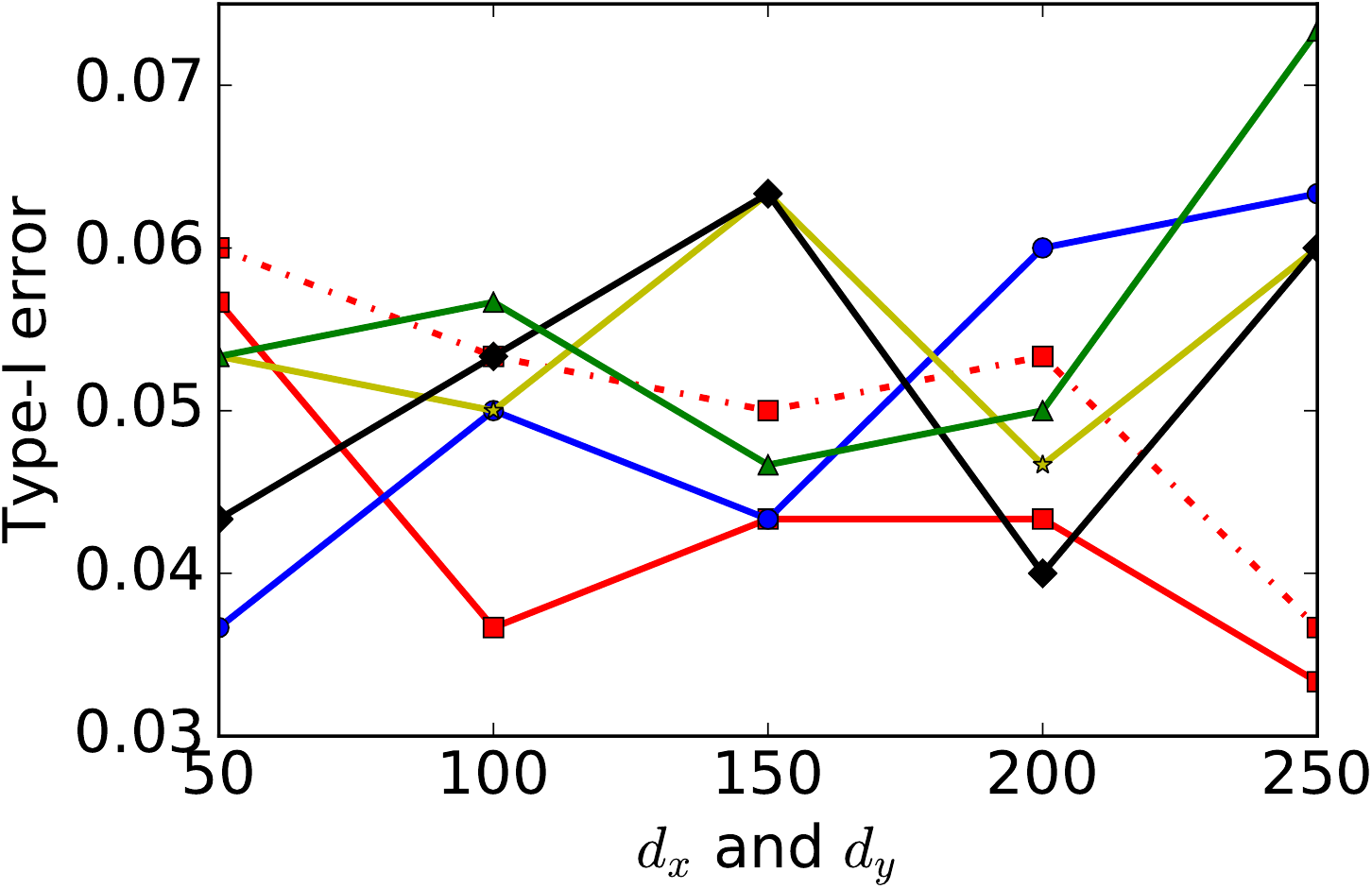}
}
\subfloat[Sin \label{fig:sin_param_pow}]{
\includegraphics[width=0.20\textwidth]{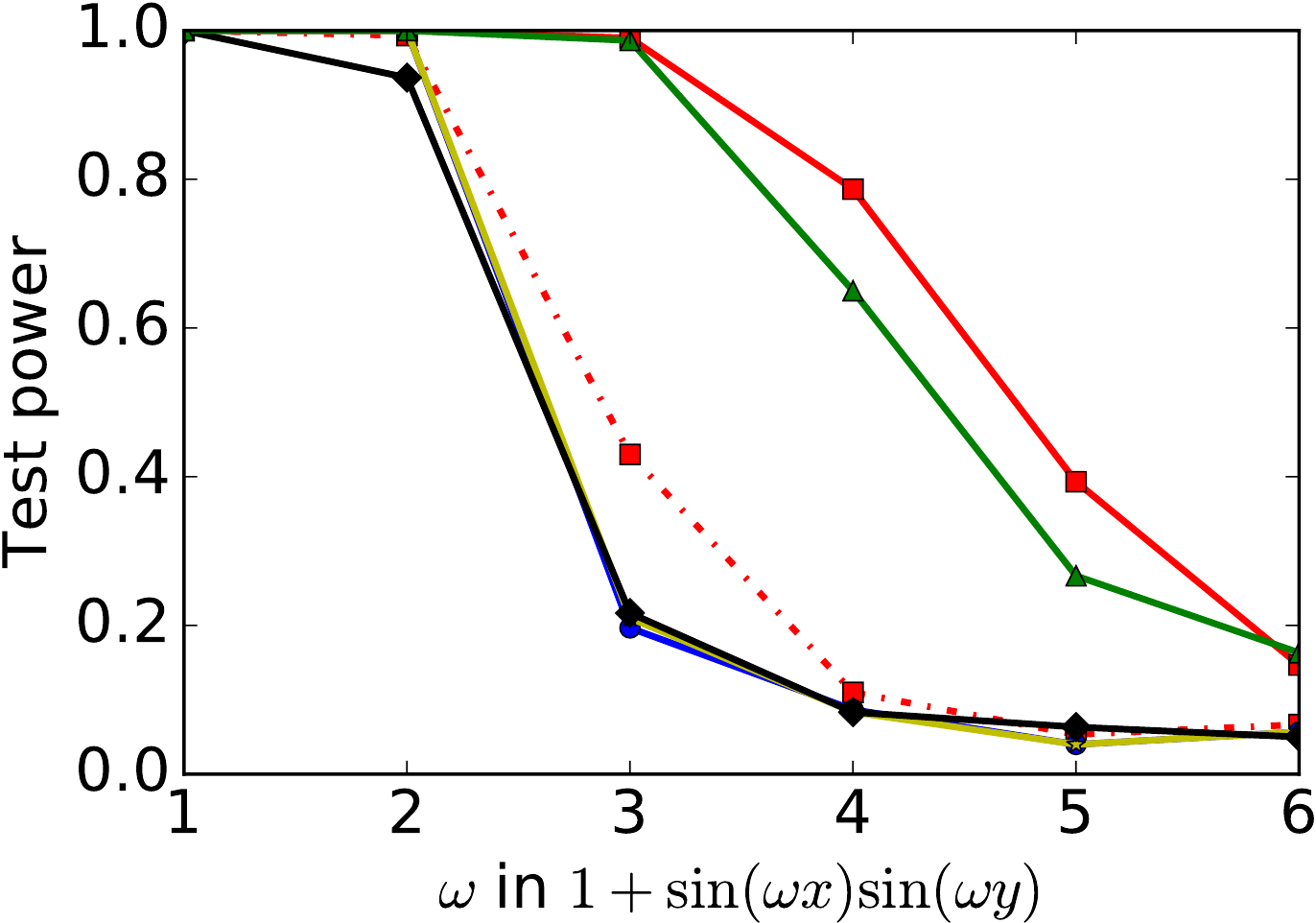}
}
\subfloat[GSign \label{fig:gsign_param_pow}]{
\includegraphics[width=0.31\textwidth]{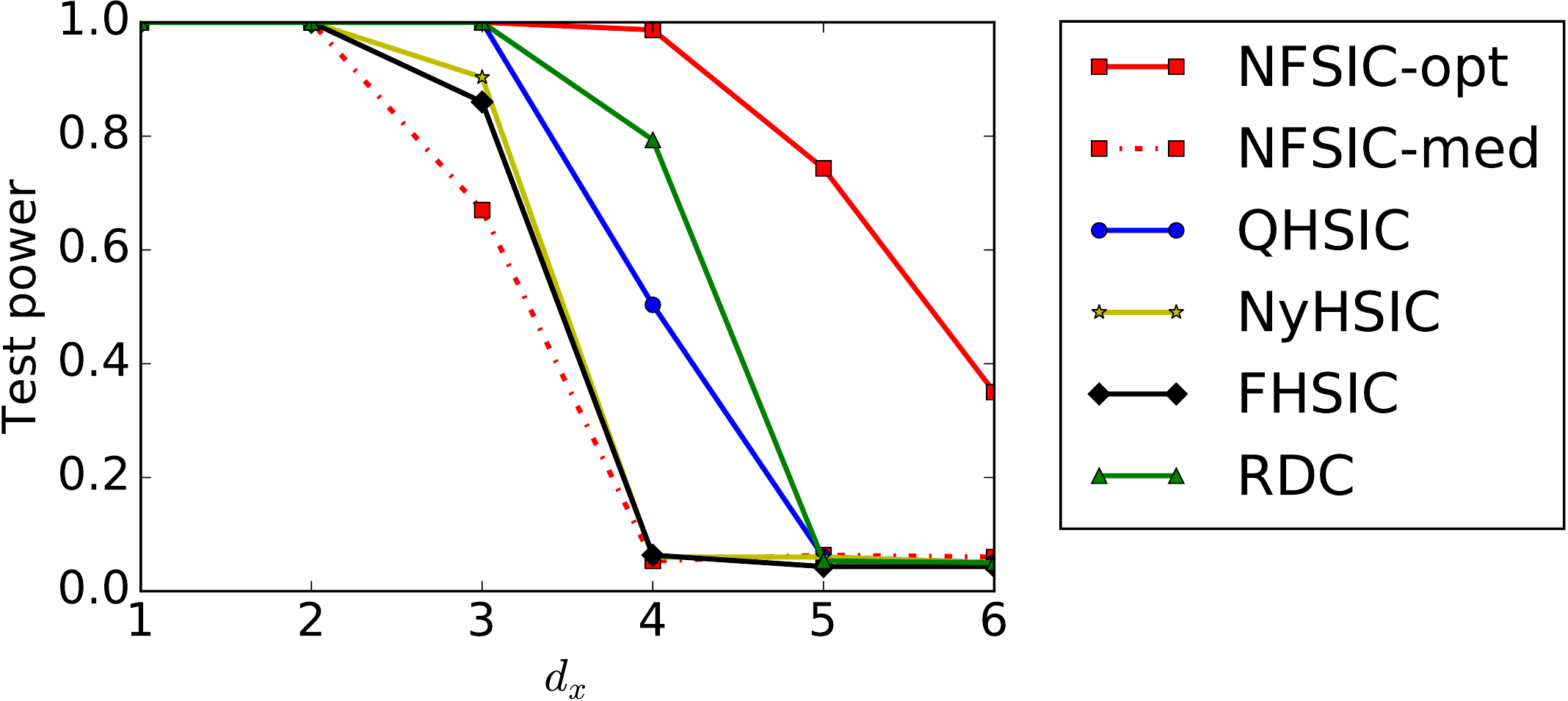}
}
\caption{(a): Runtime. (b): Probability of rejecting $H_0$ as problem parameters vary. Fix $n=4000$.}
\label{fig:toy_pow_vs_params}
\end{figure*}

In this section, we empirically study the performance of the proposed
method on both toy (Section \ref{sec:toy_problems}) and real-life
problems (Section \ref{sec:real_problems}). Our interest is in the
performance of linear-time tests on challenging problems which require
a large sample size to be able to accurately reveal the dependence.
All the code is available at \url{https://github.com/wittawatj/fsic-test}.

We compare the proposed NFSIC with optimization (NFSIC-opt) to five
multivariate nonparametric tests. The $\widehat{\mathrm{NFSIC^{2}}}$
test without optimization (NFSIC-med) acts as a baseline, allowing
the effect of parameter optimization to be clearly seen. For pedagogical
reason, we consider the original HSIC test of \citet{Gretton2005}
denoted by QHSIC, which is a quadratic-time test. Nyström HSIC (NyHSIC)
uses a Nyström approximation to the kernel matrices of $X$ and $Y$
when computing the HSIC statistic. FHSIC is another variant of HSIC
in which a random Fourier feature approximation \citep{Rahimi2008}
to the kernel is used. NyHSIC and FHSIC are studied in \citet{Zhang2016}
and can be computed in $\mathcal{O}(n)$, with quadratic dependency
on the number of inducing points in NyHSIC, and quadratic dependency
in the number of random features in FHSIC. Finally, the Randomized
Dependence Coefficient (RDC) proposed in \citet{Lopez-Paz2013} is
also considered. The RDC can be seen as the primal form (with random
Fourier features) of the kernel canonical correlation analysis of
\citet{BacJor02} on copula-transformed data. We consider RDC as a
linear-time test even though preprocessing by an empirical copula
transform costs $\mathcal{O}((d_{x}+d_{y})n\log n)$.

We use Gaussian kernel classes $\mathcal{K}_{g}$ and $\mathcal{L}_{g}$
for both $X$ and $Y$ in all the methods. Except NFSIC-opt, all other
tests use full sample to conduct the independence test, where the
Gaussian widths $\sigma_{x}$ and $\sigma_{y}$ are set according
to the widely used median heuristic i.e., $\sigma_{x}=\mathrm{median}\left(\left\{ \|\mathbf{x}_{i}-\mathbf{x}_{j}\|_{2}\mid1\le i<j\le n\right\} \right)$,
and $\sigma_{y}$ is set in the same way using $\{\mathbf{y}_{i}\}_{i=1}^{n}$.
The $J$ locations for NFSIC-med are randomly drawn from the standard
multivariate normal distribution in each trial. For a sample of size
$n$, NFSIC-opt uses half the sample for parameter tuning, and the
other disjoint half for the test. We permute the sample 300 times
in RDC\footnote{We use a permutation test for RDC, following the authors' implementation
(\url{https://github.com/lopezpaz/randomized_dependence_coefficient},
referred commit: b0ac6c0).} and HSIC to simulate from the null distribution and compute the test
threshold. The null distributions for FHSIC and NyHSIC are given by
a finite sum of weighted $\chi^{2}(1)$ random variables given in
Eq.\ 8 of \citet{Zhang2016}. Unless stated otherwise, we set the
test threshold of the two NFSIC tests to be the $(1-\alpha)$-quantile
of $\chi^{2}(J)$. To provide a fair comparison, we set $J=10$, use
10 inducing points in NyHSIC, and 10 random Fourier features in FHSIC
and RDC.

\textbf{Optimization of NFSIC-opt} The parameters of NFSIC-opt are
$\sigma_{x},\sigma_{y},$ and $J$ locations of size $(d_{x}+d_{y})J$.
We treat all the parameters as a long vector in $\mathbb{R}^{2+(d_{x}+d_{y})J}$
and use gradient ascent to optimize $\hat{\lambda}_{n/2}$. We observe
that initializing $V_{J}$ by randomly picking $J$ points from the
training sample yields good performance. The regularization parameter
$\gamma_{n}$ in NFSIC is fixed to a small value, and is not optimized.
It is worth emphasizing that the complexity of the optimization procedure
is still linear in $n$.

Since FSIC, NyHFSIC and RDC rely on a finite-dimensional kernel approximation,
these tests are consistent only if both the number of features increases
with $n$. By constrast, the proposed NFSIC requires only $n$ to
go to infinity to achieve consistency i.e., $J$ can be fixed. We
refer the reader to Appendix \ref{sec:pow_vs_J} for a brief investigation
of the test power vs. increasing $J$. The test power does not necessarily
monotonically increase with $J$.

\subsection{Toy Problems\label{sec:toy_problems}}

\begin{figure*}
\centering
\subfloat[SG. $d_x = d_y = 250$. \label{fig:sg_n_runtime}]{
\includegraphics[width=0.21\textwidth]{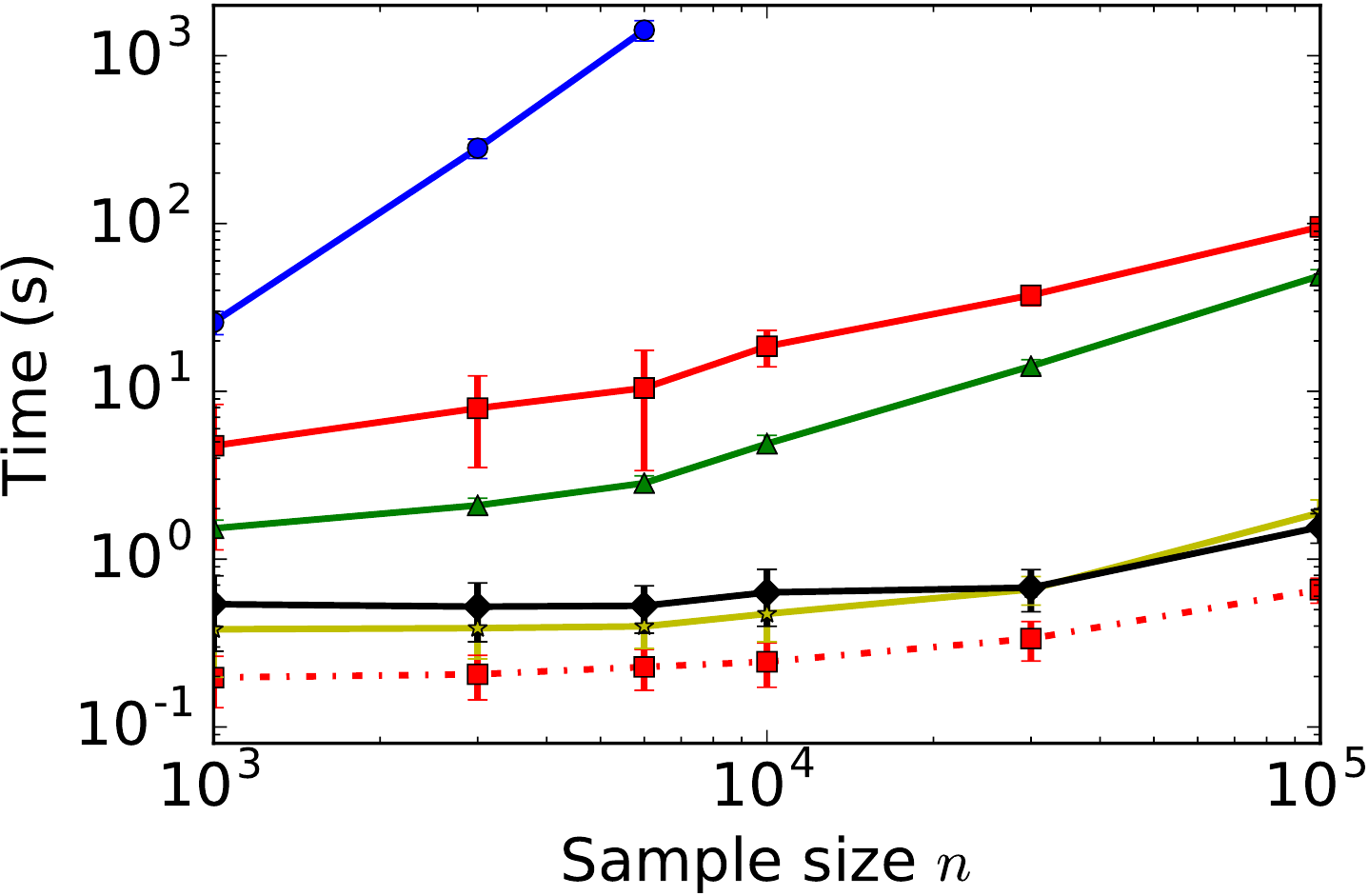}
}
\subfloat[SG. $d_x = d_y = 250$. \label{fig:sg_n_pow}]{
\includegraphics[width=0.21\textwidth]{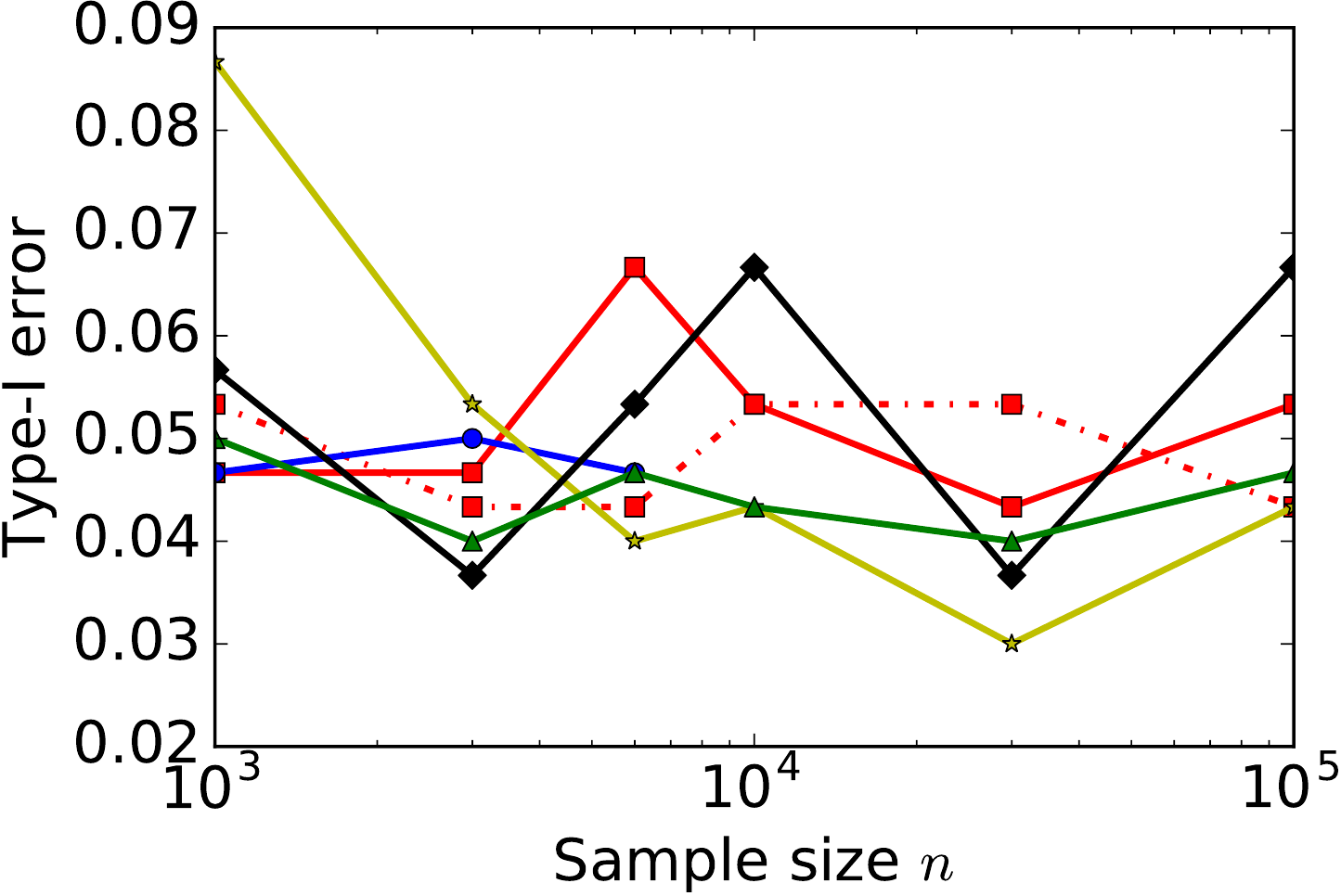}
}
\subfloat[Sin. $\omega=4$. \label{fig:sin_n_pow}]{
\includegraphics[width=0.21\textwidth]{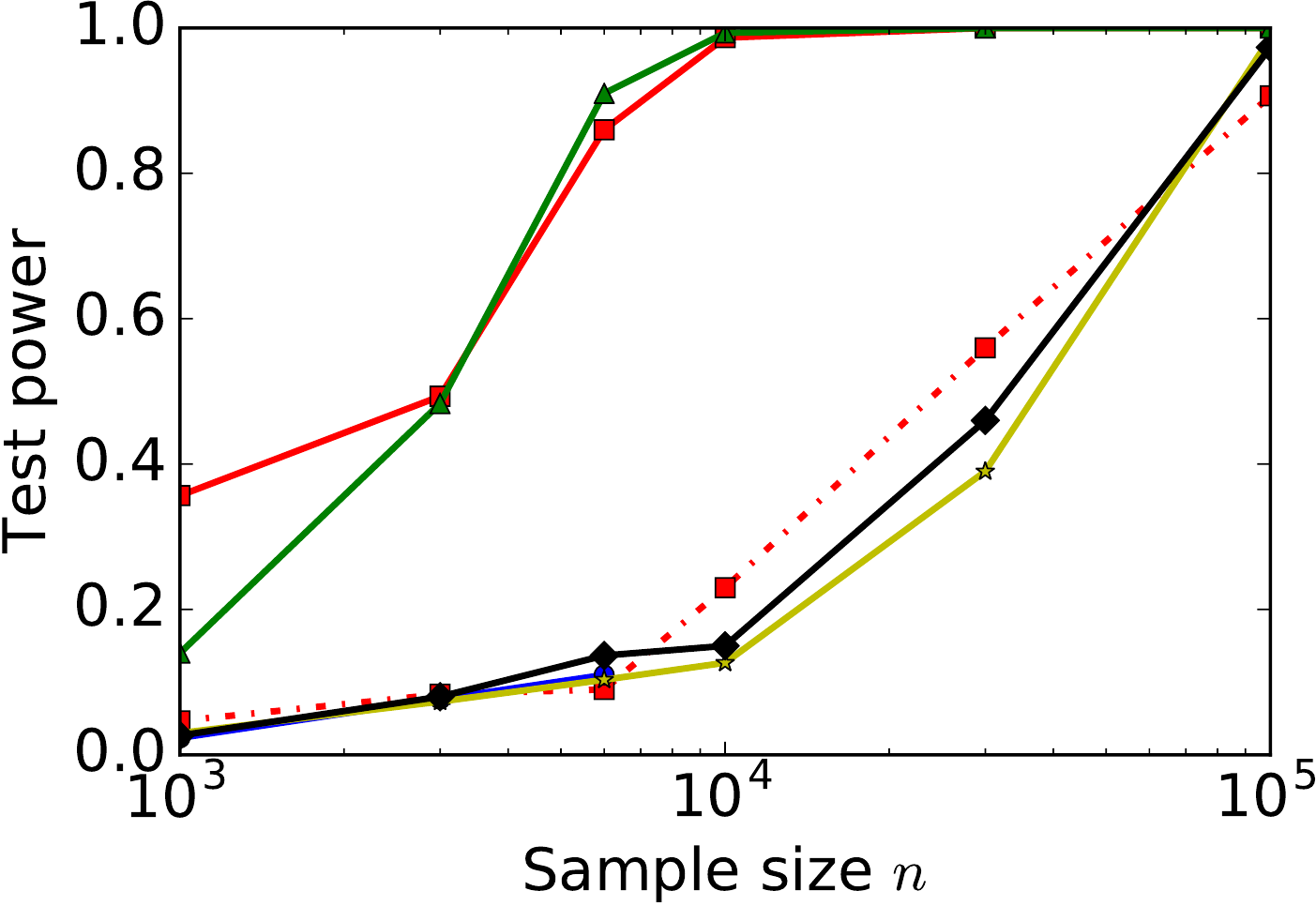}
}
\subfloat[GSign. $d_x=4$.\label{fig:gsign_n_pow}]{
\includegraphics[width=0.31\textwidth]{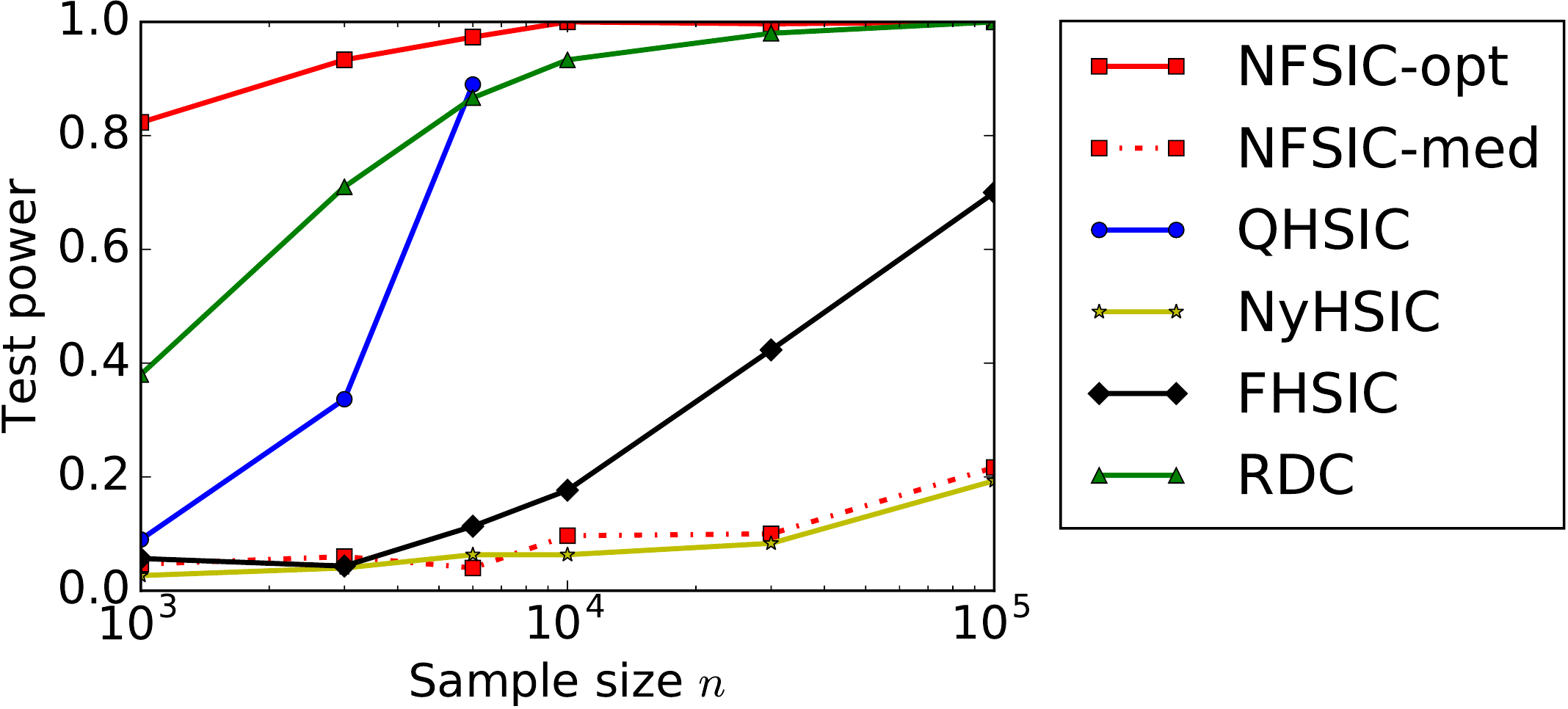}
}

\caption{(a) Runtime. (b): Probability of rejecting $H_{0}$ as $n$ increases
in the toy problems.\label{fig:toy_pow_vs_n}}
\end{figure*}

We consider three toy problems: Same Gaussian (SG), Sinusoid (Sin),
and Gaussian Sign (GSign). 

\textbf{1. Same Gaussian (SG).} The two variables are independently
drawn from the standard multivariate normal distribution i.e., $X\sim\mathcal{N}(\mathbf{0},\mathbf{I}_{d_{x}})$
and $Y\sim\mathcal{N}(\mathbf{0},\mathbf{I}_{d_{y}})$ where $\mathbf{I}_{d}$
is the $d\times d$ identity matrix. This problem represents a case
in which $H_{0}$ holds.

\textbf{2. Sinusoid (Sin).} Let $p_{xy}$ be the probability density
of $P_{xy}$. In the Sinusoid problem, the dependency of $X$ and
$Y$ is characterized by $(X,Y)\sim p_{xy}(x,y)\propto1+\sin(\omega x)\sin(\omega y),$
where the domains of $\mathcal{X},\mathcal{Y}=(-\pi,\pi)$ and $\omega$
is the frequency of the sinusoid. As the frequency $\omega$ increases,
the drawn sample becomes more similar to a sample drawn from $\mathrm{Uniform}((-\pi,\pi)^{2})$.
That is, the higher $\omega$, the harder to detect the dependency
between $X$ and $Y$. This problem was studied in \citet{Sejdinovic2013}.
Plots of the density for a few values of $\omega$ are shown in Figures
\ref{fig:redundant_locs-1} and \ref{fig:pow_vs_J} in the appendix.
The main characteristic of interest in this problem is the local change
in the density function. 

\textbf{3. Gaussian Sign (GSign).} In this problem, $Y=|Z|\prod_{i=1}^{d_{x}}\mathrm{sgn}(X_{i}),$
where $X\sim\mathcal{N}(\mathbf{0},\mathbf{I}_{d_{x}})$, $\mathrm{sgn}(\cdot)$
is the sign function, and $Z\sim\mathcal{N}(0,1)$ serves as a source
of noise. The full interaction of $X=(X_{1},\ldots,X_{d_{x}})$ is
what makes the problem challenging. That is, $Y$ is dependent on
$X$, yet it is independent of any proper subset of $\{X_{1},\ldots,X_{d}\}$.
Thus, simultaneous consideration of all the coordinates of $X$ is
required to successfully detect the dependency.

We fix $n=4000$ and vary the problem parameters. Each problem is
repeated for 300 trials, and the sample is redrawn each time. The
significance level $\alpha$ is set to 0.05. The results are shown
in Figure \ref{fig:toy_pow_vs_params}. It can be seen that in the
SG problem (Figure \ref{fig:sg_param_pow}) where $H_{0}$ holds,
all the tests achieve roughly correct type-I errors at $\alpha=0.05$.
In the Sin problem, NFSIC-opt achieves the highest test power for
all considered $\omega=1,\ldots,6$, highlighting its strength in
detecting local changes in the joint density. The performance of NFSIC-med
is significantly lower than that of NFSIC-opt. This phenomenon clearly
emphasizes the importance of the optimization to place the locations
at the relevant regions in $\mathcal{X}\times\mathcal{Y}$. RDC has
a remarkably high performance in both Sin and GSign (Figure \ref{fig:sin_param_pow}, \ref{fig:gsign_param_pow})
despite no parameter tuning. Interestingly, both NFSIC-opt and RDC
outperform the quadratic-time QHSIC in these two problems. The ability
to simultaneously consider interacting features of NFSIC-opt is indicated
by its superior test power in GSign, especially at the challenging
settings of $d_{x}=5,6$. An average trial runtime for each test in
the SG problem is shown in Figure \ref{fig:sg_param_runtime}. We
observe that the runtime does not increase with dimension, as the
complexity of all the tests is linear in the dimension of the input.
All the tests are implemented in Python using a common \texttt{SciPy}
Stack.

To investigate the sample efficiency of all the tests, we fix $d_{x}=d_{y}=250$
in SG, $\omega=4$ in Sin, $d_{x}=4$ in GSign, and increase $n$.
Figure \ref{fig:toy_pow_vs_n} shows the results. The quadratic dependency
on $n$ in QHSIC makes it infeasible both in terms of memory and runtime
to consider $n$ larger than 6000 (Figure \ref{fig:sg_n_runtime}).
In constrast, although not the most time-efficient, NFSIC-opt has
the highest sample-efficiency for GSign, and for Sin in the low-sample
regime, significantly outperforming QHSIC. Despite the small additional
overhead from the optimization, we are yet able to conduct an accurate
test with $n=10^{5},d_{x}=d_{y}=250$ in less than $100$ seconds.
We observe in Figure \ref{fig:sg_n_pow} that the two NFSIC variants
have correct type-I errors across all sample sizes, indicating that
the asymptotic null distribution approximately holds by the time $n$
reaches 1000. We recall from Theorem \ref{thm:nfsic_good_test} that
the NFSIC test with random test locations will asymptotically reject
$H_{0}$ if it is false. A demonstration of this property is given
in Figure \ref{fig:sin_n_pow}, where the test power of NFSIC-med
eventually reaches 1 with $n$ higher than $10^{5}$.

\subsection{Real Problems\label{sec:real_problems}}

We now examine the performance of our proposed test on real problems. 

\textbf{Million Song Data (MSD)} We consider a subset of the Million
Song Data\footnote{Million Song Data subset: \url{https://archive.ics.uci.edu/ml/datasets/YearPredictionMSD}.}
\citep{Bertin-Mahieux2011}, in which each song $(X)$ out of 515,345
is represented by 90 features, of which 12 features are timbre average
(over all segments) of the song, and 78 features are timbre covariance.
Most of the songs are western commercial tracks from 1922 to 2011.
The goal is to detect the dependency between each song and its year
of release $(Y)$. We set $\alpha=0.01$, and repeat for 300 trials
where the full sample is randomly subsampled to $n$ points in each
trial. Other settings are the same as in the toy problems. To make
sure that the type-I error is correct, we use the permutation approach
in the NFSIC tests to compute the threshold. Figure \ref{fig:msd_n_pow}
shows the test powers as $n$ increases from 500 to 2000. To simulate
the case where $H_{0}$ holds in the problem, we permute the sample
to break the dependency of $X$ and $Y$. The results are shown in
Figure \ref{fig:real_h0_vs_n} in the appendix.

\begin{figure}
\captionsetup[subfigure]{labelformat=empty}  
\centering
\subfloat{
\includegraphics[width=0.5\textwidth]{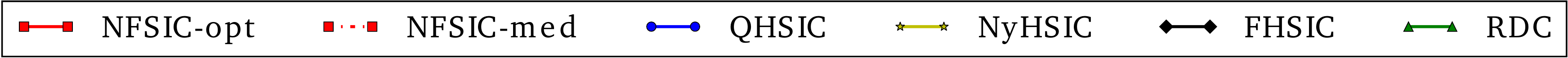}
}
\vspace{-2mm}
\subfloat[(a) MSD problem. \label{fig:msd_n_pow}]{
\includegraphics[width=0.24\textwidth]{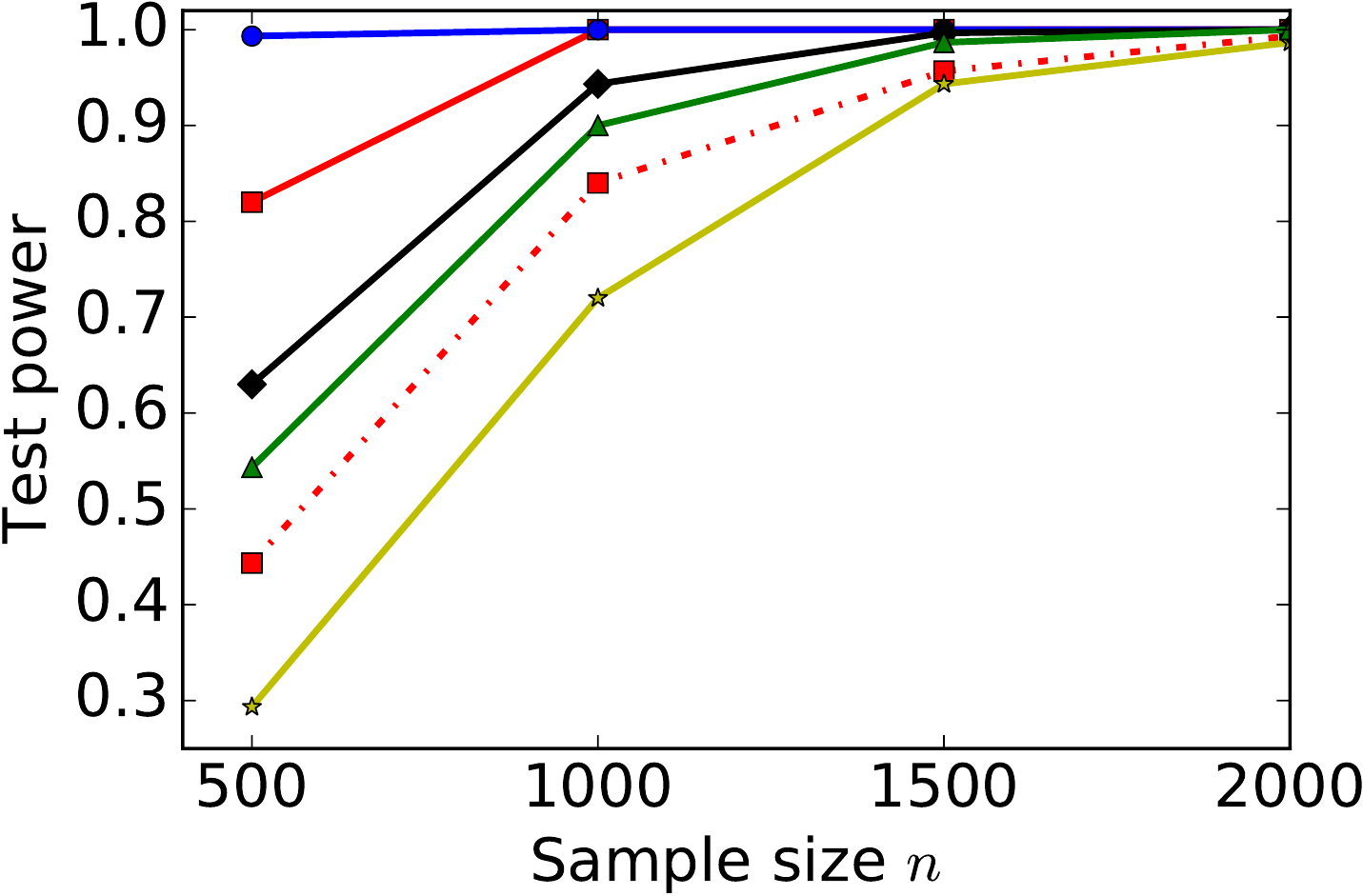}
}
\subfloat[(b) Videos \& Captions problem. \label{fig:vdo_n_pow}]{
\includegraphics[width=0.24\textwidth]{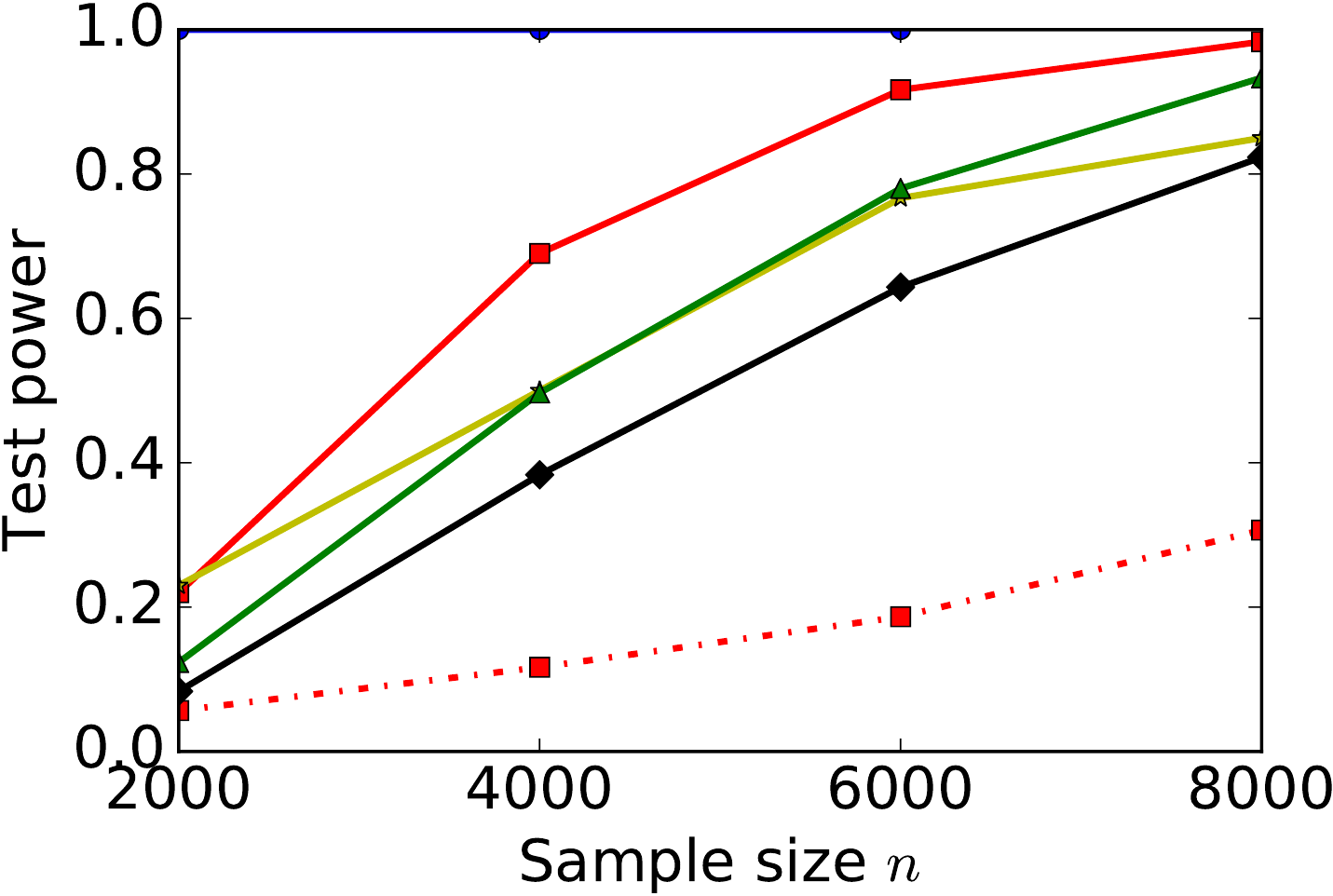}
}
\vspace{-2mm}

\caption{Probability of rejecting $H_{0}$ as $n$ increases in the two real
problems. $\alpha=0.01$.\label{fig:real_vs_n}}
\end{figure}

Evidently, NFSIC-opt has the highest test power among all the linear-time
tests for all the sample sizes. Its test power is second to only QHSIC.
We recall that NFSIC-opt uses half of the sample for parameter tuning.
Thus, at $n=500$, the actual sample for testing is 250, which is
relatively small. The fact that there is a vast power gain from 0.4
(NFSIC-med) to 0.8 (NFSIC-opt) at $n=500$ suggests that the optimization
procedure can perform well even at a lower sample sizes.

\textbf{Videos and Captions} Our last problem is based on the VideoStory46K\texttt{}\footnote{VideoStory46K dataset: \texttt{\url{https://ivi.fnwi.uva.nl/isis/mediamill/datasets/videostory.php}.}}\texttt{
}dataset \citep{Habibian2014}. The dataset contains 45,826 Youtube
videos $(X)$ of an average length of roughly one minute, and their
corresponding text captions $(Y)$ uploaded by the users. Each video
is represented as a $d_{x}=2000$ dimensional Fisher vector encoding
of motion boundary histograms (MBH) descriptors of \citet{Wang2013}.
Each caption is represented as a bag of words with each feature being
the frequency of one word. After filtering only words which occur
in at least six video captions, we obtain $d_{y}=1878$ words. We
examine the test powers as $n$ increases from $2000$ to $8000$.
The results are given in Figure \ref{fig:real_vs_n}. The problem
is sufficiently challenging that all linear-time tests achieve a low
power at $n=2000$. QHSIC performs exceptionally well on this problem,
achieving a maximum power throughout. NFSIC-opt has the highest sample
efficiency among the linear-time tests, showing that the optimization
procedure is also practical in a high dimensional setting.

\subsubsection*{Acknowledgement}

We thank the Gatsby Charitable Foundation for the financial support.
The major part of this work was carried out while Zolt{\'a}n Szab{\'o}
was a research associate at the Gatsby Computational Neuroscience
Unit, University College London.

\bibliographystyle{abbrvnat}
\bibliography{fsic}

\newpage
\onecolumn
\appendix

\begin{center}
{\large \textbf{\ourtitle}}

{\large Supplementary Material}
\end{center}

\section{Type-I Errors\label{sec:type1_err}}

In this section, we show that all the tests have correct type-I errors
(i.e., the probability of reject $H_{0}$ when it is true) in real
problems. We permute the joint sample so that the dependency is broken
to simulate cases in which $H_{0}$ holds. The results are shown in
Figure \ref{fig:real_h0_vs_n}.

\begin{figure}[th]
\captionsetup[subfigure]{labelformat=empty}  
\centering
\subfloat{
\includegraphics[width=0.6\textwidth]{img/legend6-crop.pdf}
}
\vspace{-2mm}
\subfloat[(a) MSD problem (permuted). \label{fig:msd_h0_n_pow}]{
\includegraphics[width=0.30\textwidth]{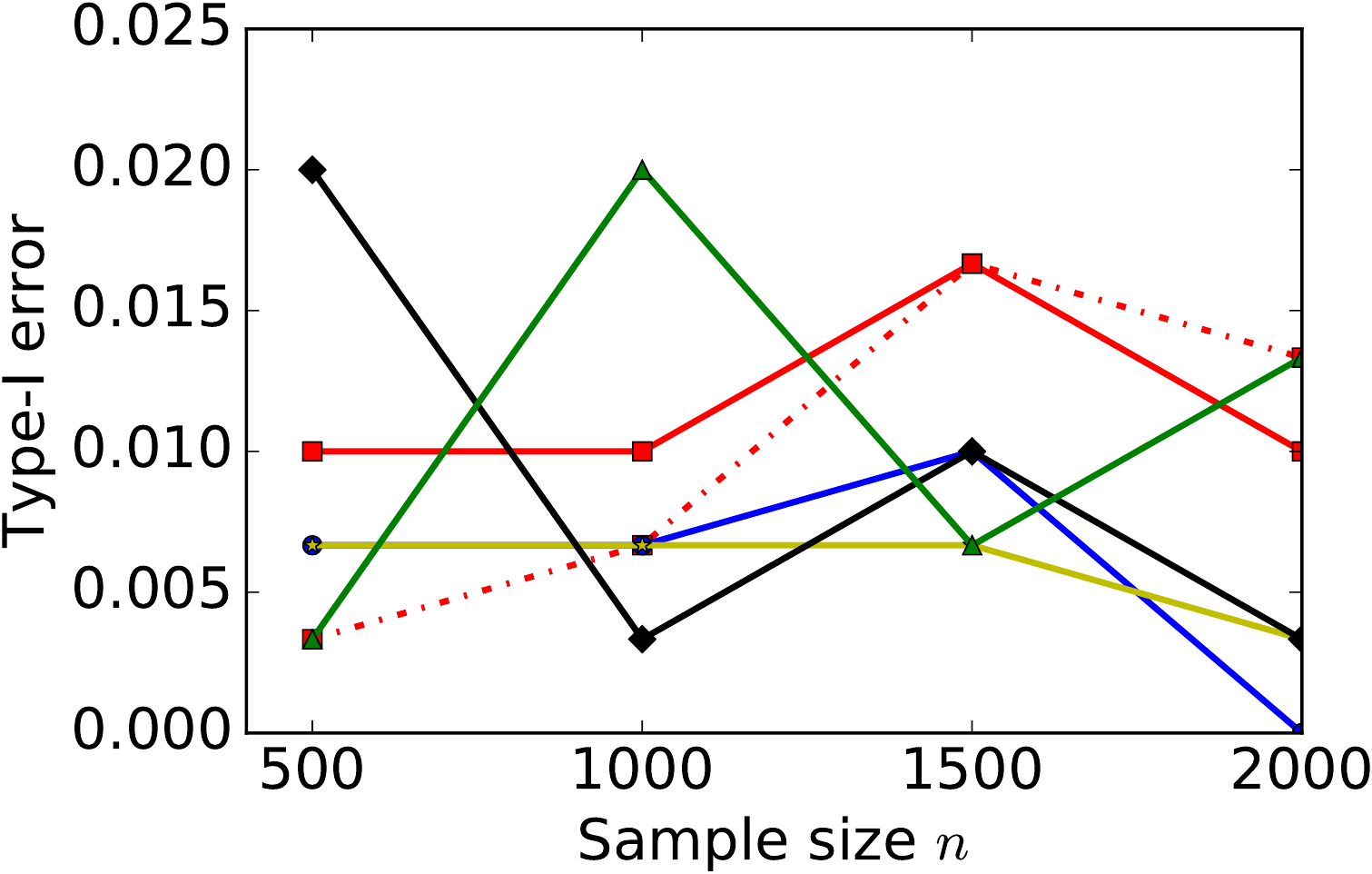}
}
\subfloat[(a) Videos \& Captions problem with shuffled sample. \label{fig:vdo_h0_n_pow}]{
\includegraphics[width=0.30\textwidth]{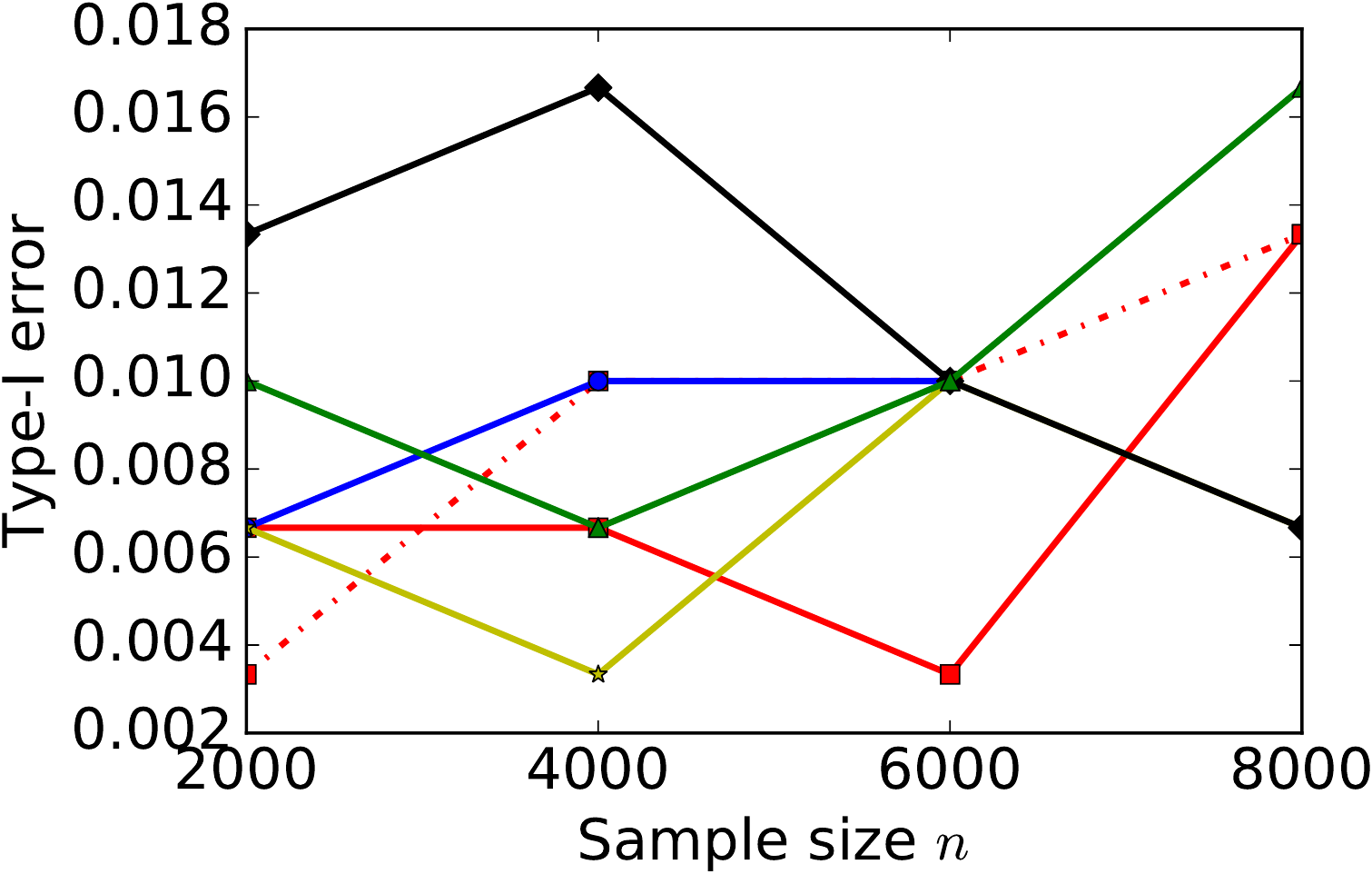}
}

\caption{Probability of rejecting $H_{0}$ as $n$ increases in the Million
Song problem. $\alpha=0.01$.\label{fig:real_h0_vs_n}}
\end{figure}

\section{Redundant Test Locations}

Here, we provide a simple illustration to show that two locations
$\mathbf{t}_{1}=(\mathbf{v}_{1},\mathbf{w}_{1})$ and $\mathbf{t}_{2}=(\mathbf{v}_{2},\mathbf{w}_{2})$
which are too close to each other will reduce the optimization objective.
We consider the Sinusoid problem described in Section \ref{sec:toy_problems}
with $\omega=1$, and use $J=2$ test locations. In Figure \ref{fig:redundant_locs-1},
$\mathbf{t}_{1}$ is fixed at the red star, while $\mathbf{t}_{2}$
is varied along the horizontal line. The objective value $\hat{\lambda}_{n}$
as a function of $(\mathbf{t}_{1},\mathbf{t}_{2})$ is shown in the
bottom figure. It can be seen that $\hat{\lambda}_{n}$ decreases
sharply when $\mathbf{t}_{2}$ is in the neighborhood of $\mathbf{t}_{1}$.
This property implies that two locations which are too close will
not maximize the objective function (i.e., the second feature contains
no additional information when it matches the first). For $J>2$,
the objective sharply decreases if any two locations are in the same
neighborhood.

\begin{figure}[th]
\begin{centering}
\includegraphics[width=0.3\textwidth]{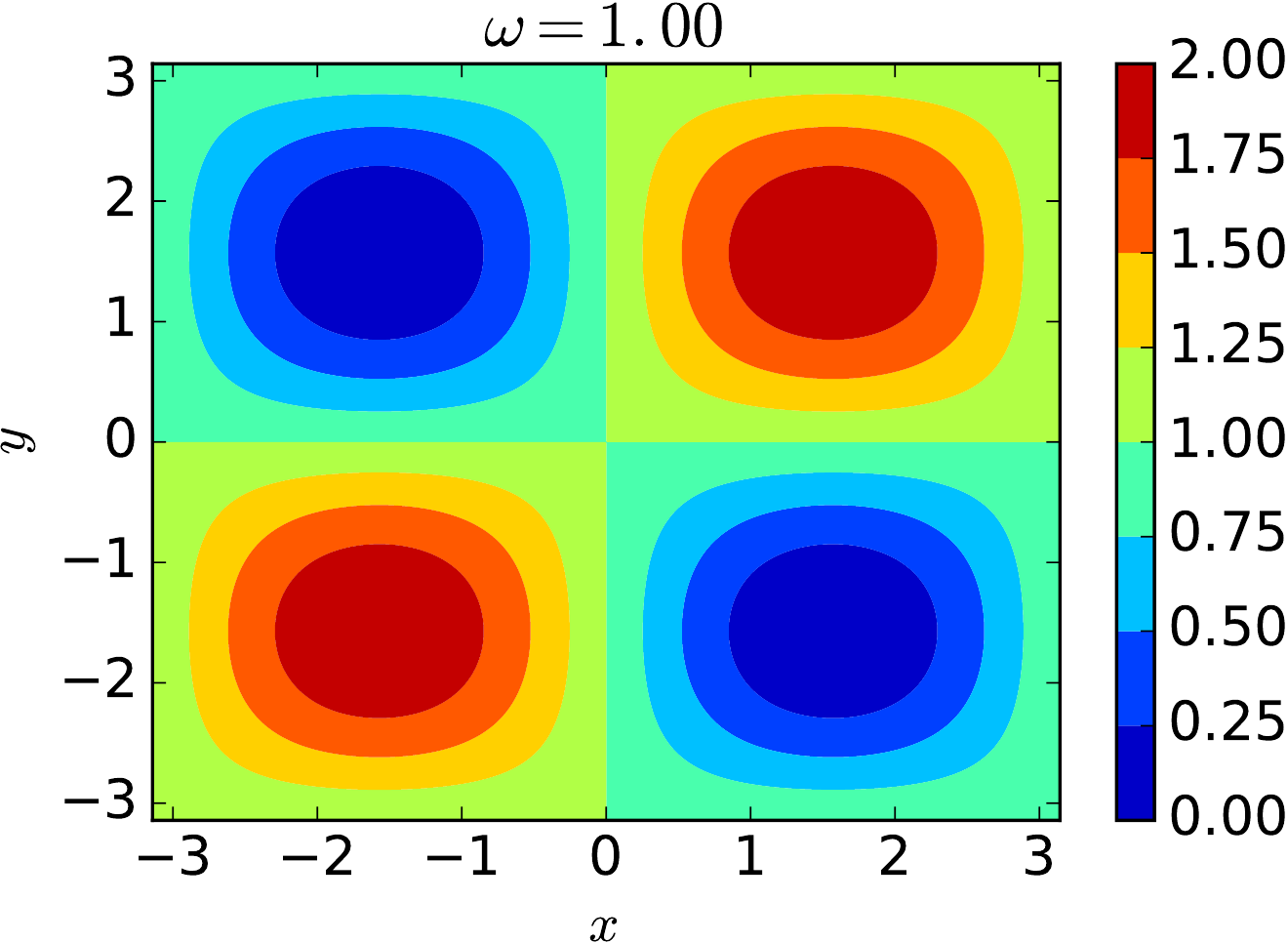}\hspace{1cm}
\includegraphics[width=0.3\textwidth]{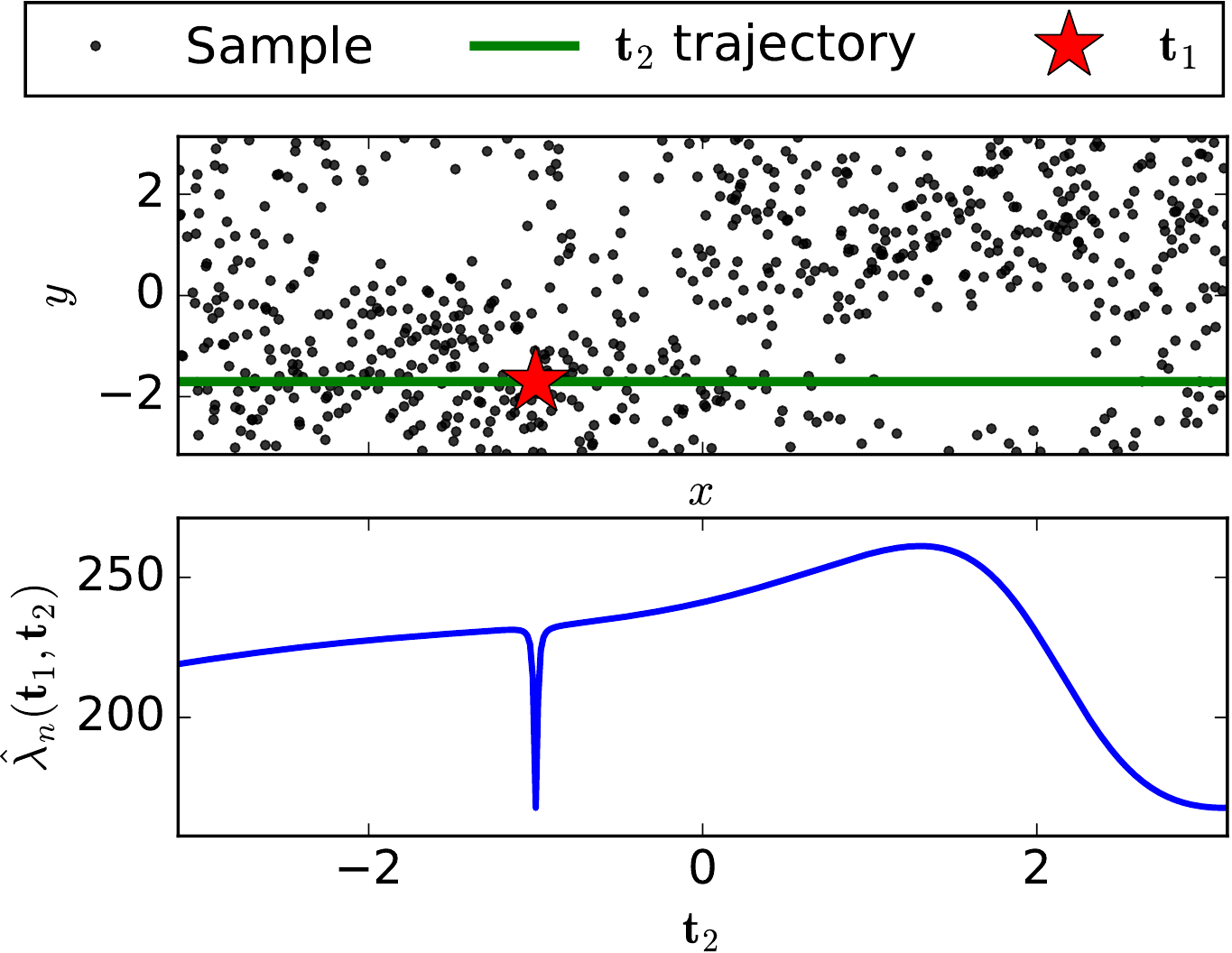} 
\par\end{centering}
\caption{Plot of optimization objective values as location $\mathbf{t}_{2}$
moves along the green line. The objective sharply drops when the two
locations are in the same neighborhood. \label{fig:redundant_locs-1}}
\end{figure}

\section{Test Power vs. $J$\label{sec:pow_vs_J}}

It might seem intuitive that as the number of locations $J$ increases,
the test power should also increase. Here, we empirically show that
this statement is \emph{not }always true. Consider the Sinusoid toy
example described in Section \ref{sec:toy_problems} with $\omega=2$
(also see the left figure of Figure \ref{fig:pow_vs_J}). By construction,
$X$ and $Y$ are dependent in this problem. We run NFSIC test with
a sample size of $n=800$, varying $J$ from $1$ to $600$. For each
value of $J$, the test is repeated for 500 times. In each trial,
the sample is redrawn and the $J$ test locations are drawn from $\mathrm{Uniform}((-\pi,\pi)^{2})$.
There is no optimization of the test locations. We use Gaussian kernels
for both $X$ and $Y$, and use the median heuristic to set the Gaussian
widths to 1.8. Figure \ref{fig:pow_vs_J} shows the test power as
$J$ increases.

\begin{figure}[th]
\begin{centering}
\includegraphics[height=4cm]{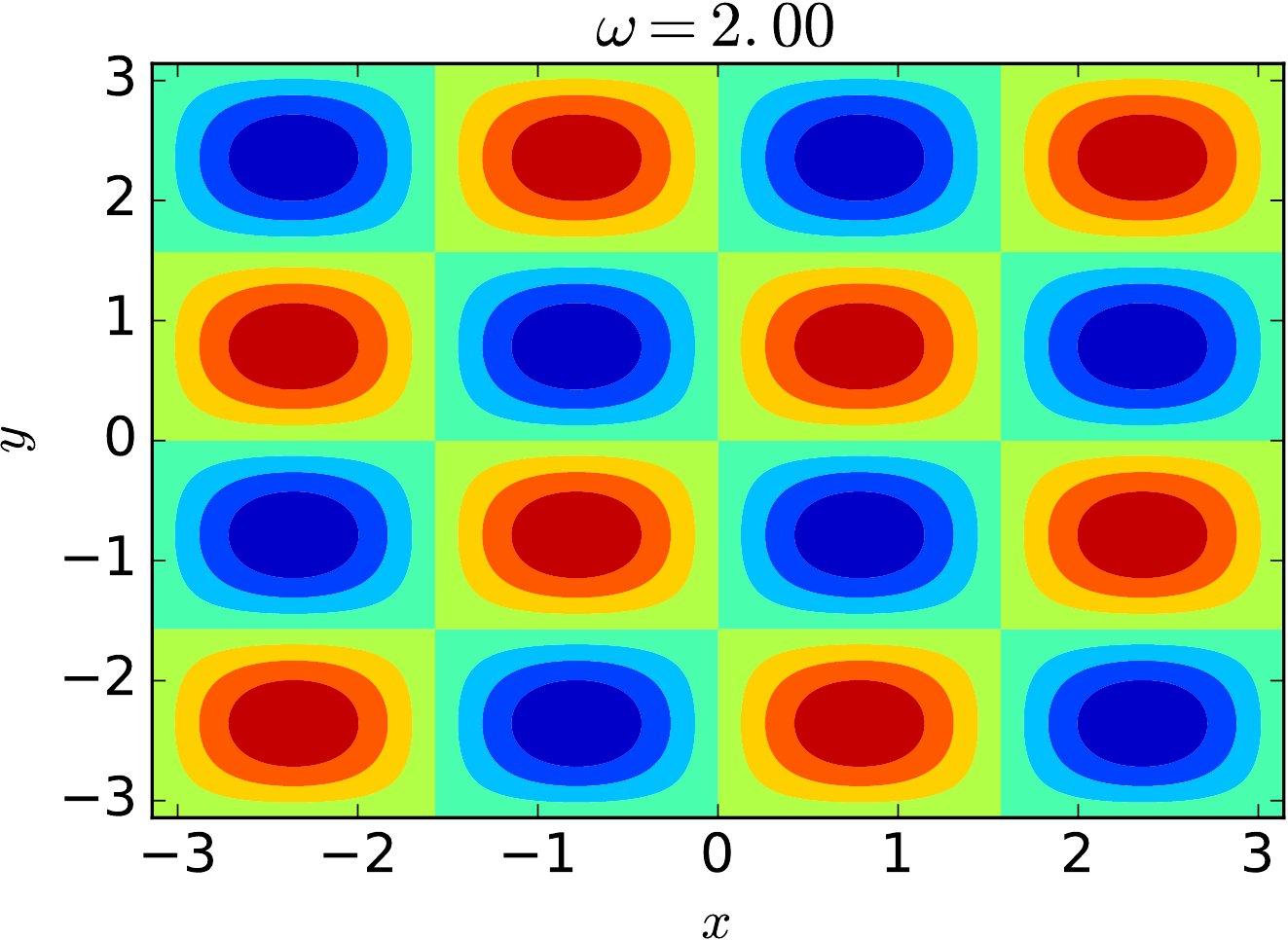} \hspace{1cm}\includegraphics[height=4cm]{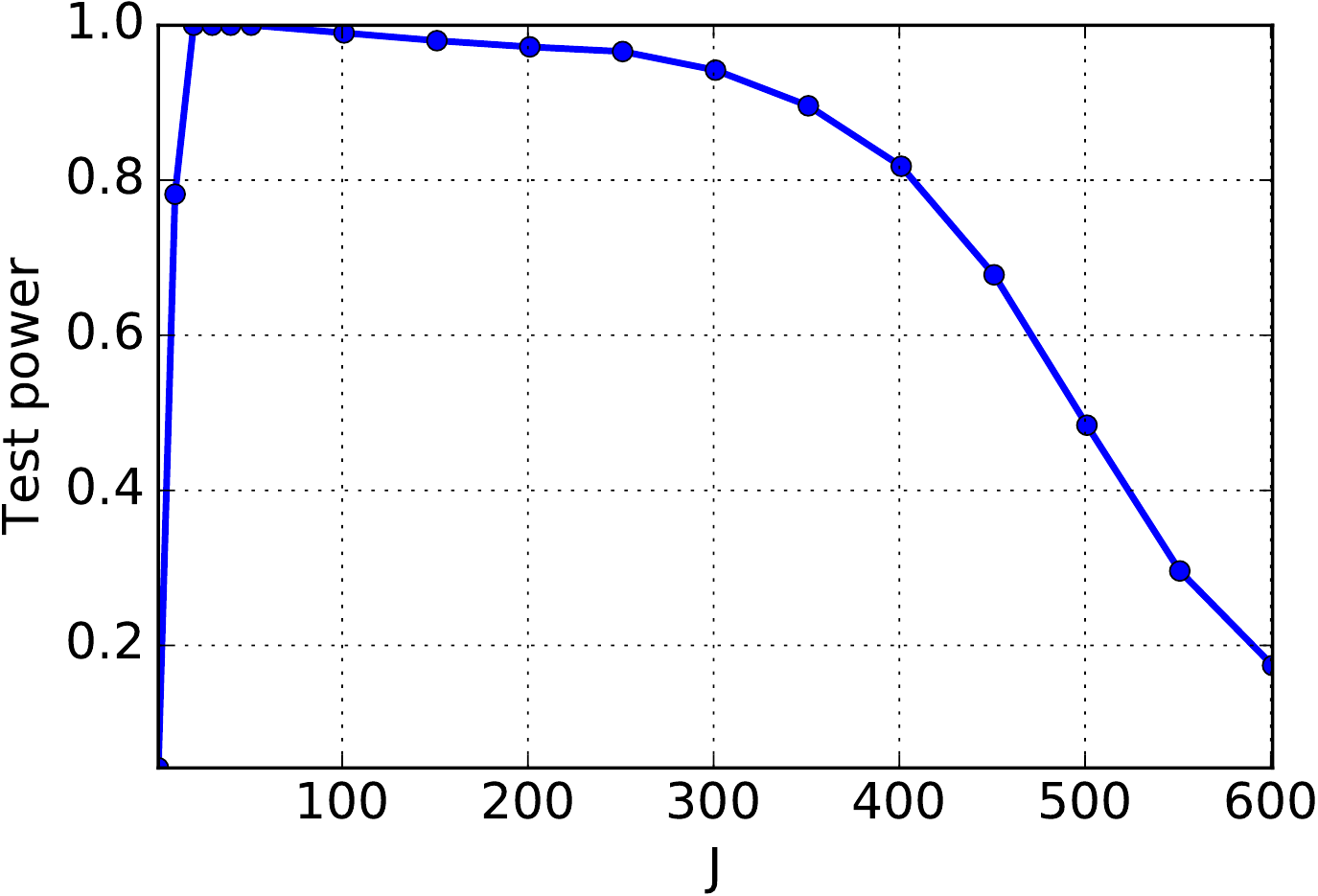}
\par\end{centering}
\caption{The Sinusoid problem and the plot of test power vs. the number of
test locations.\label{fig:pow_vs_J}}
\end{figure}
We observe that the test power does not monotonically increase as
$J$ increases. When $J=1$, the difference of $p_{xy}$ and $p_{x}p_{y}$
cannot be adequately captured, resulting in a low power. The power
increases rapidly to roughly 0.8 at $J=10$, and stays at the maximum
until about $J=100$. Then, the power starts to drop sharply when
$J$ is higher than $400$ in this problem. 

Unlike random Fourier features, the number of test locations in NFSIC
is not the number of Monte Carlo particles used to approximate an
expectation. There is a tradeoff: if the test locations are in key
regions (i.e., regions in which there is a big difference between
$p_{xy}$ and $p_{x}p_{y}$), then they increase power; yet the statistic
gains in variance (thus reducing test power) as $J$ increases. As
can be seen in Figure \ref{fig:pow_vs_J}, there are eight key regions
(in blue) that can reveal the difference of $p_{xy}$ and $p_{x}p_{y}$.
Using an unnecessarily high $J$ not only makes the covariance matrix
$\hat{\mathbf{\Sigma}}$ harder to estimate accurately, it also increases
the computation as the complexity on $J$ is $\mathcal{O}(J^{3})$. 

We note that NFSIC is not intended to be used with a large $J$. In
practice, it should be set to be large enough so as to capture the
key regions as stated. As a practical guide, with optimization of
the test locations, a good starting point is $J=5$ or $10$.

\section{Proof of Proposition \ref{prop:product_gaussian_kers} \label{sec:proof_prod_kgauss_ana}}

Recall Proposition \ref{prop:product_gaussian_kers},
\begin{prop*}[{\small{}A product of Gaussian kernels is characteristic and analytic}]
 \prodgkers{}
\end{prop*}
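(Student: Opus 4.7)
The plan is to reduce both claims—characteristicness and analyticity—to known facts about Gaussian kernels on a single Euclidean space by observing that the product kernel $g$ is itself a Gaussian kernel on the concatenated domain $\mathbb{R}^{d_x+d_y}$. Concretely, writing $\mathbf{z}=(\mathbf{x},\mathbf{y})$, $\mathbf{z}'=(\mathbf{x}',\mathbf{y}')$, and defining the block-diagonal matrix $\mathbf{C}=\mathrm{blockdiag}(\mathbf{A},\mathbf{B})\in\mathbb{R}^{(d_x+d_y)\times(d_x+d_y)}$, the identity
\begin{equation*}
g((\mathbf{x},\mathbf{y}),(\mathbf{x}',\mathbf{y}'))=\exp\!\big(-(\mathbf{z}-\mathbf{z}')^{\top}\mathbf{C}(\mathbf{z}-\mathbf{z}')\big)
\end{equation*}
holds. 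Since $\mathbf{A}$ and $\mathbf{B}$ are positive definite, so is $\mathbf{C}$, and hence $g$ is a (generalized) Gaussian kernel on $\mathbb{R}^{d_x+d_y}\times\mathbb{R}^{d_x+d_y}$.

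For the characteristic property, I would invoke the standard result (e.g.\ \citet[Section 3]{Sriperumbudur2010}) that any Gaussian kernel on $\mathbb{R}^{d}$ with positive definite covariance matrix is characteristic on the full Euclidean space. Applied to $g$ viewed as a Gaussian kernel on $\mathbb{R}^{d_x+d_y}$ with parameter $\mathbf{C}$, this immediately gives that $g$ is characteristic on $(\mathbb{R}^{d_x}\times\mathbb{R}^{d_y})\times(\mathbb{R}^{d_x}\times\mathbb{R}^{d_y})$.

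For the analytic property, I would fix an arbitrary $\mathbf{v}^{*}=(\mathbf{v},\mathbf{w})\in\mathbb{R}^{d_x+d_y}$ and consider $f(\mathbf{z}):=g(\mathbf{z},\mathbf{v}^{*})=\exp(-(\mathbf{z}-\mathbf{v}^{*})^{\top}\mathbf{C}(\mathbf{z}-\mathbf{v}^{*}))$. The map $\mathbf{z}\mapsto-(\mathbf{z}-\mathbf{v}^{*})^{\top}\mathbf{C}(\mathbf{z}-\mathbf{v}^{*})$ is a real polynomial in the coordinates of $\mathbf{z}$, hence real-analytic on $\mathbb{R}^{d_x+d_y}$; composing with the entire function $\exp(\cdot)$ preserves real-analyticity. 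Therefore $f$ is analytic, which by Definition \ref{def:analytic_kernel} means $g$ is an analytic kernel on its domain.

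There is no real obstacle here: the only mildly non-trivial point is recognizing that the product of the two Gaussian kernels collapses to a single Gaussian kernel on the direct-sum space with a block-diagonal precision matrix, after which both conclusions follow from standard results on Gaussian kernels and compositional stability of analyticity. A concluding sentence would assemble the two parts into the statement of the proposition.
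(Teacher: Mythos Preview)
Your proposal is correct and matches the paper's proof essentially line for line: the paper also rewrites $g$ as a single Gaussian kernel on $\mathbb{R}^{d_x+d_y}$ with block-diagonal precision matrix $\mathbf{C}=\mathrm{blockdiag}(\mathbf{A},\mathbf{B})$, then argues characteristicness via \citet{Sriperumbudur2010} (specifically Bochner's theorem plus their Theorem~9, using that the Fourier measure of a Gaussian has full support) and analyticity via the polynomial-composed-with-$\exp$ argument. The only difference is that the paper names the specific Bochner/Theorem~9 route rather than citing the Gaussian-is-characteristic fact directly.
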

\begin{proof}
Let $\mathbf{z}:=(\mathbf{x}^{\top},\mathbf{y}^{\top})^{\top}$ and
$\mathbf{z}':=(\mathbf{x}'^{\top},\mathbf{y}'^{\top})^{\top}$ be
vectors in $\mathbb{R}^{d_{x}+d_{y}}$. We prove by reducing the product
kernel to one Gaussian kernel with $g(\mathbf{z},\mathbf{z}')=\exp\left(-(\mathbf{z}-\mathbf{z}')^{\top}\mathbf{C}(\mathbf{z}-\mathbf{z}')\right)$
where $\mathbf{C}:=\left(\begin{array}{cc}
\mathbf{A} & \mathbf{0}\\
\mathbf{0} & \mathbf{B}
\end{array}\right)$. Write $g(\mathbf{z},\mathbf{z}')=\Psi(\mathbf{z}-\mathbf{z}')$
where $\Psi(\mathbf{t}):=\exp\left(-\mathbf{t}^{\top}\mathbf{C}\mathbf{t}\right)$.
Since $\mathbf{C}$ is positive definite, we see that the finite measure
$\zeta$ corresponding to $\Psi$ as defined in Lemma \ref{lem:bochner}
has support everywhere in $\mathbb{R}^{d_{x}+d_{y}}$. Thus, \citet[Theorem 9]{Sriperumbudur2010}
implies that $g$ is characteristic. 

To see that $g$ is analytic, we observe that for each $\mathbf{z}'\in\mathbb{R}^{d_{x}+d_{y}}$,
$\mathbf{z}\mapsto-(\mathbf{z}-\mathbf{z}')^{\top}\mathbf{C}(\mathbf{z}-\mathbf{z}')$
is a multivariate polynomial in $\mathbf{z}$, which is known to be
analytic. Using the fact that $t\mapsto\exp(t)$ is analytic on $\mathbb{R}$,
and that a composition of analytic functions is analytic, we see that
$\mathbf{z}\mapsto\exp\left(-(\mathbf{z}-\mathbf{z}')^{\top}\mathbf{C}(\mathbf{z}-\mathbf{z}')\right)$
is analytic on $\mathbb{R}^{d_{x}+d_{y}}$ for each $\mathbf{z}'$. 
\end{proof}

\section{Proof of Theorem \ref{thm:nfsic_good_test}\label{sec:proof_nfsic_consistent}}

Recall Theorem \ref{thm:nfsic_good_test}, 

\nfsicgoodtest*
\begin{proof}
Assume that $H_{0}$ holds. The consistency of $\hat{\mathbf{\Sigma}}$
and the continuous mapping theorem imply that $\left(\hat{\boldsymbol{\Sigma}}+\gamma_{n}\mathbf{I}\right)^{-1}\stackrel{p}{\to}\boldsymbol{\Sigma}^{-1}$
which is a constant. Let $\mathbf{a}$ be a random vector in $\mathbb{R}^{J}$
following $\mathcal{N}(\mathbf{0},\boldsymbol{\Sigma})$. By \citet[Theorem 2.7 (v)]{Vaart2000},
it follows that $\left[\sqrt{n}\hat{\mathbf{u}},\left(\hat{\boldsymbol{\Sigma}}+\gamma_{n}\mathbf{I}\right)^{-1}\right]\stackrel{d}{\to}\left[\mathbf{a},\boldsymbol{\Sigma}^{-1}\right]$
where $\mathbf{u}=0$ almost surely by Proposition \ref{prop:fsic_dependence_measure},
and $\sqrt{n}\hat{\mathbf{u}}\stackrel{d}{\to}\mathcal{N}(\mathbf{0},\boldsymbol{\Sigma})$
by Proposition \ref{prop:asymp_u}. Since $f(\mathbf{x},\mathbf{S}):=\mathbf{x}^{\top}\mathbf{S}\mathbf{x}$
is continuous, $f\left(\sqrt{n}\hat{\mathbf{u}},\left(\hat{\boldsymbol{\Sigma}}+\gamma_{n}\mathbf{I}\right)^{-1}\right)\stackrel{d}{\to}f(\mathbf{a},\boldsymbol{\Sigma}^{-1})$.
Equivalently, $n\hat{\mathbf{u}}^{\top}\left(\hat{\boldsymbol{\Sigma}}+\gamma_{n}\mathbf{I}\right)^{-1}\hat{\mathbf{u}}\stackrel{d}{\to}\mathbf{a}^{\top}\boldsymbol{\Sigma}^{-1}\mathbf{a}\sim\chi^{2}(J)$
by \citet[Theorem 3.3.3]{Anderson2003}. This proves the first claim.

The proof of the second claim has a very similar structure to the
proof of Proposition 2 of \citet{Chwialkowski2015}. Assume that $H_{1}$
holds. Then, $\mathbf{u}\neq\mathbf{0}$ almost surely by Proposition
\ref{prop:fsic_dependence_measure}. Since $k$ and $l$ are bounded,
it follows that $|h_{\mathbf{t}}(\mathbf{z},\mathbf{z}')|\le2B_{k}B_{l}$
for any $\mathbf{z},\mathbf{z}'$ (see (\ref{eq:ustat_core_bound})),
and we have that $\hat{\mathbf{u}}\stackrel{a.s.}{\to}\mathbf{u}$
by \citet[Section 5.4, Theorem A]{Serfling2009}. Thus, $\hat{\mathbf{u}}^{\top}\left(\hat{\boldsymbol{\Sigma}}+\gamma_{n}\mathbf{I}\right)^{-1}\hat{\mathbf{u}}-\frac{r}{n}\stackrel{d}{\to}\mathbf{u}^{\top}\mathbf{\Sigma}^{-1}\mathbf{u}$
by the continuous mapping theorem, and the consistency of $\hat{\mathbf{\Sigma}}$.
Consequently, 
\begin{align*}
 & \lim_{n\to\infty}\mathbb{P}\left(\hat{\lambda}_{n}\ge r\right)\\
 & =1-\lim_{n\to\infty}\mathbb{P}\left(\hat{\mathbf{u}}^{\top}\left(\hat{\boldsymbol{\Sigma}}+\gamma_{n}\mathbf{I}\right)^{-1}\hat{\mathbf{u}}-\frac{r}{n}<0\right)\\
 & \stackrel{(a)}{=}1-\mathbb{P}\left(\mathbf{u}^{\top}\mathbf{\Sigma}^{-1}\mathbf{u}<0\right)\stackrel{(b)}{=}1,
\end{align*}
where at $(a)$ we use the Portmanteau theorem \citep[Lemma 2.2 (i)]{Vaart2000}
guaranteeing that $x_{n}\stackrel{d}{\to x}$ if and only if $\mathbb{P}(x_{n}<t)\to\mathbb{P}(x<t)$
for all continuity points of $t\mapsto\mathbb{P}(x<t)$.\textcolor{red}{{}
}Step $(b)$ is justified by noting that the covariance matrix $\boldsymbol{\Sigma}$
is positive definite so that $\mathbf{u}^{\top}\mathbf{\Sigma}^{-1}\mathbf{u}>0$,
and $t\mapsto\mathbb{P}(\mathbf{u}^{\top}\mathbf{\Sigma}^{-1}\mathbf{u}<t)$
(a step function) is continuous at $0$. 
\end{proof}

\section{Proof of Theorem \ref{thm:lower_bound_pow}\label{sec:proof_lb_pow}}

Recall Theorem \ref{thm:lower_bound_pow},

\lbpow*

\paragraph{Overview of the proof}

We first derive a probabilistic bound for $|\hat{\lambda}_{n}-\lambda_{n}|/n$.
The bound is in turn upper bounded by an expression involving $\|\hat{\mathbf{u}}-\mathbf{u}\|_{2}$
and $\|\hat{\mathbf{\Sigma}}-\mathbf{\Sigma}\|_{F}$. The difference
$\|\hat{\mathbf{u}}-\mathbf{u}\|_{2}$ can be bounded by applying
the bound for U-statistics given in \citet[Theorem A, p. 201]{Serfling2009}.
For $\|\hat{\mathbf{\Sigma}}-\mathbf{\Sigma}\|_{F}$, we decompose
it into a sum of smaller components, and bound each term with a product
variant of the Hoeffding's inequality (Lemma \ref{lem:bound_product_diff}).
$L(\lambda_{n})$ is obtained by combining all the bounds with the
union bound. 

\subsection{Notations}

Let $\left\langle \mathbf{A},\mathbf{B}\right\rangle _{F}:=\mathrm{tr}(\mathbf{A}^{\top}\mathbf{B})$
denote the Frobenius inner product, and $\|\mathbf{A}\|_{F}:=\sqrt{\mathrm{tr}(\mathbf{A}^{\top}\mathbf{A})}$
be the Frobenius norm. Write $\mathbf{z}:=(\mathbf{x},\mathbf{y})$
to denote a pair of points from $\mathcal{X}\times\mathcal{Y}$. We
write $\mathbf{t}:=(\mathbf{v},\mathbf{w})$ to denote a pair of test
locations from $\mathcal{X}\times\mathcal{Y}$. For brevity, an expectation
over $(\mathbf{x},\mathbf{y})$ (i.e., $\mathbb{E}_{(\mathbf{x},\mathbf{y})\sim P_{xy}}$)
will be written as $\mathbb{E}_{\mathbf{z}}$ or $\mathbb{E}_{\mathbf{x}\mathbf{y}}$.
Define $\tilde{k}(\mathbf{x},\mathbf{v}):=k(\mathbf{x},\mathbf{v})-\mathbb{E}_{\mathbf{x}'}k(\mathbf{x}',\mathbf{v})$,
and $\tilde{l}(\mathbf{y},\mathbf{w}):=l(\mathbf{y},\mathbf{w})-\mathbb{E}_{\mathbf{y}'}l(\mathbf{y}',\mathbf{w})$.
Let $B_{2}(r):=\{\mathbf{x}\mid\|\mathbf{x}\|_{2}\le r\}$ be a closed
ball with radius $r$ centered at the origin. Similarly, define $B_{F}(r):=\{\mathbf{A}\mid\|\mathbf{A}\|_{F}\le r\}$
to be a closed ball with radius\textbf{ }$r$ of $J\times J$ matrices
under the Frobenius norm. Denote the max operation by $(x_{1},\ldots,x_{m})_{+}=\max(x_{1},\ldots,x_{m})$.

For a product of marginal mean embeddings $\mu_{x}(\mathbf{v})\mu_{y}(\mathbf{w})$,
we write $\widehat{\mu_{x}\mu_{y}}(\mathbf{v},\mathbf{w}):=\frac{1}{n(n-1)}\sum_{i=1}^{n}\sum_{j\neq i}k(\mathbf{x}_{i},\mathbf{v})l(\mathbf{y}_{j},\mathbf{w})$
to denote the unbiased plug-in estimator, and write $\hat{\mu}_{x}(\mathbf{v})\hat{\mu}_{y}(\mathbf{w}):=\frac{1}{n}\sum_{i=1}^{n}k(\mathbf{x}_{i},\mathbf{v})\frac{1}{n}\sum_{j=1}^{n}l(\mathbf{y}_{j},\mathbf{w})$
which is a biased estimator. Define $\hat{u}^{b}(\mathbf{v},\mathbf{w}):=\hat{\mu}_{xy}(\mathbf{v},\mathbf{w})-\hat{\mu}_{x}(\mathbf{v})\hat{\mu}_{y}(\mathbf{w})$
so that $\hat{\mathbf{u}}^{b}:=\left(\hat{u}^{b}(\mathbf{t}_{1}),\ldots,\hat{u}^{b}(\mathbf{t}_{J})\right)^{\top}$
where the superscript $b$ stands for ``biased''. To avoid confusing
with a positive definite kernel, we will refer to a U-statistic kernel
as a \emph{core}. 

\subsection{Proof}

We will first derive a bound for $\mathbb{P}(|\hat{\lambda}_{n}-\lambda_{n}|\ge t)$,
which will then be reparametrized to get a bound for the target quantity
$\mathbb{P}(\hat{\lambda}_{n}\ge r)$. We closely follow the proof
in \citetsup[Section C.1]{Jitkrittum2016} up to (\ref{eq:bound_lamb_diff1}),
then we diverge. We start by considering $|\hat{\lambda}_{n}-\lambda_{n}|/n$.
\begin{align*}
|\hat{\lambda}_{n}-\lambda_{n}|/n & =\left|\hat{\mathbf{u}}^{\top}(\hat{\boldsymbol{\Sigma}}+\gamma_{n}\mathbf{I})^{-1}\hat{\mathbf{u}}-\mathbf{u}^{\top}\boldsymbol{\Sigma}^{-1}\mathbf{u}\right|\\
 & =\left|\hat{\mathbf{u}}^{\top}\left(\hat{\boldsymbol{\Sigma}}+\gamma_{n}\mathbf{I}\right)^{-1}\hat{\mathbf{u}}-\mathbf{u}^{\top}\left(\boldsymbol{\Sigma}+\gamma_{n}\mathbf{I}\right)^{-1}\mathbf{u}+\mathbf{u}^{\top}\left(\boldsymbol{\Sigma}+\gamma_{n}\mathbf{I}\right)^{-1}\mathbf{u}-\mathbf{u}^{\top}\boldsymbol{\Sigma}^{-1}\mathbf{u}\right|\\
 & \le\left|\hat{\mathbf{u}}^{\top}\left(\hat{\boldsymbol{\Sigma}}+\gamma_{n}\mathbf{I}\right)^{-1}\hat{\mathbf{u}}-\mathbf{u}^{\top}\left(\boldsymbol{\Sigma}+\gamma_{n}\mathbf{I}\right)^{-1}\mathbf{u}\right|+\left|\mathbf{u}^{\top}\left(\boldsymbol{\Sigma}+\gamma_{n}\mathbf{I}\right)^{-1}\mathbf{u}-\mathbf{u}^{\top}\boldsymbol{\Sigma}^{-1}\mathbf{u}\right|\\
 & :=\left(\bigstar\right)_{1}+\left(\bigstar\right)_{2}.
\end{align*}
We next bound $(\bigstar_{1})$ and $(\bigstar_{2})$ separately.

\begin{align}
(\bigstar)_{1} & =\left|\hat{\mathbf{u}}^{\top}\left(\hat{\boldsymbol{\Sigma}}+\gamma_{n}\mathbf{I}\right)^{-1}\hat{\mathbf{u}}-\mathbf{u}^{\top}\left(\boldsymbol{\Sigma}+\gamma_{n}\mathbf{I}\right)^{-1}\mathbf{u}\right|\nonumber \\
 & =\left|\hat{\mathbf{u}}^{\top}\left(\hat{\boldsymbol{\Sigma}}+\gamma_{n}\mathbf{I}\right)^{-1}\hat{\mathbf{u}}-\hat{\mathbf{u}}^{\top}\left(\boldsymbol{\Sigma}+\gamma_{n}\mathbf{I}\right)^{-1}\hat{\mathbf{u}}+\hat{\mathbf{u}}^{\top}\left(\boldsymbol{\Sigma}+\gamma_{n}\mathbf{I}\right)^{-1}\hat{\mathbf{u}}-\mathbf{u}^{\top}\left(\boldsymbol{\Sigma}+\gamma_{n}\mathbf{I}\right)^{-1}\mathbf{u}\right|\nonumber \\
 & \le\left|\hat{\mathbf{u}}^{\top}\left(\hat{\boldsymbol{\Sigma}}+\gamma_{n}\mathbf{I}\right)^{-1}\hat{\mathbf{u}}-\hat{\mathbf{u}}^{\top}\left(\boldsymbol{\Sigma}+\gamma_{n}\mathbf{I}\right)^{-1}\hat{\mathbf{u}}\right|+\left|\hat{\mathbf{u}}^{\top}\left(\boldsymbol{\Sigma}+\gamma_{n}\mathbf{I}\right)^{-1}\hat{\mathbf{u}}-\mathbf{u}^{\top}\left(\boldsymbol{\Sigma}+\gamma_{n}\mathbf{I}\right)^{-1}\mathbf{u}\right|\nonumber \\
 & =\left|\left\langle \hat{\mathbf{u}}\hat{\mathbf{u}}^{\top},\left(\hat{\boldsymbol{\Sigma}}+\gamma_{n}\mathbf{I}\right)^{-1}-\left(\boldsymbol{\Sigma}+\gamma_{n}\mathbf{I}\right)^{-1}\right\rangle _{F}\right|+\left|\left\langle \hat{\mathbf{u}}\hat{\mathbf{u}}^{\top}-\mathbf{u}\mathbf{u}^{\top},\left(\boldsymbol{\Sigma}+\gamma_{n}\mathbf{I}\right)^{-1}\right\rangle _{F}\right|\nonumber \\
 & \le\|\hat{\mathbf{u}}\hat{\mathbf{u}}^{\top}\|_{F}\|(\hat{\boldsymbol{\Sigma}}+\gamma_{n}\mathbf{I})^{-1}-(\boldsymbol{\Sigma}+\gamma_{n}\mathbf{I})^{-1}\|_{F}+\|\hat{\mathbf{u}}\hat{\mathbf{u}}^{\top}-\mathbf{u}\mathbf{u}^{\top}\|_{F}\|(\boldsymbol{\Sigma}+\gamma_{n}\mathbf{I})^{-1}\|_{F}\nonumber \\
 & =\|\hat{\mathbf{u}}\hat{\mathbf{u}}^{\top}\|_{F}\|(\hat{\boldsymbol{\Sigma}}+\gamma_{n}\mathbf{I})^{-1}[(\boldsymbol{\Sigma}+\gamma_{n}\mathbf{I})-(\hat{\boldsymbol{\Sigma}}+\gamma_{n}\mathbf{I})](\boldsymbol{\Sigma}+\gamma_{n}\mathbf{I})^{-1}\|_{F}+\|\hat{\mathbf{u}}\hat{\mathbf{u}}^{\top}-\hat{\mathbf{u}}\mathbf{u}^{\top}+\hat{\mathbf{u}}\mathbf{u}^{\top}-\mathbf{u}\mathbf{u}^{\top}\|_{F}\|(\boldsymbol{\Sigma}+\gamma_{n}\mathbf{I})^{-1}\|_{F}\nonumber \\
 & \stackrel{(a)}{\le}\|\hat{\mathbf{u}}\hat{\mathbf{u}}^{\top}\|_{F}\|(\hat{\boldsymbol{\Sigma}}+\gamma_{n}\mathbf{I})^{-1}\|_{F}\|\boldsymbol{\Sigma}-\hat{\boldsymbol{\Sigma}}\|_{F}\|\boldsymbol{\Sigma}^{-1}\|_{F}+\|\hat{\mathbf{u}}\hat{\mathbf{u}}^{\top}-\hat{\mathbf{u}}\mathbf{u}^{\top}+\hat{\mathbf{u}}\mathbf{u}^{\top}-\mathbf{u}\mathbf{u}^{\top}\|_{F}\|\boldsymbol{\Sigma}^{-1}\|_{F}\nonumber \\
 & \stackrel{(b)}{\le}\frac{\sqrt{J}}{\gamma_{n}}\|\hat{\mathbf{u}}\|_{2}^{2}\|\boldsymbol{\Sigma}-\hat{\boldsymbol{\Sigma}}\|_{F}\|\boldsymbol{\Sigma}^{-1}\|_{F}+\left(\|\hat{\mathbf{u}}(\hat{\mathbf{u}}-\mathbf{u})^{\top}\|_{F}+\|(\hat{\mathbf{u}}-\mathbf{u})\mathbf{u}^{\top}\|_{F}\right)\|\boldsymbol{\Sigma}^{-1}\|_{F}\nonumber \\
 & \le\frac{\sqrt{J}}{\gamma_{n}}\|\hat{\mathbf{u}}\|_{2}^{2}\|\boldsymbol{\Sigma}-\hat{\boldsymbol{\Sigma}}\|_{F}\|\boldsymbol{\Sigma}^{-1}\|_{F}+\left(\|\hat{\mathbf{u}}\|_{2}+\|\mathbf{u}\|_{2}\right)\|\hat{\mathbf{u}}-\mathbf{u}\|_{2}\|\boldsymbol{\Sigma}^{-1}\|_{F},\label{eq:bound_star1}
\end{align}
where at $(a)$ we used $\|(\boldsymbol{\Sigma}+\gamma_{n}\mathbf{I})^{-1}\|_{F}\le\|\boldsymbol{\Sigma}^{-1}\|_{F}$,
at $(b)$ we used $\|(\hat{\boldsymbol{\Sigma}}+\gamma_{n}\mathbf{I})^{-1}\|_{F}\le\sqrt{J}\|(\hat{\boldsymbol{\Sigma}}+\gamma_{n}\mathbf{I})^{-1}\|_{2}\le\sqrt{J}/\gamma_{n}$.

For $(\bigstar)_{2}$, we have
\begin{align}
(\bigstar)_{2} & =\left|\mathbf{u}^{\top}\left(\boldsymbol{\Sigma}+\gamma_{n}\mathbf{I}\right)^{-1}\mathbf{u}-\mathbf{u}^{\top}\boldsymbol{\Sigma}^{-1}\mathbf{u}\right|\nonumber \\
 & =\left|\left\langle \mathbf{u}\mathbf{u}^{\top},(\boldsymbol{\Sigma}+\gamma_{n}\mathbf{I})^{-1}-\boldsymbol{\Sigma}^{-1}\right\rangle _{F}\right|\nonumber \\
 & \le\|\mathbf{u}\mathbf{u}^{\top}\|_{F}\|(\boldsymbol{\Sigma}+\gamma_{n}\mathbf{I})^{-1}-\boldsymbol{\Sigma}^{-1}\|_{F}\nonumber \\
 & =\|\mathbf{u}\|_{2}^{2}\|(\boldsymbol{\Sigma}+\gamma_{n}\mathbf{I})^{-1}\left[\boldsymbol{\Sigma}-(\boldsymbol{\Sigma}+\gamma_{n}\mathbf{I})\right]\boldsymbol{\Sigma}^{-1}\|_{F}\nonumber \\
 & \le\gamma_{n}\|\mathbf{u}\|_{2}^{2}\|(\boldsymbol{\Sigma}+\gamma_{n}\mathbf{I})^{-1}\|_{F}\|\boldsymbol{\Sigma}^{-1}\|_{F}\nonumber \\
 & \stackrel{(a)}{\le}\gamma_{n}\|\mathbf{u}\|_{2}^{2}\|\boldsymbol{\Sigma}^{-1}\|_{F}^{2},\label{eq:bound_star2}
\end{align}
where at $(a)$ we used $\|(\boldsymbol{\Sigma}+\gamma_{n}\mathbf{I})^{-1}\|_{F}\le\|\boldsymbol{\Sigma}^{-1}\|_{F}$.

Combining (\ref{eq:bound_star1}) and (\ref{eq:bound_star2}), we
have 
\begin{align}
 & \left|\hat{\mathbf{u}}^{\top}(\hat{\boldsymbol{\Sigma}}+\gamma_{n}\mathbf{I})^{-1}\hat{\mathbf{u}}-\mathbf{u}^{\top}\boldsymbol{\Sigma}^{-1}\mathbf{u}\right|\nonumber \\
 & \le\frac{\sqrt{J}}{\gamma_{n}}\|\hat{\mathbf{u}}\|^{2}\|\boldsymbol{\Sigma}-\hat{\boldsymbol{\Sigma}}\|_{F}\|\boldsymbol{\Sigma}^{-1}\|_{F}+\left(\|\hat{\mathbf{u}}\|_{2}+\|\mathbf{u}\|_{2}\right)\|\hat{\mathbf{u}}-\mathbf{u}\|_{2}\|\boldsymbol{\Sigma}^{-1}\|_{F}+\gamma_{n}\|\mathbf{u}\|_{2}^{2}\|\boldsymbol{\Sigma}^{-1}\|_{F}^{2}.\label{eq:bound_uu1}
\end{align}

\paragraph{Bounding $\|\hat{\mathbf{u}}\|_{2}^{2}$ and $\|\mathbf{u}\|_{2}^{2}$}

Here, we show that by the boundedness of the kernels $k$ and $l$,
it follows that $\|\hat{\mathbf{u}}\|_{2}^{2}$ is bounded. Recall
that $\sup_{\mathbf{x},\mathbf{x}'\in\mathcal{X}}|k(\mathbf{x},\mathbf{x}')|\le B_{k}$,
$\sup_{\mathbf{y},\mathbf{y}'}|l(\mathbf{y},\mathbf{y}')|\le B_{l}$,
our notation $\mathbf{t}=(\mathbf{v},\mathbf{w})$ for the test locations,
and $\mathbf{z}_{i}:=(\mathbf{x}_{i},\mathbf{y}_{i})$. We first show
that the U-statistic core $h$ is bounded. 
\begin{align}
\left|h_{\mathbf{t}}((\mathbf{x},\mathbf{y}),(\mathbf{x}',\mathbf{y}'))\right| & =\left|\frac{1}{2}(k(\mathbf{x},\mathbf{v})-k(\mathbf{x}',\mathbf{v}))(l(\mathbf{y},\mathbf{w})-l(\mathbf{y}',\mathbf{w}))\right|\nonumber \\
 & \le\frac{1}{2}\left(|k(\mathbf{x},\mathbf{v})|+|k(\mathbf{x}',\mathbf{v})|\right)\left(|l(\mathbf{y},\mathbf{w})|+|l(\mathbf{y}',\mathbf{w})|\right)\nonumber \\
 & \le2B_{k}B_{l}:=2B,\label{eq:ustat_core_bound}
\end{align}
where we define $B:=B_{k}B_{l}$. It follows that 
\begin{align}
\|\hat{\mathbf{u}}\|_{2}^{2} & =\sum_{m=1}^{J}\left[\frac{2}{n(n-1)}\sum_{i<j}h_{\mathbf{t}_{m}}(\mathbf{z}_{i},\mathbf{z}_{j})\right]^{2}\le\sum_{m=1}^{J}\left[2B_{k}B_{l}\right]^{2}=4B^{2}J,\label{eq:bound_uhat}\\
\|\mathbf{u}\|_{2}^{2} & =\sum_{m=1}^{J}\left[\mathbb{E}_{\mathbf{z}}\mathbb{E}_{\mathbf{z}'}h_{\mathbf{t}_{m}}(\mathbf{z},\mathbf{z}')\right]^{2}\le4B^{2}J.\label{eq:bound_u}
\end{align}

Using the upper bounds on $\|\hat{\mathbf{u}}\|_{2}^{2}$, $\|\mathbf{u}\|_{2}^{2}$
,(\ref{eq:bound_uu1}) and the definition of $\tilde{c}$, we have
\begin{align}
 & \left|\hat{\mathbf{u}}^{\top}(\hat{\boldsymbol{\Sigma}}+\gamma_{n}\mathbf{I})^{-1}\hat{\mathbf{u}}-\mathbf{u}^{\top}\boldsymbol{\Sigma}^{-1}\mathbf{u}\right|\nonumber \\
 & \le\frac{\sqrt{J}}{\gamma_{n}}4B^{2}J\tilde{c}\|\boldsymbol{\Sigma}-\hat{\boldsymbol{\Sigma}}\|_{F}+4B\sqrt{J}\tilde{c}\|\hat{\mathbf{u}}-\mathbf{u}\|_{2}+4B^{2}J\tilde{c}^{2}\gamma_{n}\nonumber \\
 & =:\frac{c_{1}}{\gamma_{n}}\|\boldsymbol{\Sigma}-\hat{\boldsymbol{\Sigma}}\|_{F}+c_{2}\|\hat{\mathbf{u}}-\mathbf{u}\|_{2}+c_{3}\gamma_{n},\label{eq:bound_uu2}
\end{align}
where we define $c_{1}:=4B^{2}J\sqrt{J}\tilde{c}$, $c_{2}:=4B\sqrt{J}\tilde{c}$,
and $c_{3}:=4B^{2}J\tilde{c}^{2}$. This upper bound implies that
\begin{align}
|\hat{\lambda}_{n}-\lambda_{n}| & \le\frac{c_{1}}{\gamma_{n}}n\|\boldsymbol{\Sigma}-\hat{\boldsymbol{\Sigma}}\|_{F}+c_{2}n\|\hat{\mathbf{u}}-\mathbf{u}\|_{2}+c_{3}n\gamma_{n}.\label{eq:bound_lamb_diff1}
\end{align}
We will separately upper bound $\|\boldsymbol{\Sigma}-\hat{\boldsymbol{\Sigma}}\|_{F}$
and $\|\hat{\mathbf{u}}-\mathbf{u}\|_{2}$, and combine them with
a union bound.

\subsubsection{Bounding $\|\hat{\mathbf{u}}-\mathbf{u}\|_{2}$}

Let $\mathbf{t}^{*}=\arg\max_{\mathbf{t}\in\{\mathbf{t}_{1},\ldots,\mathbf{t}_{J}\}}|\hat{u}(\mathbf{t})-u(\mathbf{t})|$.
Recall that $\mathbf{u}=(u(\mathbf{t}_{1}),\ldots,u(\mathbf{t}_{J}))^{\top}=(u_{1},\ldots,u_{J})^{\top}$.
\begin{align}
\|\hat{\mathbf{u}}-\mathbf{u}\|_{2} & =\sup_{\mathbf{b}\in B_{2}(1)}\left\langle \mathbf{b},\hat{\mathbf{u}}-\mathbf{u}\right\rangle _{2}\le\sup_{\mathbf{b}\in B_{2}(1)}\sum_{j=1}^{J}|b_{j}||\hat{u}(\mathbf{t}_{j})-u(\mathbf{t}_{j})|\nonumber \\
 & \le|\hat{u}(\mathbf{t}^{*})-u(\mathbf{t}^{*})|\sup_{\mathbf{b}\in B_{2}(1)}\sum_{j=1}^{J}|b_{j}|\nonumber \\
 & \stackrel{(a)}{\le}\sqrt{J}|\hat{u}(\mathbf{t}^{*})-u(\mathbf{t}^{*})|\sup_{\mathbf{b}\in B_{2}(1)}\|\mathbf{b}\|_{2}\nonumber \\
 & =\sqrt{J}|\hat{u}(\mathbf{t}^{*})-u(\mathbf{t}^{*})|,\label{eq:bound_udiff1}
\end{align}
where at $(a)$ we used $\|\mathbf{a}\|_{1}\le\sqrt{J}\|\mathbf{a}\|_{2}$
for any $\mathbf{a}\in\mathbb{R}^{J}$. From (\ref{eq:bound_udiff1}),
it can be seen that bounding $\|\hat{\mathbf{u}}-\mathbf{u}\|_{2}$
amounts to bounding the difference of a U-statistic $\hat{u}(\mathbf{t}^{*})$
(see (\ref{eq:ustat})) to its expectation $u(\mathbf{t}^{*})$. Combining
(\ref{eq:bound_udiff1}) and (\ref{eq:bound_lamb_diff1}), we have
\begin{align}
|\hat{\lambda}_{n}-\lambda_{n}| & \le\frac{c_{1}}{\gamma_{n}}n\|\boldsymbol{\Sigma}-\hat{\boldsymbol{\Sigma}}\|_{F}+c_{2}n\sqrt{J}|\hat{u}(\mathbf{t}^{*})-u(\mathbf{t}^{*})|+c_{3}n\gamma_{n}.\label{eq:bound_lamb_diff2}
\end{align}

\subsubsection{Bounding $\|\hat{\mathbf{\Sigma}}-\mathbf{\Sigma}\|_{F}$\label{sec:bound_sigma}}

The plan is to write $\hat{\mathbf{\Sigma}}=\hat{\mathbf{S}}-\hat{\mathbf{u}}^{b}\hat{\mathbf{u}}^{b\top},\mathbf{\Sigma}=\mathbf{S}-\mathbf{u}\mathbf{u}^{\top},$
so that $\|\hat{\boldsymbol{\Sigma}}-\boldsymbol{\Sigma}\|_{F}\le\|\hat{\mathbf{S}}-\mathbf{S}\|_{F}+\|\hat{\mathbf{u}}^{b}\hat{\mathbf{u}}^{b\top}-\mathbf{u}\mathbf{u}^{\top}\|_{F}$
and bound separately $\|\hat{\mathbf{S}}-\mathbf{S}\|_{F}$ and $\|\hat{\mathbf{u}}^{b}\hat{\mathbf{u}}^{b\top}-\mathbf{u}\mathbf{u}^{\top}\|_{F}$. 

Recall that $\Sigma_{ij}=\eta(\mathbf{t}_{i},\mathbf{t}_{j}),\eta(\mathbf{t},\mathbf{t}')=\mathbb{E}_{\mathbf{x}\mathbf{y}}[\big(\tilde{k}(\mathbf{x},\mathbf{v})\tilde{l}(\mathbf{y},\mathbf{w})-u(\mathbf{v},\mathbf{w})\big)\big(\tilde{k}(\mathbf{x},\mathbf{v}')\tilde{l}(\mathbf{y},\mathbf{w}')-u(\mathbf{v}',\mathbf{w}')\big)]$
where $\tilde{k}(\mathbf{x},\mathbf{v})=k(\mathbf{x},\mathbf{v})-\mathbb{E}_{\mathbf{x}'}k(\mathbf{x}',\mathbf{v})$,
and $\tilde{l}(\mathbf{y},\mathbf{w})=l(\mathbf{y},\mathbf{w})-\mathbb{E}_{\mathbf{y}'}l(\mathbf{y}',\mathbf{w})$.
Its empirical estimator (see Proposition \ref{prop:estimators}) is
$\hat{\Sigma}_{ij}=\hat{\eta}(\mathbf{t}_{i},\mathbf{t}_{j})$ where
\begin{align*}
\hat{\eta}(\mathbf{t},\mathbf{t}') & =\frac{1}{n}\sum_{i=1}^{n}[\big(\overline{k}(\mathbf{x}_{i},\mathbf{v})\overline{l}(\mathbf{y}_{i},\mathbf{w})-\hat{u}^{b}(\mathbf{v},\mathbf{w})\big)\big(\overline{k}(\mathbf{x}_{i},\mathbf{v}')\overline{l}(\mathbf{y}_{i},\mathbf{w}')-\hat{u}^{b}(\mathbf{v}',\mathbf{w}')\big)]\\
 & =\frac{1}{n}\sum_{i=1}^{n}\overline{k}(\mathbf{x}_{i},\mathbf{v})\overline{l}(\mathbf{y}_{i},\mathbf{w})\overline{k}(\mathbf{x}_{i},\mathbf{v}')\overline{l}(\mathbf{y}_{i},\mathbf{w}')-\hat{u}^{b}(\mathbf{v},\mathbf{w})\hat{u}^{b}(\mathbf{v}',\mathbf{w}'),
\end{align*}
$\overline{k}(\mathbf{x},\mathbf{v}):=k(\mathbf{x},\mathbf{v})-\frac{1}{n}\sum_{i=1}^{n}k(\mathbf{x}_{i},\mathbf{v})$,
and $\overline{l}(\mathbf{y},\mathbf{w}):=l(\mathbf{y},\mathbf{w})-\frac{1}{n}\sum_{i=1}^{n}l(\mathbf{y}_{i},\mathbf{w})$.
We note that $\frac{1}{n}\sum_{i=1}^{n}\overline{k}(\mathbf{x}_{i},\mathbf{v})\overline{l}(\mathbf{y}_{i},\mathbf{w})=\hat{u}^{b}(\mathbf{v},\mathbf{w})$.
We define\textbf{ $\hat{\mathbf{S}}\in\mathbb{R}^{J\times J}$} such
that $\hat{S}_{ij}:=\frac{1}{n}\sum_{m=1}^{n}\overline{k}(\mathbf{x}_{m},\mathbf{v}_{i})\overline{l}(\mathbf{y}_{m},\mathbf{w}_{i})\overline{k}(\mathbf{x}_{m},\mathbf{v}_{j})\overline{l}(\mathbf{y}_{i},\mathbf{w}_{j})$,
and define similarly its population counterpart $\mathbf{S}$ such
that $S_{ij}:=\mathbb{E}_{\mathbf{x}\mathbf{y}}[\tilde{k}(\mathbf{x},\mathbf{v})\tilde{l}(\mathbf{y},\mathbf{w})\tilde{k}(\mathbf{x},\mathbf{v}')\tilde{l}(\mathbf{y},\mathbf{w}')]$.
We have 
\begin{align}
\hat{\mathbf{\Sigma}} & =\hat{\mathbf{S}}-\hat{\mathbf{u}}^{b}\hat{\mathbf{u}}^{b\top},\nonumber \\
\mathbf{\Sigma} & =\mathbf{S}-\mathbf{u}\mathbf{u}^{\top},\nonumber \\
\|\hat{\boldsymbol{\Sigma}}-\boldsymbol{\Sigma}\|_{F} & =\|\hat{\mathbf{S}}-\mathbf{S}-(\hat{\mathbf{u}}^{b}\hat{\mathbf{u}}^{b\top}-\mathbf{u}\mathbf{u}^{\top})\|_{F}\\
 & \le\|\hat{\mathbf{S}}-\mathbf{S}\|_{F}+\|\hat{\mathbf{u}}^{b}\hat{\mathbf{u}}^{b\top}-\mathbf{u}\mathbf{u}^{\top}\|_{F}.\label{eq:bound_sigma_diff1}
\end{align}
With (\ref{eq:bound_sigma_diff1}), (\ref{eq:bound_lamb_diff2}) becomes
\begin{align}
|\hat{\lambda}_{n}-\lambda_{n}| & \le\frac{c_{1}n}{\gamma_{n}}\|\hat{\mathbf{S}}-\mathbf{S}\|_{F}+\frac{c_{1}n}{\gamma_{n}}\|\hat{\mathbf{u}}^{b}\hat{\mathbf{u}}^{b\top}-\mathbf{u}\mathbf{u}^{\top}\|_{F}+c_{2}n\sqrt{J}|\hat{u}(\mathbf{t}^{*})-u(\mathbf{t}^{*})|+c_{3}n\gamma_{n}.\label{eq:bound_lamb_diff3}
\end{align}
We will further separately bound $\|\hat{\mathbf{S}}-\mathbf{S}\|_{F}$
and $\|\hat{\mathbf{u}}^{b}\hat{\mathbf{u}}^{b\top}-\mathbf{u}\mathbf{u}^{\top}\|_{F}$. 

\subsubsection{Bounding $\|\hat{\mathbf{u}}^{b}\hat{\mathbf{u}}^{b\top}-\mathbf{u}\mathbf{u}^{\top}\|_{F}$ }

\begin{align*}
\|\hat{\mathbf{u}}^{b}\hat{\mathbf{u}}^{b\top}-\mathbf{u}\mathbf{u}^{\top}\|_{F} & =\|\hat{\mathbf{u}}^{b}\hat{\mathbf{u}}^{b\top}-\hat{\mathbf{u}}^{b}\mathbf{u}^{\top}+\hat{\mathbf{u}}^{b}\mathbf{u}^{\top}-\mathbf{u}\mathbf{u}^{\top}\|_{F}\\
 & \le\|\hat{\mathbf{u}}^{b}(\hat{\mathbf{u}}^{b}-\mathbf{u})^{\top}\|_{F}+\|(\hat{\mathbf{u}}^{b}-\mathbf{u})\mathbf{u}^{\top}\|_{F}\\
 & =\|\hat{\mathbf{u}}^{b}\|_{2}\|\hat{\mathbf{u}}^{b}-\mathbf{u}\|_{2}+\|\hat{\mathbf{u}}^{b}-\mathbf{u}\|_{2}\|\mathbf{u}\|_{2}\\
 & \le4B\sqrt{J}\|\hat{\mathbf{u}}^{b}-\mathbf{u}\|_{2},
\end{align*}
where we used (\ref{eq:bound_u}) and the fact that $\|\hat{\mathbf{u}}^{b}\|_{2}\le2B\sqrt{J}$
which can be shown similarly to (\ref{eq:bound_uhat}) as
\begin{align*}
\|\hat{\mathbf{u}}^{b}\|_{2}^{2} & =\sum_{m=1}^{J}\left[\hat{\mu}_{xy}(\mathbf{v}_{m},\mathbf{w}_{m})-\hat{\mu}_{x}(\mathbf{v}_{m})\hat{\mu}_{y}(\mathbf{w}_{m})\right]^{2}=\sum_{m=1}^{J}\left[\frac{1}{n^{2}}\sum_{i=1}^{n}\sum_{j=1}^{n}h_{\mathbf{t}_{m}}(\mathbf{z}_{i},\mathbf{z}_{j})\right]^{2}\le\sum_{m=1}^{J}\left[2B_{k}B_{l}\right]^{2}=4B^{2}J.
\end{align*}
 Let $(\tilde{\mathbf{v}},\tilde{\mathbf{w}}):=\tilde{\mathbf{t}}=\arg\max_{\mathbf{t}\in\{\mathbf{t}_{1},\ldots,\mathbf{t}_{J}\}}|\hat{u}^{b}(\mathbf{t})-u(\mathbf{t})|$.
We bound $\|\hat{\mathbf{u}}^{b}-\mathbf{u}\|_{2}$ by 
\begin{align}
\|\hat{\mathbf{u}}^{b}-\mathbf{u}\|_{2} & \stackrel{(a)}{\le}\sqrt{J}|\hat{u}^{b}(\tilde{\mathbf{t}})-u(\tilde{\mathbf{t}})|\nonumber \\
 & =\sqrt{J}\left|\hat{\mu}_{xy}(\tilde{\mathbf{t}})-\hat{\mu}_{x}(\tilde{\mathbf{v}})\hat{\mu}_{y}(\tilde{\mathbf{w}})-u(\tilde{\mathbf{t}})\right|\nonumber \\
 & =\sqrt{J}\left|\hat{\mu}_{xy}(\tilde{\mathbf{t}})-\widehat{\mu_{x}\mu_{y}}(\tilde{\mathbf{t}})+\widehat{\mu_{x}\mu_{y}}(\tilde{\mathbf{t}})-\hat{\mu}_{x}(\tilde{\mathbf{v}})\hat{\mu}_{y}(\tilde{\mathbf{w}})-u(\tilde{\mathbf{t}})\right|\nonumber \\
 & \le\sqrt{J}\left|\hat{\mu}_{xy}(\tilde{\mathbf{t}})-\widehat{\mu_{x}\mu_{y}}(\tilde{\mathbf{t}})-u(\tilde{\mathbf{t}})\right|+\sqrt{J}\left|\widehat{\mu_{x}\mu_{y}}(\tilde{\mathbf{t}})-\hat{\mu}_{x}(\tilde{\mathbf{v}})\hat{\mu}_{y}(\tilde{\mathbf{w}})\right|\nonumber \\
 & =\sqrt{J}\left|\hat{u}(\tilde{\mathbf{t}})-u(\tilde{\mathbf{t}})\right|+\sqrt{J}\left|\widehat{\mu_{x}\mu_{y}}(\tilde{\mathbf{t}})-\hat{\mu}_{x}(\tilde{\mathbf{v}})\hat{\mu}_{y}(\tilde{\mathbf{w}})\right|,\label{eq:bound_ubdiff1}
\end{align}
where at $(a)$ we used the same reasoning as in (\ref{eq:bound_udiff1}).
The bias $\left|\widehat{\mu_{x}\mu_{y}}(\tilde{\mathbf{t}})-\hat{\mu}_{x}(\tilde{\mathbf{v}})\hat{\mu}_{y}(\tilde{\mathbf{w}})\right|$
in the second term can be bounded as 
\begin{align*}
 & \left|\widehat{\mu_{x}\mu_{y}}(\tilde{\mathbf{t}})-\hat{\mu}_{x}(\tilde{\mathbf{v}})\hat{\mu}_{y}(\tilde{\mathbf{w}})\right|\\
 & =\left|\frac{1}{n(n-1)}\sum_{i=1}^{n}\sum_{j\neq i}k(\mathbf{x}_{i},\tilde{\mathbf{v}})l(\mathbf{y}_{j},\tilde{\mathbf{w}})-\frac{1}{n^{2}}\sum_{i=1}^{n}\sum_{j=1}^{n}k(\mathbf{x}_{i},\tilde{\mathbf{v}})l(\mathbf{y}_{j},\tilde{\mathbf{w}})\right|\\
 & =\left|\frac{1}{n(n-1)}\sum_{i=1}^{n}\sum_{j=1}^{n}k(\mathbf{x}_{i},\tilde{\mathbf{v}})l(\mathbf{y}_{j},\tilde{\mathbf{w}})-\frac{1}{n(n-1)}\sum_{i=1}^{n}k(\mathbf{x}_{i},\tilde{\mathbf{v}})l(\mathbf{y}_{i},\tilde{\mathbf{w}})-\frac{1}{n^{2}}\sum_{i=1}^{n}\sum_{j=1}^{n}k(\mathbf{x}_{i},\tilde{\mathbf{v}})l(\mathbf{y}_{j},\tilde{\mathbf{w}})\right|\\
 & =\left|\left(1-\frac{n}{n-1}\right)\frac{1}{n^{2}}\sum_{i=1}^{n}\sum_{j=1}^{n}k(\mathbf{x}_{i},\tilde{\mathbf{v}})l(\mathbf{y}_{j},\tilde{\mathbf{w}})+\frac{1}{n(n-1)}\sum_{i=1}^{n}k(\mathbf{x}_{i},\tilde{\mathbf{v}})l(\mathbf{y}_{i},\tilde{\mathbf{w}})\right|\\
 & \le\left|\left(1-\frac{n}{n-1}\right)\frac{1}{n^{2}}\sum_{i=1}^{n}\sum_{j=1}^{n}k(\mathbf{x}_{i},\tilde{\mathbf{v}})l(\mathbf{y}_{j},\tilde{\mathbf{w}})\right|+\left|\frac{1}{n(n-1)}\sum_{i=1}^{n}k(\mathbf{x}_{i},\tilde{\mathbf{v}})l(\mathbf{y}_{i},\tilde{\mathbf{w}})\right|\\
 & \le\frac{B}{n-1}+\frac{B}{n-1}=\frac{2B}{n-1}.
\end{align*}
Combining this upper bound with (\ref{eq:bound_ubdiff1}), we have
\begin{align}
\|\hat{\mathbf{u}}^{b}\hat{\mathbf{u}}^{b\top}-\mathbf{u}\mathbf{u}^{\top}\|_{F} & \le4BJ\left|\hat{u}(\tilde{\mathbf{t}})-u(\tilde{\mathbf{t}})\right|+\frac{8B^{2}J}{n-1}.\label{eq:bound_uu_diff1}
\end{align}
With (\ref{eq:bound_uu_diff1}), (\ref{eq:bound_lamb_diff3}) becomes
\begin{align}
|\hat{\lambda}_{n}-\lambda_{n}| & \le\frac{c_{1}n}{\gamma_{n}}\|\hat{\mathbf{S}}-\mathbf{S}\|_{F}+\frac{4BJc_{1}n}{\gamma_{n}}\left|\hat{u}(\tilde{\mathbf{t}})-u(\tilde{\mathbf{t}})\right|+\frac{c_{1}n}{\gamma_{n}}\frac{8B^{2}J}{n-1}+c_{2}n\sqrt{J}|\hat{u}(\mathbf{t}^{*})-u(\mathbf{t}^{*})|+c_{3}n\gamma_{n}.\label{eq:bound_lamb_diff4}
\end{align}

\subsubsection{Bounding $\|\hat{\mathbf{S}}-\mathbf{S}\|_{F}$ }

Recall that $V_{J}=\{\mathbf{t}_{1},\ldots,\mathbf{t}_{J}\}$, $\hat{S}_{ij}=\hat{S}(\mathbf{t}_{i},\mathbf{t}_{j})=\frac{1}{n}\sum_{m=1}^{n}\overline{k}(\mathbf{x}_{m},\mathbf{v}_{i})\overline{l}(\mathbf{y}_{m},\mathbf{w}_{i})\overline{k}(\mathbf{x}_{m},\mathbf{v}_{j})\overline{l}(\mathbf{y}_{m},\mathbf{w}_{j})$,
and $S_{ij}=S(\mathbf{t}_{i},\mathbf{t}_{j})=\mathbb{E}_{\mathbf{x}\mathbf{y}}[\tilde{k}(\mathbf{x},\mathbf{v}_{i})\tilde{l}(\mathbf{y},\mathbf{w}_{i})\tilde{k}(\mathbf{x},\mathbf{v}_{j})\tilde{l}(\mathbf{y},\mathbf{w}_{j})]$.
Let $(\mathbf{t}^{(1)},\mathbf{t}^{(2)})=\arg\max_{(\mathbf{s},\mathbf{t})\in V_{J}\times V_{J}}|\hat{S}(\mathbf{s},\mathbf{t})-S(\mathbf{s},\mathbf{t})|$.

\begin{align}
\|\hat{\mathbf{S}}-\mathbf{S}\|_{F} & =\sup_{\mathbf{B}\in B_{F}(1)}\left\langle \mathbf{B},\hat{\mathbf{S}}-\mathbf{S}\right\rangle _{F}\nonumber \\
 & \le\sup_{\mathbf{B}\in B_{F}(1)}\sum_{i=1}^{J}\sum_{j=1}^{J}|B_{ij}||\hat{S}_{ij}-S_{ij}|\nonumber \\
 & \le\left|\hat{S}(\mathbf{t}^{(1)},\mathbf{t}^{(2)})-S(\mathbf{t}^{(1)},\mathbf{t}^{(2)})\right|\sup_{\mathbf{B}\in B_{F}(1)}\sum_{i=1}^{J}\sum_{j=1}^{J}|B_{ij}|\nonumber \\
 & \stackrel{(a)}{\le}J\left|\hat{S}(\mathbf{t}^{(1)},\mathbf{t}^{(2)})-S(\mathbf{t}^{(1)},\mathbf{t}^{(2)})\right|\sup_{\mathbf{B}\in B_{F}(1)}\|\mathbf{B}\|_{F}\nonumber \\
 & =J\left|\hat{S}(\mathbf{t}^{(1)},\mathbf{t}^{(2)})-S(\mathbf{t}^{(1)},\mathbf{t}^{(2)})\right|,\label{eq:bound_sdiff1}
\end{align}
where at $(a)$ we used $\sum_{i=1}^{J}\sum_{j=1}^{J}|A_{ij}|\le J\|\mathbf{A}\|_{F}$
for any matrix $\mathbf{A}\in\mathbb{R}^{J\times J}$. We arrive at
\begin{align}
|\hat{\lambda}_{n}-\lambda_{n}| & \le\frac{c_{1}Jn}{\gamma_{n}}\left|\hat{S}(\mathbf{t}^{(1)},\mathbf{t}^{(2)})-S(\mathbf{t}^{(1)},\mathbf{t}^{(2)})\right|+\frac{4BJc_{1}n}{\gamma_{n}}\left|\hat{u}(\tilde{\mathbf{t}})-u(\tilde{\mathbf{t}})\right|\nonumber \\
 & +\frac{c_{1}n}{\gamma_{n}}\frac{8B^{2}J}{n-1}+c_{2}n\sqrt{J}|\hat{u}(\mathbf{t}^{*})-u(\mathbf{t}^{*})|+c_{3}n\gamma_{n}.\label{eq:bound_lamb_diff5}
\end{align}

\subsubsection{Bounding $\left|\hat{S}(\mathbf{t},\mathbf{t}')-S(\mathbf{t},\mathbf{t}')\right|$}

Having an upper bound for $\left|\hat{S}(\mathbf{t},\mathbf{t}')-S(\mathbf{t},\mathbf{t}')\right|$
will allow us to bound (\ref{eq:bound_lamb_diff5}). To keep the notations
uncluttered, we will define the following shorthands. 
\begin{center}
\begin{tabular}{cc}
\toprule 
Expression & Shorthand\tabularnewline
\midrule
\midrule 
$k(\mathbf{x},\mathbf{v})$ & $a$\tabularnewline
\midrule 
$k(\mathbf{x},\mathbf{v}')$ & $a'$\tabularnewline
\midrule 
$k(\mathbf{x}_{i},\mathbf{v})$ & $a_{i}$\tabularnewline
\midrule 
$k(\mathbf{x}_{i},\mathbf{v}')$ & $a_{i}'$\tabularnewline
\midrule 
$\mathbb{E}_{\mathbf{x}\sim P_{x}}k(\mathbf{x},\mathbf{v})$ & $\tilde{a}$\tabularnewline
\midrule 
$\mathbb{E}_{\mathbf{x}\sim P_{x}}k(\mathbf{x},\mathbf{v}')$ & $\tilde{a}'$\tabularnewline
\midrule 
$\frac{1}{n}\sum_{i=1}^{n}k(\mathbf{x}_{i},\mathbf{v})$ & $\overline{a}$\tabularnewline
\midrule 
$\frac{1}{n}\sum_{i=1}^{n}k(\mathbf{x}_{i},\mathbf{v}')$ & $\overline{a}'$\tabularnewline
\bottomrule
\end{tabular}\hspace{1cm} %
\begin{tabular}{cc}
\toprule 
Expression & Shorthand\tabularnewline
\midrule
\midrule 
$l(\mathbf{y},\mathbf{w})$ & $b$\tabularnewline
\midrule 
$l(\mathbf{y},\mathbf{w}')$ & $b'$\tabularnewline
\midrule 
$l(\mathbf{y}_{i},\mathbf{w})$ & $b_{i}$\tabularnewline
\midrule 
$l(\mathbf{y}_{i},\mathbf{w}')$ & \textbf{$b_{i}'$}\tabularnewline
\midrule 
$\mathbb{E}_{\mathbf{y}\sim P_{y}}l(\mathbf{y},\mathbf{w})$ & $\tilde{b}$\tabularnewline
\midrule 
$\mathbb{E}_{\mathbf{y}\sim P_{y}}l(\mathbf{y},\mathbf{w}')$ & $\tilde{b}'$\tabularnewline
\midrule 
$\frac{1}{n}\sum_{i=1}^{n}l(\mathbf{y}_{i},\mathbf{w})$ & $\overline{b}$\tabularnewline
\midrule 
$\frac{1}{n}\sum_{i=1}^{n}l(\mathbf{y}_{i},\mathbf{w}')$ & $\overline{b}'$\tabularnewline
\bottomrule
\end{tabular}
\par\end{center}

We will also use $\overline{\thinspace\cdot\thinspace}$ to denote
a empirical expectation over $\mathbf{x}$, or \textbf{$\mathbf{y},$
}or $(\mathbf{x},\mathbf{y})$. The argument under $\overline{\thinspace\cdot\thinspace}$
will determine the variable over which we take the expectation. For
instance, $\overline{aa'}=\frac{1}{n}\sum_{i=1}^{n}k(\mathbf{x}_{i},\mathbf{v})k(\mathbf{x}_{i},\mathbf{v}')$
and $\overline{aba'}=\frac{1}{n}\sum_{i=1}^{n}k(\mathbf{x}_{i},\mathbf{v})l(\mathbf{y}_{i},\mathbf{w})k(\mathbf{x}_{i},\mathbf{v}')$,
and so on. We define in the same way for the population expectation
using $\widetilde{\cdot}$ i.e., $\widetilde{aa'}=\mathbb{E}_{\mathbf{x}}\left[k(\mathbf{x},\mathbf{v})k(\mathbf{x},\mathbf{v}')\right]$
and $\widetilde{aba'}=\mathbb{E}_{\mathbf{x}\mathbf{y}}\left[k(\mathbf{x},\mathbf{v})l(\mathbf{y},\mathbf{w})k(\mathbf{x},\mathbf{v}')\right]$.

With these shorthands, we can rewrite $\hat{S}(\mathbf{t},\mathbf{t}')$
and $S(\mathbf{t},\mathbf{t}')$ as 
\begin{align*}
\hat{S}(\mathbf{t},\mathbf{t}') & =\frac{1}{n}\sum_{i=1}^{n}(a_{i}-\overline{a})(b_{i}-\overline{b})(a_{i}'-\overline{a}')(b_{i}'-\overline{b}'),\\
S(\mathbf{t},\mathbf{t}') & =\mathbb{E}_{\mathbf{x}\mathbf{y}}\left[(a-\tilde{a})(b-\tilde{b})(a'-\tilde{a}')(b'-\tilde{b}')\right].
\end{align*}
By expanding $S(\mathbf{t},\mathbf{t}')$, we have 
\begin{align*}
S(\mathbf{t},\mathbf{t}') & =\mathbb{E}_{\mathbf{x}\mathbf{y}}\big[+aba'b'-aba'\tilde{b}'-ab\tilde{a}'b'+ab\tilde{a}'\tilde{b}'\\
 & \phantom{\thinspace=\mathbb{E}_{\mathbf{x}\mathbf{y}}\big[}-a\tilde{b}a'b'+a\tilde{b}a'\tilde{b}'+a\tilde{b}\tilde{a}'b'-a\tilde{b}\tilde{a}'\tilde{b}'\\
 & \phantom{\thinspace=\mathbb{E}_{\mathbf{x}\mathbf{y}}\big[}-\tilde{a}ba'b'+\tilde{a}ba'\tilde{b}'+\tilde{a}b\tilde{a}'b'-\tilde{a}b\tilde{a}'\tilde{b}'\\
 & \phantom{\thinspace=\mathbb{E}_{\mathbf{x}\mathbf{y}}\big[}+\tilde{a}\tilde{b}a'b'-\tilde{a}\tilde{b}a'\tilde{b}'-\tilde{a}\tilde{b}\tilde{a}'\tilde{b}'+\tilde{a}\tilde{b}\tilde{a}'\tilde{b}'\big]\\
 & =+\widetilde{aba'b'}-\widetilde{aba'}\tilde{b}'-\widetilde{abb'}\tilde{a}'+\widetilde{ab}\tilde{a}'\tilde{b}'\\
 & \phantom{=}-\widetilde{aa'b'}\tilde{b}+\widetilde{aa'}\tilde{b}\tilde{b}'+\widetilde{ab'}\tilde{a}'\tilde{b}-{\color{blue}\tilde{a}\tilde{b}\tilde{a}'\tilde{b}'}\\
 & \phantom{=}-\widetilde{a'bb'}\tilde{a}+\widetilde{a'b}\tilde{a}\tilde{b}'+\tilde{a}\tilde{a}'\widetilde{bb'}-{\color{purple}\tilde{a}\tilde{b}\tilde{a}'\tilde{b}'}\\
 & \phantom{=}+\widetilde{a'b'}\tilde{a}\tilde{b}-{\color{purple}\tilde{a}\tilde{b}\tilde{a}'\tilde{b}'}-{\color{purple}\tilde{a}\tilde{b}\tilde{a}'\tilde{b}'}+{\color{blue}\tilde{a}\tilde{b}\tilde{a}'\tilde{b}'}\\
 & =+\widetilde{aba'b'}-\widetilde{aba'}\tilde{b}'-\widetilde{abb'}\tilde{a}'+\widetilde{ab}\tilde{a}'\tilde{b}'\\
 & \phantom{=}-\widetilde{aa'b'}\tilde{b}+\widetilde{aa'}\tilde{b}\tilde{b}'+\widetilde{ab'}\tilde{a}'\tilde{b}+\widetilde{a'b'}\tilde{a}\tilde{b}\\
 & \phantom{=}-\widetilde{a'bb'}\tilde{a}+\widetilde{a'b}\tilde{a}\tilde{b}'+\tilde{a}\tilde{a}'\widetilde{bb'}-3\tilde{a}\tilde{b}\tilde{a}'\tilde{b}'.
\end{align*}
The expansion of $\hat{S}(\mathbf{t},\mathbf{t}')$ can be done in
the same way. By the triangle inequality, we have
\begin{align*}
\left|\hat{S}(\mathbf{t},\mathbf{t}')-S(\mathbf{t},\mathbf{t}')\right| & \le\left|\overline{aba'b'}-\widetilde{aba'b'}\right|+\left|\overline{aba'}\thinspace\overline{b}'-\widetilde{aba'}\tilde{b}'\right|+\left|\overline{abb'}\overline{a}'-\widetilde{abb'}\tilde{a}'\right|+\left|\overline{ab}\overline{a}'\overline{b}'-\widetilde{ab}\tilde{a}'\tilde{b}'\right|\\
 & \phantom{\thinspace\thinspace\le}\left|\overline{aa'b'}\thinspace\overline{b}-\widetilde{aa'b'}\tilde{b}\right|+\left|\overline{aa'}\thinspace\overline{b}\thinspace\overline{b}'-\widetilde{aa'}\tilde{b}\tilde{b}'\right|+\left|\overline{ab'}\overline{a}'\overline{b}-\widetilde{ab'}\tilde{a}'\tilde{b}\right|+\left|\overline{a'b'}\overline{a}\overline{b}-\widetilde{a'b'}\tilde{a}\tilde{b}\right|\\
 & \phantom{\thinspace\thinspace\le}\left|\overline{a'bb'}\overline{a}-\widetilde{a'bb'}\tilde{a}\right|+\left|\overline{a'b}\overline{a}\overline{b}'-\widetilde{a'b}\tilde{a}\tilde{b}'\right|+\left|\overline{a}\thinspace\overline{a}'\overline{bb'}-\tilde{a}\tilde{a}'\widetilde{bb'}\right|+3\left|\overline{a}\overline{b}\overline{a}'\overline{b}'-\tilde{a}\tilde{b}\tilde{a}'\tilde{b}'\right|.
\end{align*}
The first term $\left|\overline{aba'b'}-\widetilde{aba'b'}\right|$
can be bounded by applying the Hoeffding's inequality. Other terms
can be bounded by applying Lemma \ref{lem:bound_product_diff}. Recall
that we write $(x_{1},\ldots,x_{m})_{+}$ for $\max(x_{1},\ldots,x_{m})$.

\paragraph{Bounding $\left|\overline{aba'b'}-\widetilde{aba'b'}\right|$ ($1^{st}$
term).}

Since $-B^{2}\le aba'b'\le B^{2}$, by the Hoeffding's inequality
(Lemma \ref{lem:hoeffding}), we have
\begin{align*}
\mathbb{P}\left(\left|\overline{aba'b'}-\widetilde{aba'b'}\right|\le t\right) & \ge1-2\exp\left(-\frac{nt^{2}}{2B^{4}}\right).
\end{align*}

\paragraph{Bounding $\left|\overline{aba'}\thinspace\overline{b}'-\widetilde{aba'}\tilde{b}'\right|$
($2^{nd}$ term).}

Let $f_{1}(\mathbf{x},\mathbf{y})=aba'=k(\mathbf{x},\mathbf{v})l(\mathbf{y},\mathbf{w})k(\mathbf{x},\mathbf{v}')$
and $f_{2}(\mathbf{y})=b'=l(\mathbf{y},\mathbf{w}')$. We note that
$|f_{1}(\mathbf{x},\mathbf{y})|\le(BB_{k},B_{l})_{+}$ and $|f_{2}(\mathbf{y})|\le(BB_{k},B_{l})_{+}$.
Thus, by Lemma \ref{lem:bound_product_diff} with $E=2$, we have
\begin{align*}
\mathbb{P}\left(\left|\overline{aba'}\thinspace\overline{b}'-\widetilde{aba'}\tilde{b}'\right|\le t\right) & \ge1-4\exp\left(-\frac{nt^{2}}{8(BB_{k},B_{l})_{+}^{4}}\right).
\end{align*}

\paragraph{Bounding $\left|\overline{ab}\overline{a}'\overline{b}'-\widetilde{ab}\tilde{a}'\tilde{b}'\right|$
($4^{th}$ term).}

Let $f_{1}(\mathbf{x},\mathbf{y})=ab=k(\mathbf{x},\mathbf{v})l(\mathbf{y},\mathbf{w})$,
$f_{2}(\mathbf{x})=a'=k(\mathbf{x},\mathbf{v}')$ and $f_{3}(\mathbf{y})=b'=l(\mathbf{y},\mathbf{w}')$.
We can see that $|f_{1}(\mathbf{x},\mathbf{y})|,|f_{2}(\mathbf{x})|,|f_{3}(\mathbf{y})|\le(B,B_{k},B_{l})_{+}$.
Thus, by Lemma \ref{lem:bound_product_diff} with $E=3$, we have
\begin{align*}
\mathbb{P}\left(\left|\overline{ab}\overline{a}'\overline{b}'-\widetilde{ab}\tilde{a}'\tilde{b}'\right|\le t\right) & \ge1-6\exp\left(-\frac{nt^{2}}{18(B,B_{k},B_{l})_{+}^{6}}\right).
\end{align*}

\paragraph{Bounding $\left|\overline{a}\overline{b}\overline{a}'\overline{b}'-\tilde{a}\tilde{b}\tilde{a}'\tilde{b}'\right|$
(last term).}

Let $f_{1}(\mathbf{x})=a=k(\mathbf{x},\mathbf{v}),f_{2}(\mathbf{y})=b=l(\mathbf{y},\mathbf{w}),f_{3}(\mathbf{x})=a'=k(\mathbf{x},\mathbf{v}')$
and $f_{4}(\mathbf{y})=b'=l(\mathbf{y},\mathbf{w}')$. It can be seen
that $|f_{1}(\mathbf{x})|,|f_{2}(\mathbf{y})|,|f_{3}(\mathbf{x})|,|f_{4}(\mathbf{y})|\le(B_{k},B_{l})_{+}$.
Thus, by Lemma \ref{lem:bound_product_diff} with $E=4$, we have
\begin{align*}
\mathbb{P}\left(3\left|\overline{a}\overline{b}\overline{a}'\overline{b}'-\tilde{a}\tilde{b}\tilde{a}'\tilde{b}'\right|\le t\right) & \ge1-8\exp\left(-\frac{nt^{2}}{32\cdot3^{2}(B_{k},B_{l})_{+}^{8}}\right).
\end{align*}

Bounds for other terms can be derived in a similar way to yield
\begin{align*}
\text{(}3^{rd}\text{ term)}\quad\mathbb{P}\left(\left|\overline{abb'}\overline{a}'-\widetilde{abb'}\tilde{a}'\right|\le t\right) & \ge1-4\exp\left(-\frac{nt^{2}}{8(BB_{l},B_{k})_{+}^{4}}\right),\\
\text{(}5^{th}\text{ term)}\quad\mathbb{P}\left(\left|\overline{aa'b'}\thinspace\overline{b}-\widetilde{aa'b'}\tilde{b}\right|\le t\right) & \ge1-4\exp\left(-\frac{nt^{2}}{8(BB_{k},B_{l})_{+}^{4}}\right),\\
\text{(}6^{th}\text{ term)}\quad\mathbb{P}\left(\left|\overline{aa'}\thinspace\overline{b}\thinspace\overline{b}'-\widetilde{aa'}\tilde{b}\tilde{b}'\right|\le t\right) & \ge1-6\exp\left(-\frac{nt^{2}}{18(B_{k}^{2},B_{l})_{+}^{6}}\right),\\
\text{(}7^{th}\text{ term)}\quad\mathbb{P}\left(\left|\overline{ab'}\overline{a}'\overline{b}-\widetilde{ab'}\tilde{a}'\tilde{b}\right|\le t\right) & \ge1-6\exp\left(-\frac{nt^{2}}{18(B,B_{k},B_{l})_{+}^{6}}\right),\\
\text{(}8^{th}\text{ term)}\quad\mathbb{P}\left(\left|\overline{a'b'}\overline{a}\overline{b}-\widetilde{a'b'}\tilde{a}\tilde{b}\right|\le t\right) & \ge1-6\exp\left(-\frac{nt^{2}}{18(B,B_{k},B_{l})_{+}^{6}}\right),\\
\text{(}9^{th}\text{ term)}\quad\mathbb{P}\left(\left|\overline{a'bb'}\overline{a}-\widetilde{a'bb'}\tilde{a}\right|\le t\right) & \ge1-4\exp\left(-\frac{nt^{2}}{8(BB_{l},B_{k})_{+}^{4}}\right),\\
\text{(}10^{th}\text{ term)}\quad\mathbb{P}\left(\left|\overline{a'b}\overline{a}\overline{b}'-\widetilde{a'b}\tilde{a}\tilde{b}'\right|\le t\right) & \ge1-6\exp\left(-\frac{nt^{2}}{18(B,B_{k},B_{l})_{+}^{6}}\right),\\
\text{(}11^{th}\text{ term)}\quad\mathbb{P}\left(\left|\overline{a}\thinspace\overline{a}'\overline{bb'}-\tilde{a}\tilde{a}'\widetilde{bb'}\right|\le t\right) & \ge1-6\exp\left(-\frac{nt^{2}}{18(B_{k},B_{l}^{2})_{+}^{6}}\right).
\end{align*}
By the union bound, we have {\small{}
\begin{align*}
 & \mathbb{P}\left(\left|\hat{S}(\mathbf{t},\mathbf{t}')-S(\mathbf{t},\mathbf{t}')\right|\le12t\right)\\
 & \ge1-\bigg[2\exp\left(-\frac{nt^{2}}{2B^{4}}\right)+{\color{blue}4\exp\left(-\frac{nt^{2}}{8(BB_{k},B_{l})_{+}^{4}}\right)}+{\color{magenta}4\exp\left(-\frac{nt^{2}}{8(BB_{l},B_{k})_{+}^{4}}\right)}+{\color{teal}6\exp\left(-\frac{nt^{2}}{18(B,B_{k},B_{l})_{+}^{6}}\right)}\\
 & \phantom{\ge1-\bigg[\thinspace}{\color{blue}4\exp\left(-\frac{nt^{2}}{8(BB_{k},B_{l})_{+}^{4}}\right)}+6\exp\left(-\frac{nt^{2}}{18(B_{k}^{2},B_{l})_{+}^{6}}\right)+{\color{teal}6\exp\left(-\frac{nt^{2}}{18(B,B_{k},B_{l})_{+}^{6}}\right)}+{\color{teal}6\exp\left(-\frac{nt^{2}}{18(B,B_{k},B_{l})_{+}^{6}}\right)}\\
 & \phantom{\ge1-\bigg[\thinspace}{\color{magenta}4\exp\left(-\frac{nt^{2}}{8(BB_{l},B_{k})_{+}^{4}}\right)}+{\color{teal}6\exp\left(-\frac{nt^{2}}{18(B,B_{k},B_{l})_{+}^{6}}\right)}+6\exp\left(-\frac{nt^{2}}{18(B_{k},B_{l}^{2})_{+}^{6}}\right)+8\exp\left(-\frac{nt^{2}}{32\cdot3^{2}(B_{k},B_{l})_{+}^{8}}\right)\bigg]\\
 & =1-\bigg[2\exp\left(-\frac{nt^{2}}{2B^{4}}\right)+8\exp\left(-\frac{nt^{2}}{8(BB_{k},B_{l})_{+}^{4}}\right)+8\exp\left(-\frac{nt^{2}}{8(BB_{l},B_{k})_{+}^{4}}\right)+24\exp\left(-\frac{nt^{2}}{18(B,B_{k},B_{l})_{+}^{6}}\right)\\
 & \phantom{\ge1-\bigg[\thinspace}+6\exp\left(-\frac{nt^{2}}{18(B_{k}^{2},B_{l})_{+}^{6}}\right)+6\exp\left(-\frac{nt^{2}}{18(B_{k},B_{l}^{2})_{+}^{6}}\right)+8\exp\left(-\frac{nt^{2}}{32\cdot3^{2}(B_{k},B_{l})_{+}^{8}}\right)\bigg]\\
 & \ge1-\bigg[2\exp\left(-\frac{12^{2}nt^{2}}{B^{*}}\right)+8\exp\left(-\frac{12^{2}nt^{2}}{B^{*}}\right)+8\exp\left(-\frac{12^{2}nt^{2}}{B^{*}}\right)+24\exp\left(-\frac{12^{2}nt^{2}}{B^{*}}\right)\\
 & \phantom{\ge1-\bigg[\thinspace}+6\exp\left(-\frac{12^{2}nt^{2}}{B^{*}}\right)+6\exp\left(-\frac{12^{2}nt^{2}}{B^{*}}\right)+8\exp\left(-\frac{12^{2}nt^{2}}{B^{*}}\right)\bigg]\\
 & =1-62\exp\left(-\frac{12^{2}nt^{2}}{B^{*}}\right),
\end{align*}
}where 
\[
B^{*}:=\frac{1}{12^{2}}\max(2B^{4},8(BB_{k},B_{l})_{+}^{4},8(BB_{l},B_{k})_{+}^{4},18(B,B_{k},B_{l})_{+}^{6},18(B_{k}^{2},B_{l})_{+}^{6},18(B_{k},B_{l}^{2})_{+}^{6},32\cdot3^{2}(B_{k},B_{l})_{+}^{8}).
\]
By reparameterization, it follows that 
\begin{align}
\mathbb{P}\left(\frac{c_{1}Jn}{\gamma_{n}}\left|\hat{S}(\mathbf{t},\mathbf{t}')-S(\mathbf{t},\mathbf{t}')\right|\le t\right) & \ge1-62\exp\left(-\frac{\gamma_{n}^{2}t^{2}}{c_{1}^{2}J^{2}nB^{*}}\right).\label{eq:bound_sdiff2}
\end{align}

\subsubsection{Union Bound for $\left|\hat{\lambda}_{n}-\lambda_{n}\right|$ and
Final Lower Bound}

Recall from (\ref{eq:bound_lamb_diff5}) that 
\begin{align*}
|\hat{\lambda}_{n}-\lambda_{n}| & \le\frac{c_{1}Jn}{\gamma_{n}}\left|\hat{S}(\mathbf{t}^{(1)},\mathbf{t}^{(2)})-S(\mathbf{t}^{(1)},\mathbf{t}^{(2)})\right|+\frac{4BJc_{1}n}{\gamma_{n}}\left|\hat{u}(\tilde{\mathbf{t}})-u(\tilde{\mathbf{t}})\right|\\
 & +\frac{c_{1}n}{\gamma_{n}}\frac{8B^{2}J}{n-1}+c_{2}n\sqrt{J}|\hat{u}(\mathbf{t}^{*})-u(\mathbf{t}^{*})|+c_{3}n\gamma_{n}.
\end{align*}
We will bound terms in (\ref{eq:bound_lamb_diff5}) separately and
combine all the bounds with the union bound. As shown in (\ref{eq:ustat_core_bound}),
the U-statistic core $h$ is bounded between $-2B$ and $2B$. Thus,
by Lemma \ref{lem:u_stat_bound} (with $m=2$), we have

\begin{align}
\mathbb{P}\left(c_{2}n\sqrt{J}|\hat{u}(\mathbf{t}^{*})-u(\mathbf{t}^{*})|\le t\right) & \ge1-2\exp\left(-\frac{\lfloor0.5n\rfloor t^{2}}{8c_{2}^{2}n^{2}JB^{2}}\right).\label{eq:bound_utstar_diff1}
\end{align}

\paragraph{Bounding $\frac{c_{1}n}{\gamma_{n}}\frac{8B^{2}J}{n-1}+c_{3}n\gamma_{n}+\frac{4BJc_{1}n}{\gamma_{n}}\left|\hat{u}(\tilde{\mathbf{t}})-u(\tilde{\mathbf{t}})\right|$.}

By Lemma \ref{lem:u_stat_bound} (with $m=2$), it follows that
\begin{align}
 & \mathbb{P}\left(\frac{c_{1}n}{\gamma_{n}}\frac{8B^{2}J}{n-1}+c_{3}n\gamma_{n}+\frac{4BJc_{1}n}{\gamma_{n}}\left|\hat{u}(\tilde{\mathbf{t}})-u(\tilde{\mathbf{t}})\right|\le t\right)\nonumber \\
 & \ge1-2\exp\left(-\frac{\lfloor0.5n\rfloor\gamma_{n}^{2}\left[t-\frac{c_{1}n}{\gamma_{n}}\frac{8B^{2}J}{n-1}-c_{3}n\gamma_{n}\right]^{2}}{2^{7}B^{4}J^{2}c_{1}^{2}n^{2}}\right)\nonumber \\
 & =1-2\exp\left(-\frac{\lfloor0.5n\rfloor\left[t\gamma_{n}(n-1)-8c_{1}B^{2}nJ-c_{3}n(n-1)\gamma_{n}^{2}\right]^{2}}{2^{7}B^{4}J^{2}c_{1}^{2}n^{2}(n-1)^{2}}\right)\nonumber \\
 & \stackrel{(a)}{\ge}1-2\exp\left(-\frac{\left[t\gamma_{n}(n-1)-8c_{1}B^{2}nJ-c_{3}n(n-1)\gamma_{n}^{2}\right]^{2}}{2^{8}B^{4}J^{2}c_{1}^{2}n^{2}(n-1)}\right),\label{eq:bound_const_term_lamb}
\end{align}
where at $(a)$ we used $\lfloor0.5n\rfloor\ge(n-1)/2$. Combining
(\ref{eq:bound_sdiff2}), (\ref{eq:bound_utstar_diff1}), and (\ref{eq:bound_const_term_lamb})
with the union bound (set $T=3t$), we can bound (\ref{eq:bound_lamb_diff5})
with 
\begin{align*}
\mathbb{P}\left(\left|\hat{\lambda}_{n}-\lambda_{n}\right|\le T\right) & \ge1-62\exp\left(-\frac{\gamma_{n}^{2}T^{2}}{3^{2}c_{1}^{2}J^{2}nB^{*}}\right)-2\exp\left(-\frac{\lfloor0.5n\rfloor T^{2}}{72c_{2}^{2}n^{2}JB^{2}}\right)\\
 & -2\exp\left(-\frac{\left[T\gamma_{n}(n-1)/3-8c_{1}B^{2}nJ-c_{3}\gamma_{n}^{2}n(n-1)\right]^{2}}{2^{8}B^{4}J^{2}c_{1}^{2}n^{2}(n-1)}\right).
\end{align*}
Since $\left|\hat{\lambda}_{n}-\lambda_{n}\right|\le T$ implies $\hat{\lambda}_{n}\ge\lambda_{n}-T$,
a reparametrization with $r=\lambda_{n}-T$ gives
\begin{align*}
\mathbb{P}\left(\hat{\lambda}_{n}\ge r\right) & \ge1-62\exp\left(-\frac{\gamma_{n}^{2}(\lambda_{n}-r)^{2}}{3^{2}c_{1}^{2}J^{2}nB^{*}}\right)-2\exp\left(-\frac{\lfloor0.5n\rfloor(\lambda_{n}-r)^{2}}{72c_{2}^{2}n^{2}JB^{2}}\right)\\
 & -2\exp\left(-\frac{\left[(\lambda_{n}-r)\gamma_{n}(n-1)/3-8c_{1}B^{2}nJ-c_{3}\gamma_{n}^{2}n(n-1)\right]^{2}}{2^{8}B^{4}J^{2}c_{1}^{2}n^{2}(n-1)}\right)\\
 & :=L(\lambda_{n}).
\end{align*}
Grouping constants into $\xi_{1},\ldots\xi_{5}$ gives the result. 

The lower bound $L(\lambda_{n})$ takes the form 
\[
1-62\exp\left(-C_{1}(\lambda_{n}-T_{\alpha})^{2}\right)-2\exp\left(-C_{2}(\lambda_{n}-T_{\alpha})^{2}\right)-2\exp\left(-\frac{[(\lambda_{n}-T_{\alpha})C_{3}-C_{4}]^{2}}{C_{5}}\right),
\]
where $C_{1},\ldots,C_{5}$ are positive constants. For fixed large
enough $n$ such that $\lambda_{n}>T_{\alpha}$, and fixed significance
level $\alpha$, increasing $\lambda_{n}$ will increase $L(\lambda_{n})$.
Specifically, since $n$ is fixed, increasing $\mathbf{u}^{\top}\mathbf{\Sigma}^{-1}\mathbf{u}$
in $\lambda_{n}=n\mathbf{u}^{\top}\mathbf{\Sigma}^{-1}\mathbf{u}$
will increase $L(\lambda_{n})$.

\section{Helper Lemmas}

This section contains lemmas used to prove the main results in this
work.

\begin{lem}[Product to sum] \label{lemma:prod-to-sum}
 Assume that  $|a_i| \le B$, $|b_i| \le B$ for $i=1,\ldots,E$. Then $\left|\prod_{i=1}^E a_i - \prod_{i=1}^E b_i \right|\le B^{E-1} \sum_{j=1}^E |a_j-b_j|$.
\end{lem}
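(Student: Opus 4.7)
The plan is a standard telescoping decomposition. I would write
\[
\prod_{i=1}^{E}a_{i}-\prod_{i=1}^{E}b_{i}=\sum_{j=1}^{E}\left(\prod_{i=1}^{j-1}a_{i}\right)(a_{j}-b_{j})\left(\prod_{i=j+1}^{E}b_{i}\right),
\]
which one checks by noting that the right-hand side telescopes: the $j$-th summand equals $\bigl(\prod_{i\le j}a_{i}\bigr)\bigl(\prod_{i>j}b_{i}\bigr)-\bigl(\prod_{i<j}a_{i}\bigr)\bigl(\prod_{i\ge j}b_{i}\bigr)$, so adjacent terms cancel and only the first and last survive, giving exactly $\prod a_{i}-\prod b_{i}$ (using the usual convention that the empty product equals $1$).

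Once this identity is in hand, the triangle inequality together with the bounds $|a_{i}|,|b_{i}|\le B$ immediately yields
\[
\left|\prod_{i=1}^{E}a_{i}-\prod_{i=1}^{E}b_{i}\right|\le\sum_{j=1}^{E}B^{j-1}\,|a_{j}-b_{j}|\,B^{E-j}=B^{E-1}\sum_{j=1}^{E}|a_{j}-b_{j}|,
\]
which is the desired inequality.

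There is no real obstacle: the only thing to be careful about is the bookkeeping of indices in the telescoping sum and the empty-product convention at the endpoints $j=1$ and $j=E$. An equally clean alternative is induction on $E$: the base case $E=1$ is trivial, and for the inductive step one writes $\prod_{i=1}^{E+1}a_i-\prod_{i=1}^{E+1}b_i = a_{E+1}\bigl(\prod_{i=1}^{E}a_i-\prod_{i=1}^{E}b_i\bigr)+(a_{E+1}-b_{E+1})\prod_{i=1}^{E}b_i$, applies the inductive hypothesis to the first bracket, and bounds $|a_{E+1}|\le B$ and $|\prod b_i|\le B^E$. I would present the telescoping proof, as it is slightly more transparent about where the factor $B^{E-1}$ comes from.
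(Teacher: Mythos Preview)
Your proof is correct and uses essentially the same telescoping decomposition as the paper. The only cosmetic difference is that the paper inserts the intermediate terms directly inside a chain of triangle inequalities, whereas you first record the algebraic identity and then apply the triangle inequality; the content is identical.
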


\begin{proof}
  \begin{align*}
    \left|\prod_{i=1}^E a_i - \prod_{j=1}^E b_j \right| &\le \left| \prod_{i=1}^E a_i - \prod_{i=1}^{E-1} a_i b_E\right| + \left|\prod_{i=1}^{E-1} a_i b_E - \prod_{i=1}^{E-2} a_i b_{E-1}b_E\right| + \ldots + \left|a_1 \prod_{j=2}^E b_j - \prod_{j=1}^E b_j\right|\\
	  &\le |a_E - b_E| \left| \prod_{i=1}^{E-1} a_i\right| + \left|a_{E-1}-b_{E-1}\right| \left|\left(\prod_{i=1}^{E-2}a_i\right)b_E\right| + \ldots + \left|a_1-b_1\right| \left|\prod_{j=2}^Eb_j\right|\\
	  &\le |a_E - b_E| B^{E-1} + \left|a_{E-1}-b_{E-1}\right| B^{E-1} + \ldots + \left|a_1-b_1\right| B^{E-1} \\ 
	& = B^{E-1} \sum_{j=1}^E |a_j-b_j|
  \end{align*}
applying triangle inequality, and the boundedness of $a_i$ and $b_i$-s. 
\end{proof}

\begin{lem}[Product variant of the Hoeffding's inequality]
\label{lem:bound_product_diff}For $i=1,\ldots,E$, let $\{\mathbf{x}_{j}^{(i)}\}_{j=1}^{n_{i}}\subset\mathcal{X}_{i}$
be an i.i.d. sample from a distribution $P_{i}$, and $f_{i}:\mathcal{X}_{i}\mapsto\mathbb{R}$
be a measurable function. Note that it is possible that $P_{1}=P_{2}=\cdots=P_{E}$
and $\{\mathbf{x}_{j}^{(1)}\}_{j=1}^{n_{1}}=\cdots=\{\mathbf{x}_{j}^{(E)}\}_{j=1}^{n_{E}}$.
Assume that $|f_{i}(\mathbf{x})|\le B<\infty$ for all $\mathbf{x}\in\mathcal{X}_{i}$
and $i=1,\ldots,E$. Write $\hat{P}_{i}$ to denote an empirical distribution
based on the sample $\{\mathbf{x}_{j}^{(i)}\}_{j=1}^{n_{i}}$. Then,
\begin{align*}
\mathbb{P}\left(\left|\left[\prod_{i=1}^{E}\mathbb{E}_{\mathbf{x}^{(i)}\sim\hat{P}_{i}}f_{i}(\mathbf{x}^{(i)})\right]-\left[\prod_{i=1}^{E}\mathbb{E}_{\mathbf{x}^{(i)}\sim P_{i}}f_{i}(\mathbf{x}^{(i)})\right]\right|\le T\right) & \ge1-2\sum_{i=1}^{E}\exp\left(-\frac{n_{i}T^{2}}{2E^{2}B^{2E}}\right).
\end{align*}
\end{lem}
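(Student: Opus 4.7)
The plan is to combine the product-to-sum decomposition (Lemma \ref{lemma:prod-to-sum}) with $E$ applications of the classical (scalar) Hoeffding inequality, one for each coordinate, glued together by a union bound.

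First, set $a_i := \mathbb{E}_{\mathbf{x}^{(i)}\sim \hat{P}_i}f_i(\mathbf{x}^{(i)})$ and $b_i := \mathbb{E}_{\mathbf{x}^{(i)}\sim P_i}f_i(\mathbf{x}^{(i)})$. Since $|f_i|\le B$ pointwise, both $|a_i|\le B$ and $|b_i|\le B$, so Lemma \ref{lemma:prod-to-sum} gives
\[
\left|\prod_{i=1}^E a_i - \prod_{i=1}^E b_i\right| \le B^{E-1}\sum_{i=1}^{E}|a_i-b_i|.
\]
Hence the event $\{\,|a_i-b_i|\le T/(E B^{E-1})\ \text{for all }i\,\}$ implies the target event $\{\,|\prod_i a_i - \prod_i b_i|\le T\,\}$.

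Next, each $a_i$ is the sample mean of $n_i$ i.i.d. random variables bounded in $[-B,B]$, so Hoeffding's inequality yields
\[
\mathbb{P}\!\left(|a_i-b_i| > \tfrac{T}{E B^{E-1}}\right) \le 2\exp\!\left(-\frac{n_i \bigl(T/(EB^{E-1})\bigr)^2}{2B^2}\right) = 2\exp\!\left(-\frac{n_i T^2}{2E^2 B^{2E}}\right).
\]
(I am using $\sup f_i - \inf f_i \le 2B$, hence the squared range is $4B^2$; the standard Hoeffding bound then has $2B^2$ in the denominator.) A union bound over $i=1,\ldots,E$ combined with the implication above finishes the proof.

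The argument is short and essentially mechanical; the only mild subtlety is bookkeeping the factor $E B^{E-1}$ correctly so that, after squaring inside the Hoeffding exponent, the $B^{2E-2}\cdot B^2 = B^{2E}$ and the $E^2$ come out matching the statement. There is no real obstacle: the boundedness hypothesis on $f_i$ makes Lemma \ref{lemma:prod-to-sum} directly applicable, and the i.i.d. assumption within each index $i$ is exactly what Hoeffding requires (note that no independence across the different indices $i$ is needed, since the union bound handles arbitrary joint dependence).
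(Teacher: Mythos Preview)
Your proof is correct and follows exactly the paper's approach: apply Lemma~\ref{lemma:prod-to-sum} to reduce the product difference to a sum, then Hoeffding on each coordinate, then a union bound. Your bookkeeping of the constants (the $E B^{E-1}$ factor producing $E^2 B^{2E}$ in the exponent) is also accurate and matches the paper.
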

\begin{proof}
By Lemma\,\ref{lemma:prod-to-sum}, we have 
\begin{align*}
\left|\left[\prod_{i=1}^{E}\mathbb{E}_{\mathbf{x}^{(i)}\sim\hat{P}_{i}}f_{i}(\mathbf{x}^{(i)})\right]-\left[\prod_{i=1}^{E}\mathbb{E}_{\mathbf{x}^{(i)}\sim P_{i}}f_{i}(\mathbf{x}^{(i)})\right]\right| & \le B^{E-1}\sum_{i=1}^{E}\left|\mathbb{E}_{\mathbf{x}^{(i)}\sim\hat{P}_{i}}f_{i}(\mathbf{x}^{(i)})-\mathbb{E}_{\mathbf{x}^{(i)}\sim P_{i}}f_{i}(\mathbf{x}^{(i)})\right|.
\end{align*}
By applying the Hoeffding's inequality to each term in the sum, we
have $\mathbb{P}\left(\left|\mathbb{E}_{\mathbf{x}^{(i)}\sim\hat{P}_{i}}f_{i}(\mathbf{x}^{(i)})-\mathbb{E}_{\mathbf{x}^{(i)}\sim P_{i}}f_{i}(\mathbf{x}^{(i)})\right|\le t\right)\ge1-2\exp\left(-\frac{2n_{i}t^{2}}{4B^{2}}\right).$
The result is obtained with a union bound. 
\end{proof}

\section{External Lemmas}

In this section, we provide known results referred to in this work.
\begin{lem}[{\citetsup[Lemma 1]{Chwialkowski2015}}]
\label{lem:analytic_rkhs}If $k$ is a bounded, analytic kernel (in
the sense given in Definition\,\ref{def:analytic_kernel}) on $\mathbb{R}^{d}\times\mathbb{R}^{d}$,
then all functions in the RKHS defined by $k$ are analytic.
\end{lem}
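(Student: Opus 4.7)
The plan is to exploit the reproducing property to write $f(x)=\langle f,k(\cdot,x)\rangle_{\mathcal{H}_k}$ for every $f\in\mathcal{H}_k$, and then establish that the canonical feature map $\Phi:x\mapsto k(\cdot,x)$ is real-analytic as an $\mathcal{H}_k$-valued map. Once that is shown, $f$ is the composition of $\Phi$ with the bounded (hence analytic) linear functional $\langle f,\cdot\rangle_{\mathcal{H}_k}$, so $f$ inherits analyticity on $\mathbb{R}^d$.

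First I would fix an arbitrary $x_0\in\mathbb{R}^d$ and introduce candidate expansion coefficients $v_\alpha\in\mathcal{H}_k$, which pointwise should coincide with $\partial^\alpha_2 k(\cdot,x_0)$, the partial derivative of $k$ in its second argument of multi-index $\alpha$ evaluated at $x_0$. By a standard RKHS differentiation identity (e.g.\ the derivative reproducing property), existence of the mixed partial $\partial^\alpha_1\partial^\alpha_2 k(x_0,x_0)$ implies $v_\alpha\in\mathcal{H}_k$ with
\[
\|v_\alpha\|_{\mathcal{H}_k}^2=\partial^\alpha_1\partial^\alpha_2 k(x_0,x_0).
\]
Separate analyticity of $k$ together with the symmetry $k(x,y)=k(y,x)$ ensures all these mixed partials are finite.

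Next I would consider the formal series $S(x):=\sum_\alpha\tfrac{(x-x_0)^\alpha}{\alpha!}\,v_\alpha$ in $\mathcal{H}_k$. Pointwise agreement $\bigl(S(x)\bigr)(y)=k(y,x)$ for any fixed $y$ is immediate from Definition \ref{def:analytic_kernel} applied to $k(y,\cdot)$, which is analytic and so admits the convergent expansion $k(y,x)=\sum_\alpha\tfrac{(x-x_0)^\alpha}{\alpha!}\partial^\alpha_2 k(y,x_0)$. The main obstacle, and where the real work lies, is upgrading this to $\mathcal{H}_k$-norm convergence of $S$ on a neighborhood of $x_0$. This requires Cauchy-style estimates of the form $\|v_\alpha\|_{\mathcal{H}_k}\le C R^{|\alpha|}\alpha!$ for constants $C,R>0$. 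I would obtain these by applying standard one-variable Cauchy estimates (valid since $k$ is analytic in each slot separately and the hypothesis that $k$ is bounded yields uniform control on a small joint polydisc of $(x_0,x_0)$) first to $\partial^\alpha_2 k(y,x_0)$ as a function of $x_0$, and then to the resulting expressions as functions of $y$ to control $\partial^\alpha_1\partial^\alpha_2 k(x_0,x_0)$. Assembling the two gives the desired factorial/geometric bound on $\|v_\alpha\|_{\mathcal{H}_k}$, and hence absolute convergence of $S(x)$ in $\mathcal{H}_k$ for all $x$ with $\|x-x_0\|$ smaller than some radius $r>0$ depending only on $k$ and $x_0$.

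Finally, from $k(\cdot,x)=\sum_\alpha\tfrac{(x-x_0)^\alpha}{\alpha!}\,v_\alpha$ in $\mathcal{H}_k$-norm, continuity of $\langle f,\cdot\rangle_{\mathcal{H}_k}$ yields
\[
f(x)=\langle f,k(\cdot,x)\rangle_{\mathcal{H}_k}=\sum_\alpha\frac{(x-x_0)^\alpha}{\alpha!}\,\langle f,v_\alpha\rangle_{\mathcal{H}_k},
\]
an absolutely convergent power series on a neighborhood of $x_0$. Since $x_0$ was arbitrary, $f$ is analytic on $\mathbb{R}^d$, completing the proof.
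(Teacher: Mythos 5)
The paper does not actually prove this statement: it is imported as an external result, cited to Lemma 1 of Chwialkowski et al.\ (2015), so there is no internal proof to compare yours with. Judged on its own, your outline follows the natural route (show the feature map $x\mapsto k(\cdot,x)$ is analytic in $\mathcal{H}_{k}$-norm, then compose with the continuous functional $\langle f,\cdot\rangle_{\mathcal{H}_{k}}$), but it has genuine gaps, both stemming from the fact that Definition~\ref{def:analytic_kernel} only requires analyticity of $k$ in each argument \emph{separately}, and that the boundedness hypothesis is a bound on real arguments only.

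First, the derivative-reproducing step: the standard results you invoke (that $v_{\alpha}=\partial_{2}^{\alpha}k(\cdot,x_{0})\in\mathcal{H}_{k}$ with $\|v_{\alpha}\|_{\mathcal{H}_{k}}^{2}=\partial_{1}^{\alpha}\partial_{2}^{\alpha}k(x_{0},x_{0})$) require joint regularity, namely existence and continuity of the mixed partials in a neighbourhood, because $v_{\alpha}$ is built as an $\mathcal{H}_{k}$-norm limit of difference quotients whose norms are exactly such mixed differences. Separate analyticity plus symmetry does not deliver this: $y\mapsto\partial_{2}^{\alpha}k(y,x_{0})$ is a pointwise limit of analytic functions and need not even be differentiable without locally uniform control (a bounded, symmetric, separately analytic function such as $xy/(x^{2}+y^{2})$, extended by $0$, has no mixed partial at the origin; positive definiteness is the only additional structure available and your argument never uses it). Second, and more centrally, the Cauchy-estimate step: to obtain $\|v_{\alpha}\|_{\mathcal{H}_{k}}\le CR^{|\alpha|}\alpha!$ you need a holomorphic extension of $k$ on a fixed complex polydisc around $(x_{0},x_{0})$ together with a bound there that is uniform in both variables. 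Boundedness of $k$ on real inputs gives no control whatsoever of a complexified extension, and separate real-analyticity does not even guarantee that a joint extension on a common polydisc exists (the radius in one slot may shrink as the other variable moves). Since this estimate is precisely what makes the $\mathcal{H}_{k}$-valued Taylor series converge in norm, the proof as written is incomplete. Your argument does go through under the stronger hypothesis of joint real-analyticity with a locally bounded holomorphic extension near the diagonal, which is what actually holds for the Gaussian product kernels the paper uses (Proposition~\ref{prop:product_gaussian_kers}), but it does not follow from the stated assumptions alone.
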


\begin{lem}[{\citetsup[Lemma 3]{Chwialkowski2015}}]
\label{lem:metric_prob} Let $\Lambda$ be an injective mapping from
the space of probability measures into a space of analytic functions
on $\mathbb{R}^{d}$. Define 
\begin{align*}
d_{V_{J}}^{2}(P,Q) & =\sum_{j=1}^{J}\left|[\Lambda P](\mathbf{v}_{j})-[\Lambda Q](\mathbf{v}_{j})\right|^{2},
\end{align*}
where $V_{J}=\{\mathbf{v}_{i}\}_{i=1}^{J}$ are vector-valued i.i.d.
random variables from a distribution which is absolutely continuous
with respect to the Lebesgue measure. Then, $d_{V_{J}}(P,Q)$ is almost
surely (w.r.t. $V_{J}$) a metric.
\end{lem}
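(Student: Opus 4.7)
The approach is to verify the metric axioms one by one, noting which hold deterministically and which require the randomness in $V_J$. Writing $\mathbf{g}_P := ([\Lambda P](\mathbf{v}_1),\ldots,[\Lambda P](\mathbf{v}_J))^\top \in \mathbb{R}^J$, we have $d_{V_J}(P,Q) = \|\mathbf{g}_P - \mathbf{g}_Q\|_2$, so non-negativity, symmetry, and the triangle inequality are inherited from the Euclidean norm on $\mathbb{R}^J$ for \emph{every} realization of $V_J$. Moreover, $P=Q$ trivially gives $d_{V_J}(P,Q)=0$. The only property that can fail and that requires the almost-sure qualifier is the converse: $d_{V_J}(P,Q)=0 \Rightarrow P=Q$.

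To establish this, I would fix an arbitrary pair of probability measures with $P\neq Q$ and define $f := \Lambda P - \Lambda Q$. Since $\Lambda$ is injective, $\Lambda P \neq \Lambda Q$, so $f$ is not the zero function, and since $\Lambda P$ and $\Lambda Q$ lie in a space of analytic functions on $\mathbb{R}^d$ (and analyticity is closed under subtraction), $f$ is a nonzero analytic function. I would then invoke the standard real-analytic fact that the zero set of a nonzero analytic function on a connected open subset of $\mathbb{R}^d$ has Lebesgue measure zero (provable by induction on $d$ via the Weierstrass preparation theorem, or by the locally-finite stratification of analytic varieties into pieces of codimension $\geq 1$). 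Because each $\mathbf{v}_j$ is drawn from a distribution absolutely continuous with respect to Lebesgue measure, $\mathbb{P}(f(\mathbf{v}_j) = 0) = 0$ for each $j$, and even a single nonzero coordinate forces $d_{V_J}(P,Q)>0$; a union bound over $j=1,\ldots,J$ (or just taking $j=1$) then gives the conclusion on a set of $V_J$ of full measure.

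The main subtlety, and the thing I would flag rather than fight, is the quantifier order. The clean reading of the conclusion, which matches the usage of this lemma inside the proof of Proposition~\ref{prop:fsic_dependence_measure} (applied to the single pair $P=P_{xy}$, $Q=P_xP_y$ via $\Lambda = \Pi_g$), is pointwise: for each fixed pair $(P,Q)$ with $P\neq Q$, positivity of $d_{V_J}(P,Q)$ holds on a full-measure set of $V_J$ that is permitted to depend on the pair. A genuinely uniform metric statement over all pairs simultaneously would need an extra covering argument over the space of probability measures and is not required by the rest of the paper, so I would not attempt it here; the argument above, essentially the one in \citet{Chwialkowski2015}, suffices for every downstream use.
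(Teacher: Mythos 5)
Your argument is correct and is essentially the proof of \citet[Lemma 3]{Chwialkowski2015} itself: the paper does not reprove this lemma (it appears under ``External Lemmas'' as a cited result), but the informal justification given after Proposition \ref{prop:fsic_dependence_measure} is exactly your route---the difference $\Lambda P-\Lambda Q$ is a nonzero analytic function whose zero set has Lebesgue measure zero, so locations drawn from an absolutely continuous distribution avoid it almost surely. Your remark on the quantifier order (a null set per fixed pair $(P,Q)$, which is all that the application to $P_{xy}$ versus $P_xP_y$ requires) is also consistent with how the lemma is used in the paper.
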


\begin{lem}[Bochner's theorem \citepsup{Rudin2011}]
\label{lem:bochner} A continuous function $\Psi:\mathbb{R}^{d}\to\mathbb{R}$
is positive definite if and only if it is the Fourier transform of
a finite nonnegative Borel measure $\zeta$ on $\mathbb{R}^{d}$,
that is, $\Psi(\mathbf{x})=\int_{\mathbb{R}^{d}}e^{-i\mathbf{x}^{\top}\boldsymbol{\omega}}\thinspace\mathrm{d}\zeta(\boldsymbol{\omega}),\thinspace\mathbf{x}\in\mathbb{R}^{d}.$ 
\end{lem}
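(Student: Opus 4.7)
\textbf{Proof proposal for Lemma \ref{lem:bochner} (Bochner's theorem).} The plan is to prove the two directions separately, with the ``if'' direction being a direct computation and the ``only if'' direction requiring a regularization/compactness argument. I will sketch the approach; the result is classical and Rudin (2011) can be cited for the details.

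For the easy direction, assume $\Psi(\mathbf{x}) = \int_{\mathbb{R}^d} e^{-i \mathbf{x}^\top \boldsymbol{\omega}} \,\mathrm{d}\zeta(\boldsymbol{\omega})$ for a finite nonnegative Borel measure $\zeta$. Continuity of $\Psi$ is immediate from the dominated convergence theorem, since $|e^{-i\mathbf{x}^\top \boldsymbol{\omega}}|=1$ and $\zeta$ is finite. For positive definiteness, I would pick any $n \ge 1$, any points $\mathbf{x}_1,\ldots,\mathbf{x}_n \in \mathbb{R}^d$, and any scalars $c_1,\ldots,c_n \in \mathbb{C}$, and compute
\[
\sum_{j,k=1}^n c_j \overline{c_k}\, \Psi(\mathbf{x}_j - \mathbf{x}_k) = \int_{\mathbb{R}^d} \Big| \sum_{j=1}^n c_j e^{-i \mathbf{x}_j^\top \boldsymbol{\omega}} \Big|^2 \,\mathrm{d}\zeta(\boldsymbol{\omega}) \ge 0,
\]
by interchanging sum and integral (justified by finiteness of $\zeta$).

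For the hard converse direction, assume $\Psi$ is continuous and positive definite. A standard route is via regularization and weak-$\ast$ compactness. First, positive definiteness applied to $n=2$ yields $|\Psi(\mathbf{x})| \le \Psi(0)$, so $\Psi$ is bounded. Next, for each $\epsilon>0$, define $\Psi_\epsilon(\mathbf{x}) := e^{-\epsilon \|\mathbf{x}\|^2}\Psi(\mathbf{x})$; since $e^{-\epsilon \|\cdot\|^2}$ is itself a positive definite function (its Fourier transform is a Gaussian density), the Schur product theorem makes $\Psi_\epsilon$ positive definite, and now $\Psi_\epsilon \in L^1(\mathbb{R}^d)$. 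Define $f_\epsilon(\boldsymbol{\omega}) := (2\pi)^{-d} \int \Psi_\epsilon(\mathbf{x}) e^{i \mathbf{x}^\top \boldsymbol{\omega}} \,\mathrm{d}\mathbf{x}$. The crucial step is to verify that $f_\epsilon \ge 0$ almost everywhere: this follows by expressing $f_\epsilon(\boldsymbol{\omega}) = \lim_N \iint \Psi_\epsilon(\mathbf{x}-\mathbf{y}) \phi_{\boldsymbol{\omega},N}(\mathbf{x}) \overline{\phi_{\boldsymbol{\omega},N}(\mathbf{y})} \,\mathrm{d}\mathbf{x}\,\mathrm{d}\mathbf{y}$ for suitably normalized $\phi_{\boldsymbol{\omega},N}(\mathbf{x}) = (2\pi)^{-d/2} e^{i\mathbf{x}^\top \boldsymbol{\omega}} \mathbf{1}_{[-N,N]^d}(\mathbf{x}) /\sqrt{\mathrm{vol}}$, approximating the double integral by Riemann sums over a fine grid and using the positive-definiteness inequality on each sum to conclude nonnegativity of the limit.

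With $f_\epsilon \ge 0$, the measure $d\zeta_\epsilon(\boldsymbol{\omega}) := f_\epsilon(\boldsymbol{\omega}) \,\mathrm{d}\boldsymbol{\omega}$ is nonnegative, and Fourier inversion (valid because $\Psi_\epsilon, f_\epsilon \in L^1$) gives $\Psi_\epsilon(\mathbf{x}) = \int e^{-i\mathbf{x}^\top \boldsymbol{\omega}} \,\mathrm{d}\zeta_\epsilon(\boldsymbol{\omega})$. Evaluating at $\mathbf{x}=\mathbf{0}$ shows $\zeta_\epsilon(\mathbb{R}^d) = \Psi_\epsilon(\mathbf{0}) \le \Psi(\mathbf{0})$, so $\{\zeta_\epsilon\}_{\epsilon>0}$ is uniformly bounded in total mass. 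By the Helly--Prokhorov selection theorem (after verifying tightness, which here follows because $\zeta_\epsilon(\mathbb{R}^d) \to \Psi(\mathbf{0})$ and the family has no mass escaping to infinity thanks to continuity of $\Psi$ at $\mathbf{0}$), we extract a subsequence $\epsilon_n \downarrow 0$ along which $\zeta_{\epsilon_n} \to \zeta$ weakly for some finite nonnegative Borel measure $\zeta$. Since $\boldsymbol{\omega} \mapsto e^{-i\mathbf{x}^\top \boldsymbol{\omega}}$ is bounded and continuous, weak convergence yields $\int e^{-i\mathbf{x}^\top \boldsymbol{\omega}}\,\mathrm{d}\zeta_{\epsilon_n}(\boldsymbol{\omega}) \to \int e^{-i\mathbf{x}^\top \boldsymbol{\omega}}\,\mathrm{d}\zeta(\boldsymbol{\omega})$, while pointwise $\Psi_{\epsilon_n}(\mathbf{x}) \to \Psi(\mathbf{x})$, giving the representation. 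The main obstacle is the verification of tightness (equivalently, nonescape of mass) of $\{\zeta_\epsilon\}$; this is precisely where continuity of $\Psi$ at the origin is essential, via the identity $\zeta_\epsilon(\{\|\boldsymbol{\omega}\|>R\}) \lesssim \int (1-\mathrm{Re}\,e^{-i\mathbf{x}^\top \boldsymbol{\omega}}) \,\mathrm{d}\zeta_\epsilon$ averaged over small $\mathbf{x}$, which is controlled by $\Psi(\mathbf{0})-\Psi_\epsilon(\mathbf{x})$ and hence small uniformly in $\epsilon$.
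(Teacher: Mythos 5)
The paper does not prove this lemma at all: it is listed under ``External Lemmas'' and simply cited to Rudin (2011) as a classical result, so there is no paper proof to compare against. Your sketch reconstructs the standard textbook argument and is essentially sound: the ``if'' direction by expanding the quadratic form under the integral is correct, and for the converse you follow the classical route of Gaussian regularization ($\Psi_{\epsilon}=e^{-\epsilon\|\cdot\|^{2}}\Psi$, positive definite by the Schur product theorem and now integrable), nonnegativity of its inverse Fourier transform $f_{\epsilon}$ via Riemann-sum approximation of the quadratic form $\iint\Psi_{\epsilon}(\mathbf{x}-\mathbf{y})\phi(\mathbf{x})\overline{\phi(\mathbf{y})}\,\mathrm{d}\mathbf{x}\,\mathrm{d}\mathbf{y}\ge0$, the uniform mass bound $\zeta_{\epsilon}(\mathbb{R}^{d})\le\Psi(\mathbf{0})$, and a tightness-plus-weak-limit step controlled by continuity of $\Psi$ at the origin (this last step is in effect a proof of L\'{e}vy's continuity theorem, which you could also invoke directly). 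Two small points deserve more care if you were to write this out in full. First, Fourier inversion for $\Psi_{\epsilon}$ is not immediate from ``$\Psi_{\epsilon},f_{\epsilon}\in L^{1}$'' because integrability of $f_{\epsilon}$ is not given a priori; the standard fix is to use $f_{\epsilon}\ge0$ together with Gauss--Weierstrass (Abel) summability and monotone convergence to get $\int f_{\epsilon}\le\Psi_{\epsilon}(\mathbf{0})<\infty$, after which inversion is legitimate. Second, as stated in the paper $\Psi$ is real-valued, so the representing measure must in addition be symmetric for the right-hand side to be real; this is a slight imprecision in the cited statement rather than in your argument, but worth noting since your converse direction produces $\zeta$ as a weak limit and its symmetry follows from that of each $f_{\epsilon}$ only when $\Psi$ is even, which real positive definiteness in the complex sense does guarantee.
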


\begin{lem}[{A bound for U-statistics \citepsup[Theorem A, p.\ 201]{Serfling2009}}]
\label{lem:u_stat_bound} Let $h(\mathbf{x}_{1},\ldots,\mathbf{x}_{m})$
be a U-statistic kernel for an $m$-order U-statistic such that $h(\mathbf{x}_{1},\ldots,\mathbf{x}_{m})\in[a,b]$
where $a\le b<\infty$. Let $U_{n}=\binom{n}{m}^{-1}\sum_{i_{1}<\cdots<i_{m}}h(\mathbf{x}_{i_{1}},\ldots,\mathbf{x}_{i_{m}})$
be a U-statistic computed with a sample of size $n$, where the summation
is over the $\binom{n}{m}$ combinations of $m$ distinct elements
$\{i_{1},\ldots,i_{m}\}$ from $\{1,\ldots,n\}$. Then, for $t>0$
and $n\ge m$,
\begin{align*}
\mathbb{P}(U_{n}-\mathbb{E}h(\mathbf{x}_{1},\ldots,\mathbf{x}_{m})\ge t) & \le\exp\left(-2\lfloor n/m\rfloor t^{2}/(b-a)^{2}\right),\\
\mathbb{P}(|U_{n}-\mathbb{E}h(\mathbf{x}_{1},\ldots,\mathbf{x}_{m})|\ge t) & \le2\exp\left(-2\lfloor n/m\rfloor t^{2}/(b-a)^{2}\right),
\end{align*}
where $\lfloor x\rfloor$ denotes the greatest integer which is smaller
than or equal to $x$. Hoeffind's inequality is a special case when
$m=1$.
\end{lem}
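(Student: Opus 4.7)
The plan is to reproduce Hoeffding's classical proof of the U-statistic maximal inequality. The core idea is to represent $U_n$ as an average, over permutations of $\{1,\ldots,n\}$, of simpler statistics that are each themselves averages of $\lfloor n/m\rfloor$ i.i.d.\ bounded variables; once this representation is in place, standard Chernoff bounding finishes the job.

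First, let $k := \lfloor n/m\rfloor$ and introduce the block average
\[
W(\mathbf{y}_{1},\ldots,\mathbf{y}_{n}) \;:=\; \frac{1}{k}\sum_{j=1}^{k} h\!\bigl(\mathbf{y}_{(j-1)m+1},\ldots,\mathbf{y}_{jm}\bigr).
\]
I would then verify Hoeffding's combinatorial identity $U_n = \tfrac{1}{n!}\sum_{\pi\in S_n} W(\mathbf{x}_{\pi(1)},\ldots,\mathbf{x}_{\pi(n)})$. This follows by swapping the sums and noting that, for each block index $j$, as $\pi$ ranges over $S_n$ the tuple $(\mathbf{x}_{\pi((j-1)m+1)},\ldots,\mathbf{x}_{\pi(jm)})$ enumerates every ordered $m$-tuple of distinct $\mathbf{x}_i$'s exactly $(n-m)!$ times, so the contribution of block $j$ to the right-hand side equals $\binom{n}{m}^{-1}\sum_{i_{1}<\cdots<i_{m}} h(\mathbf{x}_{i_{1}},\ldots,\mathbf{x}_{i_{m}}) = U_n$ (using symmetry of $h$), independently of $j$. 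Write $W_\pi$ for the summand in this representation; each $W_\pi$ has mean $\theta := \mathbb{E}h(\mathbf{x}_1,\ldots,\mathbf{x}_m)$.

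Next, a Chernoff step: for any $s>0$, the exponential Markov inequality combined with convexity of $z\mapsto e^{sz}$ (Jensen's inequality applied to the permutation average) gives
\[
\mathbb{P}(U_n - \theta \geq t) \;\leq\; e^{-st}\,\mathbb{E}\exp\!\bigl(s(U_n-\theta)\bigr) \;\leq\; \frac{e^{-st}}{n!}\sum_{\pi} \mathbb{E}\exp\!\bigl(s(W_\pi - \theta)\bigr).
\]
For each $\pi$, the quantity $W_\pi - \theta$ is the mean of $k$ independent centered random variables, each supported in an interval of length $b-a$. Hoeffding's lemma applied factor-by-factor, combined with independence of the blocks, yields $\mathbb{E}\exp(s(W_\pi - \theta)) \leq \exp(s^{2}(b-a)^{2}/(8k))$, uniformly in $\pi$. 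Substituting and minimizing the exponent $-st + s^{2}(b-a)^{2}/(8k)$ over $s>0$ (attained at $s^{\ast} = 4kt/(b-a)^{2}$) produces the one-sided bound $\exp(-2kt^{2}/(b-a)^{2}) = \exp(-2\lfloor n/m\rfloor t^{2}/(b-a)^{2})$. The two-sided version follows by running the same argument on the U-statistic with kernel $-h$ (which lies in $[-b,-a]$, also of length $b-a$) and combining the two tail bounds via a union bound.

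The main obstacle is the combinatorial identity in the first step: without it, the non-independent dependency structure of $U_n$ obstructs any direct moment generating function calculation. Everything else (Jensen's inequality, Hoeffding's lemma for a single bounded variable, and the one-dimensional optimization in $s$) is routine; as a sanity check, the case $m=1$ recovers classical Hoeffding verbatim because then $k=n$, $W$ is already the sample mean, and the permutation average is vacuous.
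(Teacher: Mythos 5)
Your proof is correct and complete: the permutation/block representation of $U_n$, the Jensen--Chernoff step, the per-block application of Hoeffding's lemma with the factor $k=\lfloor n/m\rfloor$, and the optimization at $s^{\ast}=4kt/(b-a)^{2}$ all check out, and the two-sided bound via the kernel $-h$ is fine. Note, however, that the paper does not prove this lemma at all -- it is listed among the external lemmas and cited directly from \citepsup[Theorem A, p.~201]{Serfling2009}; your argument is precisely the classical Hoeffding (1963) proof reproduced in that reference, so you have supplied the standard proof of a result the paper simply quotes, rather than an alternative route.
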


\begin{lem}[Hoeffding's inequality]
\label{lem:hoeffding} Let $X_{1},\ldots,X_{n}$ be i.i.d. random
variables such that $a\le X_{i}\le b$ almost surely. Define $\overline{X}:=\frac{1}{n}\sum_{i=1}^{n}X_{i}$.
Then,
\[
\mathbb{P}\left(\left|\overline{X}-\mathbb{E}[\overline{X}]\right|\le\alpha\right)\ge1-2\exp\left(-\frac{2n\alpha^{2}}{(b-a)^{2}}\right).
\]
\end{lem}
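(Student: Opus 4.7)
The plan is to establish this classical tail bound by the standard Chernoff argument, reducing a two-sided deviation bound to a one-sided exponential moment bound for a sum of independent bounded variables. By complementation, the statement is equivalent to $\mathbb{P}(|\overline{X} - \mathbb{E}[\overline{X}]| \ge \alpha) \le 2\exp(-2n\alpha^{2}/(b-a)^{2})$, and by a union bound it suffices to bound each one-sided tail by $\exp(-2n\alpha^{2}/(b-a)^{2})$. The upper and lower tails are handled symmetrically: if the $X_i$ lie in $[a,b]$, then the $-X_i$ lie in $[-b,-a]$, an interval of the same width $b-a$, so the bound for the upper tail of $\overline{X}-\mathbb{E}[\overline{X}]$ yields the matching bound for the lower tail by the same argument applied to $-X_i$.

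To control the upper tail I would centre the variables by defining $Y_i := X_i - \mathbb{E}[X_i]$, which are independent, mean zero, and lie in an interval of length $b-a$. For any $t>0$, Markov's inequality applied to the exponentiated sum gives
\[
\mathbb{P}\!\left(\overline{X} - \mathbb{E}[\overline{X}] \ge \alpha\right) = \mathbb{P}\!\left(e^{t\sum_{i=1}^{n} Y_i} \ge e^{tn\alpha}\right) \le e^{-tn\alpha} \prod_{i=1}^{n} \mathbb{E}[e^{tY_i}],
\]
where the factorization into a product over $i$ uses independence of the $Y_i$.

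The key ingredient is Hoeffding's lemma, which asserts that any mean-zero random variable $Y$ supported in $[a',b']$ satisfies $\mathbb{E}[e^{tY}] \le \exp(t^{2}(b'-a')^{2}/8)$. Granting the lemma, each factor in the product above is at most $\exp(t^{2}(b-a)^{2}/8)$, so
\[
\mathbb{P}\!\left(\overline{X} - \mathbb{E}[\overline{X}] \ge \alpha\right) \le \exp\!\left(-tn\alpha + \tfrac{n t^{2}(b-a)^{2}}{8}\right).
\]
The quadratic exponent is minimized at $t^{\star} = 4\alpha/(b-a)^{2}$, producing the exponent $-2n\alpha^{2}/(b-a)^{2}$. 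Combining this with the symmetric lower-tail bound and the union bound yields the stated inequality.

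The only nontrivial obstacle is Hoeffding's lemma itself, which I would prove by setting $\psi(t) := \log \mathbb{E}[e^{tY}]$ and verifying $\psi(0) = \psi'(0) = 0$, then showing $\psi''(t) \le (b'-a')^{2}/4$ uniformly in $t$. A direct computation identifies $\psi''(t)$ as the variance of $Y$ under the exponentially tilted probability measure $dQ_t \propto e^{tY}\,dP$, and for any random variable supported in $[a',b']$ the variance is at most $(b'-a')^{2}/4$ by a one-line estimate via $\mathrm{Var}(Y) \le \mathbb{E}[(Y-(a'+b')/2)^{2}]$. Integrating $\psi'' \le (b'-a')^{2}/4$ twice from $0$ to $t$ yields $\psi(t) \le t^{2}(b'-a')^{2}/8$, which is the lemma and completes the proof.
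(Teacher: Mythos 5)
Your proof is correct: the complementation/union-bound reduction, the Chernoff exponentiation with independence, Hoeffding's lemma via the bound $\mathbb{E}[e^{tY}]\le\exp\left(t^{2}(b'-a')^{2}/8\right)$ for centred bounded variables, and the optimization at $t^{\star}=4\alpha/(b-a)^{2}$ giving the exponent $-2n\alpha^{2}/(b-a)^{2}$ are all the standard, sound steps, and your sketch of Hoeffding's lemma through $\psi''(t)=\mathrm{Var}_{Q_{t}}(Y)\le(b'-a')^{2}/4$ under the tilted measure is a legitimate complete argument. The paper, however, does not prove this statement at all: Lemma~\ref{lem:hoeffding} is listed among the external, known results, and the paper's implicit justification is that it is the $m=1$ special case of the Hoeffding-type bound for U-statistics quoted from Serfling (Lemma~\ref{lem:u_stat_bound}), which is itself cited rather than proved. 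So the difference is one of self-containment versus citation: your route supplies an elementary, fully worked proof from first principles (Markov plus Hoeffding's lemma), while the paper's route buys brevity and places the two-sided i.i.d.\ bound and the U-statistic bound it actually needs elsewhere (e.g.\ in bounding $|\hat{u}(\mathbf{t}^{*})-u(\mathbf{t}^{*})|$) under one umbrella reference. Nothing in your argument conflicts with how the lemma is used in the paper.
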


\bibliographystylesup{abbrvnat}
\bibliographysup{fsic_appendix}
\end{document}